\theoremstyle{plain}
\newtheorem{theorem}{Theorem}[section]
\newtheorem{lemma}[theorem]{Lemma}
\theoremstyle{definition}
\newtheorem{definition}[theorem]{Definition}
\newtheorem{assumption}[theorem]{Assumption}
\theoremstyle{remark}
\newtheorem{remark}[theorem]{Remark}
\DeclareMathOperator*{\argmin}{argmin}
\DeclareMathOperator{\clip}{clip}
\DeclareMathOperator{\Exp}{\mathbb E}
\newcommand{\Rbb}{\mathbb{R}}
\newcommand{\Acal}{\mathcal{A}}
\newcommand{\Ccal}{\mathcal{C}}
\newcommand{\Dcal}{\mathcal{D}}
\newcommand{\Scal}{\mathcal{S}}
\newcommand{\Ncal}{\mathcal{N}}
\newcommand{\Ocal}{\mathcal{O}}
\newcommand{\innerp}[2]{\left\langle #1, #2 \right\rangle}
\newcommand{\mean}{\mathrm{E}}
\newcommand{\var}{\mathrm{Var}}
\newcommand\Oi{\Omega_i}
\newcommand\Oj{\Omega^j}
\newcommand\bigO[1]{\Ocal\left(#1\right)}
\newcommand\bigOtilde[1]{\tilde\Ocal\left(#1\right)}
\long\def\commented#1{{}}
\definecolor{mydarkblue}{rgb}{0,0.08,0.45}
\begin{document}

\title{Multi-Task Differential Privacy Under Distribution Skew}
\date{}
\newcommand\auth[1]{#1\footnotemark[1]}
\author{
Walid Krichene\footnote{Google Research}\and
\auth{Prateek Jain} \quad\and
\auth{Shuang Song} \and
Mukund Sundararajan\thanks{Google} \and
\auth{Abhradeep Thakurta} \and
\auth{Li Zhang}
}

\maketitle

\begin{abstract}
We study the problem of multi-task learning under user-level differential privacy, in which $n$ users contribute data to $m$ tasks, each involving a subset of users. One important aspect of the problem, that can significantly impact quality, is the distribution skew among tasks. Certain tasks may have much fewer data samples than others, making them more susceptible to the noise added for privacy. It is natural to ask whether algorithms can adapt to this skew to improve the overall utility.

We give a systematic analysis of the problem, by studying how to optimally allocate a user's privacy budget among tasks. We propose a generic algorithm, based on an adaptive reweighting of the empirical loss, and show that when there is task distribution skew, this gives a quantifiable improvement of excess empirical risk.

Experimental studies on recommendation problems that exhibit a long tail of small tasks, demonstrate that our methods significantly improve utility, achieving the state of the art on two standard benchmarks.
\end{abstract}

\section{Introduction}

Machine learning models trained on sensitive user data present the risk of leaking private user information~\citep{dwork2007price,korolova2010privacy,calandrino2011you,shokri2017membership}. Differential Privacy (DP)~\citep{dwork2006calibrating} mitigates this risk, and has become a gold standard widely adopted in industry and government~\citep{abowd2018census,DPSQL,rogers2021linkedin,plume}.

In this work, we adopt the notion of user-level DP~\citep{dwork2014algorithmic,kearns2014mechanism}, which seeks to protect \emph{all data samples} of a given user, a harder goal than protecting a single sample. User-level DP has been studied under different settings, including empirical risk minimization~\citep{amin2019bounding,levy2021learning}, mean estimation~\citep{cummings2022mean}, and matrix completion~\citep{jain2018differentially,DPALS}. The setting we study is that of multi-task learning, in which users contribute data to multiple tasks. This is for example the case in recommendation systems: given a set of $n$ users and $m$ items (pieces of content, such as movies or songs), the goal is to learn a representation for each item based on the users' preferences. Another example is multi-class classification, where the goal is to learn $m$ classifiers (one per class). Notice that the training data of a given user may be relevant to only a small subset of tasks. This is certainly the case in recommendation, where users typically interact with a small subset of the catalog of available content. This can also be the case in classification if the number of classes is very large. The interaction pattern between users and tasks can be described using a bipartite graph $\Omega \subseteq [m] \times [n]$, such that user $j$ contributes data to task $i$ if and only if $(i, j) \in \Omega$. We will also denote by $\Oi = \{j: (i,j) \in \Omega\}$ the set of users contributing to task~$i$. The goal is then to minimize the empirical loss summed over tasks,
\begin{equation}
\label{eq:obj}
L(\theta) = \sum_{i = 1}^m \sum_{j \in \Oi} \ell(\theta_i; x_{ij}, y_{ij}),
\end{equation}
where $\theta_i$ represents the model parameters for task $i$, and $x_{ij}, y_{ij}$ are respectively the features and labels contributed by user $j$ to task $i$. In general, we will assume that $\Omega$, $x$, and $y$ are all private.

In practice, the tasks can be imbalanced, i.e. the distribution of task sizes $|\Oi|$ can be heavily skewed. This is the case in recommendation systems, in which there is a long tail of items with orders of magnitude fewer training examples than average~\citep{yin2012longtail,liu2020longtail}. Classification tasks can also exhibit a similar skew among classes~\citep{kubat1997addressing}. Under differential privacy constraints, this skew makes tail tasks (those with less data) more susceptible to quality losses. This disparate impact of differential privacy is well documented, for example in classification with class imbalance~\citep{bagdasaryan2019} or in DP language models~\citep{mcmahan2018}.

A natural question is whether DP algorithms can be made adaptive to such distribution skew. Some heuristics were found to be useful in practice. For example,~\cite{DPALS} propose a sampling heuristic, which limits the number of samples per user, while biasing the sampling towards tail tasks. This was found to improve performance in practice (compared to uniform sampling), though no analysis was provided on the effect of this biased sampling. Another heuristic was proposed by~\cite{xu2021removing}, which applies to DPSGD, and operates by increasing the clipping norm of gradients on tail tasks. Similarly, this was observed to improve performance (compared to uniform clipping).

In both cases, the intuition is that by allocating a larger proportion of a user's privacy budget to the tail (either via sampling or clipping), one can improve utility on the tail, and potentially on the overall objective~\eqref{eq:obj}. However, a formal analysis of biased sampling and clipping remains elusive, as it introduces a bias that is hard to bound. We take a different approach, by introducing task-user weights $w_{ij}$ to have a finer control over the budget allocation. The weights play a similar role to sampling or clipping: by assigning larger weights to a task, we improve utility of that task. The question is then how to optimally choose weights $w$ to minimize excess risk on the overall objective~\eqref{eq:obj}, under privacy constraints (which translate into constraints involving $w$ and $\Omega$). Fortunately, this problem is amenable to formal analysis, and we derive optimal choices of weights, and corresponding error bounds, under different assumptions on the loss and the task-user graph~$\Omega$.



\subsection{Contributions}
\begin{enumerate}[leftmargin=13pt,topsep=0pt,itemsep=0ex,partopsep=0ex,parsep=1pt]
\item We give a formal analysis of the private multi-task learning problem under distribution skew. We propose a generic method based on computing task-user weights, and applying those weights to the loss function~\eqref{eq:obj}, as a mechanism to control each user's privacy budget allocation among tasks. The method is generic in that it applies to several algorithms such as DPSGD~\citep{bassily2014erm,abadi2016dpsgd} or Sufficient Statistics Perturbation~\citep{foulds2016ssp}.
\item We derive utility bounds that explicitly show how the budget allocation  trades-off privacy and utility (Theorems~\ref{thm:tradeoff-ssp} and~\ref{thm:tradeoff-dpsgd}). By adapting the allocation to the task sizes, we obtain utility bounds (Theorem~\ref{thm:exact}) with a guaranteed improvement compared to uniform allocation. The improvement increases with the degree of skew in the task distribution.
\item We conduct experiments on synthetic data and two standard recommendation benchmarks which exhibit a heavy distribution skew. Our methods significantly improve accuracy, even compared to strong baselines such as the tail sampling heuristic of~\cite{DPALS}. We also provide a detailed analysis of quality impact across tasks. We find that adaptive budget allocation visibly improves quality on tail tasks, with a limited impact on head tasks.
\end{enumerate}

\subsection{Related Work}
A common technique in user-level differential privacy is to adaptively control the sensitivity with respect to the data of a user. This is typically done via clipping~\citep{abadi2016dpsgd}, sampling~\citep{kasiviswanathan2013,amin2019bounding}, using privacy filters~\citep{feldman2021individual}, or using weights in the objective function~\citep{proserpio2014wPinq,epasto2020smoothly} to scale the contribution of high sensitivity users. Our approach is related, in that we adaptively control sensitivity. But while these methods focus on adapting to \emph{users} (users with high sensitivity are assigned lower weights), we focus on adapting to \emph{tasks} (within each user, easier tasks are assigned lower weights).

We mention in particular~\citep{epasto2020smoothly}, who use weights to bound user sensitivity, and analyze the optimal choice of weights under certain assumptions. Their approach is perhaps the most similar to ours, in that the choice of weights is phrased as an optimization problem. But our setting is inherently different: they consider the traditional ERM setting with a single task, while we consider the multi-task setting (with heterogeneous tasks). As a result, their solution assigns identical weights to all examples of a given user, while our solution is designed to allocate a user's budget among tasks, and generally assigns different weights to different examples of a user.

Turning to the multi-task setting, prior work on DP multi-task learning~\citep{li2020metalearning,hu2022multitask} has focused on \emph{task-level} privacy, where every user is identified with one task. We consider a more general setting, in which tasks do not necessarily coincide with users. In particular, under our model, it's important that a user can contribute to more than one task.

Another related problem is that of answering multiple linear or convex queries under DP, e.g.~\citep{dwork2010boosting,hardt2010mw,ullman2015private}. Though related, the two problems are different because the utility measure is inherently different: in the multi-query setting, the goal is to answer \emph{all} queries accurately (i.e. a minimax objective), while in the multi-task setting, the utility is a sum across tasks, as in~\eqref{eq:obj}. Besides, algorithms in the multi-query convex setting (closest to ours) often have to maintain a distribution over a discretization of the data space, making them prohibitively expensive in continuous feature spaces, due to an exponential dependence on the dimension.

In practice, budget allocation among tasks is often done through uniform sampling, for example in DP SQL~\citep{DPSQL}, LinkedIn's Audience Engagement API~\citep{rogers2021linkedin}, and Plume \citep{plume}. The techniques that do adapt to the task distribution (via sampling as in~\cite{DPALS} or clipping as in~\cite{xu2021removing}) tend to be heuristic in nature, and lack a rigorous utility analysis.

To the best of our knowledge, we present the first systematic analysis of multi-task, user-level DP under distribution skew. Our method operates directly on the loss function, which makes it applicable to standard algorithms such as DPSGD, and has a guaranteed improvement under distribution skew.


\section{Preliminaries}
\subsection{Notation}
Throughout the paper, we will denote by $\Omega \subseteq [m] \times [n]$ the task-user membership graph, where $m,n$ are the number of tasks and users, respectively. We assume that $m \leq n$. For all $i$, let $\Oi := \{j : (i, j) \in \Omega\}$ be the set of users who contribute to task $i$, and for all $j$, let $\Oj := \{i: (i, j) \in \Omega\}$ be the set of tasks to which $j$ contributes data. We will denote by $n_i = |\Oi|$, and by $n^j = |\Oj|$. Let $\|x\|$ be the $L_2$ norm of a vector $x$, and $\|X\|_F$ be the Frobenius norm of a matrix $X$. For a symmetric matrix $A$, let $A^\dagger$ denote the pseudo-inverse of $A$'s projection on the PSD cone. For a closed convex set $\Ccal$, let $\|\Ccal\|$ denote its Euclidean diameter, and $\Pi_\Ccal(\cdot)$ denote the Euclidean projection on $\Ccal$. Let $\Ncal^d$ denote the multivariate normal distribution (of zero mean and unit variance), and $\Ncal^{d \times d}$ denote the distribution of symmetric $d \times d$ matrices whose upper triangle entries are i.i.d. normal. Finally, whenever we state that a bound holds with high probability (abbreviated as w.h.p.), we mean with probability $1-1/n^c$, where $c$ is any positive constant, say~$2$.

\subsection{User-Level Differential Privacy}
Let $\Dcal$ be a domain of data sets. Two data sets $D, D' \in \Dcal$ are called neighboring data sets if one is obtained from the other by removing all of the data samples from a single user. Let $\Acal: \Dcal \to \Scal$ be a randomized algorithm with output space $\Scal$.
\begin{definition}[User-Level Differential Privacy~\citep{dwork2014algorithmic}]
Algorithm $\Acal$ is $(\epsilon,\delta)$-differentially private (DP) if for all neighboring data sets $D, D' \in \Dcal$, and all measurable $S \in \Scal$,
\[
\Pr\left(\Acal(D)\in S\right)\leq e^\epsilon\Pr\left(\Acal(D')\in S\right) + \delta.
\]
\end{definition}
For simplicity of presentation, our results will be stated in terms of R\'enyi differential privacy (RDP). Translation from RDP to DP is standard, and can be done for example using \citep[Proposition~3]{mironov2017renyi}.
\begin{definition}[User-Level R\'enyi Differential Privacy~\citep{mironov2017renyi}]
An algorithm $\Acal$ is $(\alpha,\rho)$-R\'enyi differentially private (RDP) if for all neighboring data sets $D, D' \in \Dcal$,
\[
D_{\alpha}\left(\Acal(D))||\Acal(D')\right)\leq \rho,
\]
where $D_{\alpha}$ is the R\'enyi divergence of order $\alpha$.
\label{def:diffPriv}
\end{definition}

\section{Budget allocation via weights}
\label{sec:budget}
First, observe that the objective function~\eqref{eq:obj} can be decomposed as
\begin{equation}
L(\theta) = \sum_{i = 1}^m L_i(\theta_i); \quad L_i(\theta_i) = \sum_{j \in \Oi} \ell(\theta_i; x_{ij}, y_{ij}),
\label{eq:loss}
\end{equation}
where $\theta_i \in  \Rbb^{d}$ are the model parameters for task $i$, and $\theta$ is a shorthand for $(\theta_1, \dots, \theta_m)$. For all $i$, we will denote the non-private solution of task $i$ by
\[
\theta_i^* = \argmin_{\theta_i} L_i(\theta_i).
\]
The goal is to learn a private solution $\hat \theta$ under user-level DP. The quality of $\hat \theta$ will be measured in terms of the excess empirical risk,
\[
\sum_{i = 1}^m \Exp[L_i(\hat \theta_i)] - L_i(\theta_i^*),
\]
where the expectation is over randomness in the algorithm.

\begin{remark} Although the objective is decomposable as a sum of $L_i(\theta_i)$, the problems are coupled through the privacy constraint: since user $j$ contributes data to all tasks in $\Oj$, the \emph{combined sensitivity} of these tasks with respect to user $j$'s data should be bounded. One is then faced with the question of how to allocate the sensitivity among tasks. This will be achieved by using weights in the objective function.
\end{remark}

Our general approach will be to apply a DP algorithm to a weighted version of the objective $L(\theta)$. Let $w := (w_{ij})_{(i,j) \in \Omega}$ be a collection of arbitrary positive weights, and define the \emph{weighted multi-task objective}
\begin{equation}
\label{eq:obj_weighted}
L_w(\theta) = \sum_{i = 1}^m \sum_{j \in \Oi} w_{ij} \ell(\theta_i; x_{ij}, y_{ij}).
\end{equation}
We first give some intuition on the effect of the weights (a formal analysis will be given in the next section). Suppose $\nabla \ell$ is bounded, and that we compute $\hat \theta$ by applying noisy gradient descent (with fixed noise standard deviation) to $L_w(\theta)$. Then $w$ will have the following effect on privacy and utility: increasing the weights of task $i$ will generally improve its utility (since it increases the norm of task $i$'s gradients, hence increases its signal-to-noise ratio). Increasing the weights of user $j$ will decrease her privacy (since it increases the norm of user $j$'s gradients, hence increases the $L_2$ sensitivity w.r.t. her data), so guaranteeing a target privacy level will directly translate into constraints on user weights $(w_{ij})_{i \in \Oj}$. Utility depends on task weights, while sensitivity/privacy depend on user weights. By characterizing how sensitivity and utility precisely scale in terms of the weights $w$, we will be able to choose $w$ that achieves the optimal trade-off.


\begin{remark}
Although we will apply a DP algorithm to the weighted objective $L_w$, utility will always be measured in terms of the original, unweighted objective $L$. The weights should be thought of as a tool to allocate users' privacy budget, rather than a change in the objective function.
\end{remark}


\subsection{Weighted Multi-Task Ridge Regression}
\label{sec:linear_regression}

\begin{algorithm}[t!]
\caption{Task-weighted SSP (Sufficient Statistic Perturbation)\label{alg:ssp}}
\begin{algorithmic}[1]
\STATE{\bf Inputs}: Task-user graph $\Omega$, user features and labels $(x_{ij}, y_{ij})_{(i,j) \in \Omega}$, clipping parameters $\Gamma_x, \Gamma_*$, weights~$w$, domain~$\Ccal$.\\
\STATE For all $i,j$, $x_{ij} \leftarrow \clip(x_{ij}, \Gamma_x)$, $y_{ij} \leftarrow \clip(y_{ij}, \Gamma_x\Gamma_*)$, where $\clip(x, \Gamma) = x \min(1, \Gamma/\|x\|)$
\FOR{$1 \leq i \leq m$}
\STATE Sample $\xi_i \sim \Ncal^{d}, \quad \Xi_i \sim \Ncal^{d \times d}$\\
\STATE $\hat A_i \leftarrow \sum_{j \in \Oi} w_{ij} (x_{ij} x_{ij}^\top + \lambda I) + \Gamma_x^2 \Xi_i$\label{line:Ai}\\
\STATE $\hat b_i \leftarrow \sum_{j \in \Oi} w_{ij} y_{ij}x_{ij} + \Gamma_x^2\Gamma_* \xi_i$\label{line:bi}\\
\STATE $\hat \theta_i \leftarrow \hat A_i^\dagger\hat b_i$
\ENDFOR
\STATE {\bf Return} $\hat \theta$
\end{algorithmic}
\end{algorithm}

We start by analyzing the case of multi-task ridge regression, optimized using the Sufficient Statistics Perturbation (SSP) method~\citep{foulds2016ssp,wang2018revisiting}. This will pave the way to the more general convex setting in the next section.

Suppose that each task is a ridge regression problem. The multi-task objective~\eqref{eq:loss} becomes
\begin{equation}
\label{eq:quad_loss}
L(\theta) = \sum_{i = 1}^m \sum_{j \in \Oi} \left[(\theta_i \cdot x_{ij} - y_{ij})^2 + \lambda \|\theta_i\|^2\right],
\end{equation}
where $\lambda$ is a regularization constant. The exact solution of~\eqref{eq:quad_loss} is given by $\theta_i^* = A_i^{-1} b_i$ where
\begin{equation}
A_i = \sum_{j \in \Oi} (x_{ij}x_{ij}^\top + \lambda I), \quad b_i = \sum_{j \in \Oi}y_{ij}x_{ij}. \label{eq:Ai_bi}
\end{equation}
We propose a task-weighted version of SSP, described in Algorithm~\ref{alg:ssp}. The private solution $\hat \theta_i$ is obtained by forming \emph{weighted and noisy} estimates $\hat A_i, \hat b_i$ of the sufficient statistics~\eqref{eq:Ai_bi}, then returning $\hat \theta_i = \hat A_i^\dagger \hat b_i$.  Notice that if we use larger weights for task $i$, the \emph{relative noise scale} in $\hat A_i, \hat b_i$ becomes smaller (see Lines~\ref{line:Ai}-\ref{line:bi}), which improves the quality of the estimate, while also increasing the sensitivity. Once we analyze how the weights affect privacy and utility, we will be able to reason about an optimal choice of weights.

First, we state the privacy guarantee of Algorithm~\ref{alg:ssp}. All proofs are deferred to Appendix~\ref{app:proofs}.

\begin{theorem}[Privacy guarantee of Algorithm~\ref{alg:ssp}]
\label{thm:privacy-ssp}
Let $(w_{ij})_{(i,j) \in \Omega}$ be a collection of task-user weights, and let $\beta = \max_{j \in [n]} \sum_{i \in \Oj} w_{ij}^2$. Then Algorithm~\ref{alg:ssp} run with weights $w$ is $(\alpha, \alpha \beta)$-RDP for all $\alpha > 1$. 
\end{theorem}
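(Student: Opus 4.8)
The plan is to recognize Algorithm~\ref{alg:ssp} as a single multivariate Gaussian mechanism applied to the weighted sufficient statistics, and to bound its $L_2$ sensitivity with respect to the removal of one user. Fix a user $j$ and the neighboring dataset obtained by deleting all of $j$'s samples. Only the tasks $i \in \Oj$ are affected, and for each such $i$ the change in the pre-noise statistics is $\Delta \hat A_i = w_{ij}(x_{ij}x_{ij}^\top + \lambda I)$ and $\Delta \hat b_i = w_{ij} y_{ij} x_{ij}$. Because the noise blocks $(\Xi_i, \xi_i)$ are drawn independently across tasks, the whole output is a product of independent Gaussians, so by additivity of the R\'enyi divergence over product measures it suffices to add up the per-task R\'enyi divergences (equivalently, treat the concatenation as one Gaussian mechanism whose squared sensitivity is the sum of the per-task squared sensitivities).

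First I would normalize each block so the injected noise has unit variance: dividing Line~\ref{line:Ai} by $\Gamma_x^2$ and Line~\ref{line:bi} by $\Gamma_x^2\Gamma_*$ turns $\Xi_i,\xi_i$ into standard Gaussians and rescales the signal shifts to $\Delta\hat A_i/\Gamma_x^2$ and $\Delta\hat b_i/(\Gamma_x^2\Gamma_*)$. I then invoke the standard fact that a unit-variance Gaussian mechanism with mean shift $\Delta$ is $(\alpha, \tfrac{\alpha}{2}\|\Delta\|^2)$-RDP, the norm being Euclidean over the independent noise coordinates. The two clipping steps are exactly what make these normalized norms controllable: from $\|x_{ij}\| \le \Gamma_x$ and $|y_{ij}| \le \Gamma_x\Gamma_*$ one gets $\|\Delta\hat b_i\| \le w_{ij}\Gamma_x^2\Gamma_*$, so the normalized vector sensitivity is $\le w_{ij}$; and $\|x_{ij}x_{ij}^\top\|_F = \|x_{ij}\|^2 \le \Gamma_x^2$ points to a normalized matrix sensitivity of $\le w_{ij}$ as well.

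The main technical care is in the matrix block, because $\Xi_i \sim \Ncal^{d\times d}$ is \emph{symmetric} with i.i.d.\ entries only on and above the diagonal, so its RDP must be computed over the independent coordinates indexed by pairs $(k,l)$ with $k \le l$. Here one checks the identity $\sum_{k\le l}(x_{ij,k}x_{ij,l})^2 = \tfrac12\big(\|x_{ij}\|^4 + \sum_k x_{ij,k}^4\big) \le \|x_{ij}\|^4 \le \Gamma_x^4$, which after dividing by $\Gamma_x^4$ yields the clean squared bound $w_{ij}^2$ for the matrix part. I expect this bookkeeping — correctly counting the symmetric coordinates and merging the heterogeneous matrix and vector mechanisms into a single sensitivity — to be the crux of the argument. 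The deterministic regularizer $\lambda I$ must also be tracked here, since it shifts the diagonal of $\Delta\hat A_i$; the cleanest route is to regard $\lambda I$ as a public, data-independent offset with zero sensitivity (or otherwise verify its contribution is subsumed by the calibration), so that the matrix sensitivity stays at $w_{ij}$.

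Assembling the pieces, for each affected task the per-task RDP is at most $\tfrac{\alpha}{2}w_{ij}^2$ from the matrix block plus $\tfrac{\alpha}{2}w_{ij}^2$ from the vector block, i.e.\ $\alpha w_{ij}^2$; the factor $\tfrac12$ of the Gaussian-mechanism formula applied to the two blocks of normalized sensitivity $w_{ij}$ is precisely what turns $\tfrac{\alpha}{2}$ into $\alpha$. Summing these independent contributions over $i \in \Oj$ gives $\alpha\sum_{i\in\Oj} w_{ij}^2$, and since the guarantee must hold for the worst deleted user, maximizing over $j$ produces exactly $\alpha\beta$ with $\beta = \max_j \sum_{i\in\Oj} w_{ij}^2$, establishing $(\alpha,\alpha\beta)$-RDP for every $\alpha>1$.
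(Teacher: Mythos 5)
Your proposal is correct and follows essentially the same route as the paper's proof: view the released statistics as a Gaussian mechanism, bound the per-user $L_2$ sensitivity via the clipping bounds ($\|w_{ij}x_{ij}x_{ij}^\top\|_F \le w_{ij}\Gamma_x^2$ and $\|w_{ij}y_{ij}x_{ij}\| \le w_{ij}\Gamma_x^2\Gamma_*$), and combine the $(\alpha, \tfrac{\alpha}{2}w_{ij}^2)$ contributions of the matrix and vector blocks across tasks to obtain $(\alpha,\alpha\beta)$-RDP. If anything, you are more careful than the paper, which bounds the sensitivity by the full Frobenius norm rather than over the independent upper-triangular noise coordinates, and which silently drops the $w_{ij}\lambda I$ term from the sensitivity of $\hat A_i$ --- a point you correctly flag as requiring justification.
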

The result can be easily translated to traditional DP, for example, for all $\epsilon, \delta > 0$ with $\epsilon \leq \log(1/\delta)$, if $\beta \leq \frac{\epsilon^2}{8\log(1/\delta)}$, then the algorithm is $(\epsilon, \delta)$-DP.

\begin{remark} The theorem can be interpreted as follows: each user $j$ has a total budget~$\beta$, that can be allocated among the tasks in $\Oj$ by choosing weights such that $\sum_{i \in \Oj} w_{ij}^2 \leq~\beta$. For this reason, we will refer to $\beta$ as the \emph{user RDP budget}.
\end{remark}
\begin{remark}
Sampling a fixed number of tasks per user is a special case of this formulation, where $w_{ij} = 1$ if task $i$ is sampled for user $j$, and $0$ otherwise. Using general weights allows a finer trade-off between tasks, as we shall see next.
\end{remark}

For the utility analysis, we will make the following standard assumption.

\begin{assumption}
\label{assump:SSP}
We assume that there exist $\Gamma_x, \Gamma_* > 0$ such that for all $i, j$, $\|x_{ij}\| \leq \Gamma_x$,  $\|\theta_i^*\| \leq \Gamma_*$, and $\|y_{ij}\| \leq \Gamma_x\Gamma_*$.
\end{assumption}
\begin{remark}
We state the result in the ridge regression case with bounded data to simplify the presentation. Similar results can be obtained under different assumptions, for instance when $x_{ij}$ are i.i.d. Gaussian, in which case one can bound the minimum eigenvalue of the covariance matrices $A_i$ without the need for regularization, see~\citep{wang2018revisiting} for a detailed discussion.
\end{remark}

\begin{theorem}[Utility guarantee of Algorithm~\ref{alg:ssp}]
\label{thm:tradeoff-ssp}
Suppose Assumption~\ref{assump:SSP} holds. Let $\omega \in \Rbb^m$ be a vector of positive weights such that for all~$j$, $\sum_{i \in \Oj} \omega_i^2 \leq \beta$. Let $\hat \theta$ be the output of Algorithm~\ref{alg:ssp} with weights $w_{ij} = \omega_i \ \forall (i,j) \in \Omega$. Then Algorithm~\ref{alg:ssp} is $(\alpha, \alpha\beta)$-RDP, for all $\alpha > 1$, and w.h.p.,
\begin{equation}
\label{eq:bound-ssp}
L(\hat \theta) - L(\theta^*) = \bigO{\frac{\Gamma_x^4\Gamma_*^2d}{\lambda} \sum_{i = 1}^m \frac{1}{\omega_i^2n_i}}.
\end{equation}
\end{theorem}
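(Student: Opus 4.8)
The plan is to reduce the excess empirical risk to a sum of per-task Mahalanobis errors, and then control each error by a matrix-perturbation analysis of the SSP estimator. Since each $L_i$ in~\eqref{eq:quad_loss} is an exact quadratic with Hessian $\nabla^2 L_i = 2A_i$ and $\nabla L_i(\theta_i^*) = 0$, its second-order Taylor expansion is exact and gives $L_i(\hat\theta_i) - L_i(\theta_i^*) = (\hat\theta_i - \theta_i^*)^\top A_i (\hat\theta_i - \theta_i^*) =: \|\hat\theta_i - \theta_i^*\|_{A_i}^2$. Summing over tasks, the excess risk equals $\sum_{i=1}^m \|\hat\theta_i - \theta_i^*\|_{A_i}^2$, so it suffices to bound each term and add. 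The privacy claim is immediate from Theorem~\ref{thm:privacy-ssp} applied with $w_{ij} = \omega_i$, since then $\max_j \sum_{i\in\Oj} w_{ij}^2 = \max_j\sum_{i\in\Oj}\omega_i^2 \le \beta$.

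Next I would isolate the noise. With $w_{ij} = \omega_i$, Lines~\ref{line:Ai}--\ref{line:bi} give $\hat A_i = \omega_i A_i + \Gamma_x^2\Xi_i$ and $\hat b_i = \omega_i b_i + \Gamma_x^2\Gamma_*\xi_i$. The factor $\omega_i$ cancels between $\hat A_i$ and $\hat b_i$, so on the event that $\hat A_i$ is positive definite (where the PSD-projected pseudo-inverse coincides with the ordinary inverse) I may write $\hat\theta_i = (A_i + \tilde E_i)^{-1}(b_i + \tilde e_i)$ with $\tilde E_i = \Gamma_x^2\Xi_i/\omega_i$ and $\tilde e_i = \Gamma_x^2\Gamma_*\xi_i/\omega_i$. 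From~\eqref{eq:Ai_bi}, $A_i = \sum_{j\in\Oi} x_{ij}x_{ij}^\top + \lambda n_i I \succeq \lambda n_i I$, and $\|\Xi_i\|_{\mathrm{op}} = \bigO{\sqrt d}$ w.h.p. for a symmetric Gaussian matrix; hence in the well-conditioned regime $\omega_i\lambda n_i \gg \Gamma_x^2\sqrt d$ this good event holds w.h.p. and moreover $\|A_i^{-1}\tilde E_i\|_{\mathrm{op}} \le \tfrac12$.

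I would then expand the estimator to first order. Writing $B_i = A_i^{-1}\tilde E_i$ and using $(I+B_i)^{-1} = I - B_i + B_i^2(I+B_i)^{-1}$, a short computation yields
\[
\hat\theta_i - \theta_i^* = A_i^{-1}(\tilde e_i - \tilde E_i\theta_i^*) + r_i,
\]
where the remainder $r_i$ collects every term carrying an extra factor $B_i$, so that $\|r_i\|_{A_i}$ is $\bigO{\|B_i\|_{\mathrm{op}}}$ times the size of the first-order term and is therefore lower order in the good regime. For the leading term, $\|A_i^{-1}(\tilde e_i - \tilde E_i\theta_i^*)\|_{A_i}^2 = (\tilde e_i - \tilde E_i\theta_i^*)^\top A_i^{-1}(\tilde e_i - \tilde E_i\theta_i^*)$ is a degree-two Gaussian chaos whose expectation splits into three pieces: the cross term vanishes because $\xi_i$ and $\Xi_i$ are independent and centered; $\Exp[\tilde e_i^\top A_i^{-1}\tilde e_i] = \tfrac{\Gamma_x^4\Gamma_*^2}{\omega_i^2}\mathrm{tr}(A_i^{-1})$; and, using a standard covariance identity for symmetric Gaussian matrices (of the form $\Exp[v^\top\Xi_i M\Xi_i v] = v^\top M v + \|v\|^2\mathrm{tr}(M)$, up to lower-order diagonal corrections), $\Exp[(\tilde E_i\theta_i^*)^\top A_i^{-1}(\tilde E_i\theta_i^*)]$ is $\tfrac{\Gamma_x^4}{\omega_i^2}(\theta_i^{*\top}A_i^{-1}\theta_i^* + \|\theta_i^*\|^2\mathrm{tr}(A_i^{-1}))$. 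Bounding $\mathrm{tr}(A_i^{-1}) \le d/(\lambda n_i)$, $\|A_i^{-1}\|_{\mathrm{op}} \le 1/(\lambda n_i)$, and $\|\theta_i^*\|\le\Gamma_*$ (Assumption~\ref{assump:SSP}), each surviving piece is $\bigO{\Gamma_x^4\Gamma_*^2 d/(\omega_i^2\lambda n_i)}$, which is exactly the per-task term of~\eqref{eq:bound-ssp}.

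Finally I would pass from expectation to a w.h.p. statement and sum over tasks. Concentration of the degree-two chaos around its mean follows from the Hanson--Wright inequality (applied to the quadratic form in $\xi_i$ and to the one in $\mathrm{vec}(\Xi_i)$), which keeps it within a constant factor of its expectation w.h.p.; combined with w.h.p. control of $r_i$ and of $\|\Xi_i\|_{\mathrm{op}}$, a union bound over the $m \le n$ tasks preserves the $1/n^c$ failure probability and yields~\eqref{eq:bound-ssp}. I expect the main obstacle to be the matrix-perturbation step: one must verify that the PSD-projected pseudo-inverse $\hat A_i^\dagger$ equals the true inverse on a high-probability well-conditioned event, and that the higher-order Neumann remainder $r_i$ is genuinely dominated by the first-order term, so that the clean expectation computation survives both the nonlinearity of $\hat A_i^\dagger$ and the passage to high probability.
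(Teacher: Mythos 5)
Your proposal is correct and structurally mirrors the paper's proof: privacy via Theorem~\ref{thm:privacy-ssp}, the per-task decomposition $L_i(\hat\theta_i)-L_i(\theta_i^*)=\|\hat\theta_i-\theta_i^*\|_{A_i}^2$, the key cancellation showing that with $w_{ij}=\omega_i$ task $i$ is exactly unweighted SSP with noise scale $\sigma=1/\omega_i$, and a final sum over tasks. Where you genuinely differ is the per-task step: the paper treats it as a black box, invoking Lemma~\ref{lem:ssp} (cited from \citet{wang2018revisiting}) with $\sigma^2=1/\omega_i^2$, whereas you re-derive that lemma via a Neumann expansion, Gaussian second-moment identities, and Hanson--Wright, landing on the same per-task quantity $\bigO{\Gamma_x^4\Gamma_*^2 d/(\lambda\omega_i^2 n_i)}$. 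Your route is self-contained and makes the provenance of each factor explicit (both noise sources contribute $\mathrm{tr}(A_i^{-1})\le d/(\lambda n_i)$, scaled by $\Gamma_x^4\Gamma_*^2/\omega_i^2$); its cost is that every step you rely on --- positive definiteness of $\hat A_i$ so that $\hat A_i^\dagger=\hat A_i^{-1}$, domination of the Neumann remainder, and Hanson--Wright concentration at confidence $1-1/n^c$ --- requires the well-conditioned regime $\omega_i\lambda n_i\gtrsim\Gamma_x^2\sqrt{d\log n}$ (plus polylog slack), a hypothesis that does not appear in the theorem statement. The paper's citation-based proof silently inherits the same caveat from the cited lemma, so you are at parity with the paper on rigor; but if you want your version to stand alone, you must either add this regime assumption or argue separately that outside it the right-hand side of \eqref{eq:bound-ssp} already dominates a worst-case bound on the excess risk, which is exactly where the PSD projection inside $\hat A_i^\dagger$ would have to do real work.
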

Theorem~\ref{thm:tradeoff-ssp} highlights that under the user-level privacy constraint ($\max_j \sum_{i \in \Oj} \omega_i^2 \leq \beta$), there is a certain trade-off between tasks: by increasing $\omega_i$ we improve quality for task $i$, but this may require decreasing $\omega$ for other tasks to preserve a constant total privacy budget. This allows us to reason about how to choose weights that achieve the optimal trade-off under a given RDP budget $\beta$. Remarkably, this problem is convex:
\[
\min_{\omega \in \Rbb_+^d} \sum_{i = 1}^m \frac{1}{\omega_i^2n_i} \quad \text{s.t. } \sum_{i \in \Oj} \omega_i^2 \leq \beta \quad \forall j \in \{1, \dots, n\}.
\]
If the task-user graph $\Omega$ were public, one could solve the problem exactly. But since task-user membership often represents private information, we want to protect $\Omega$. Our approach will be to compute an approximate solution. Before tackling this problem, we first derive similar weight-dependent privacy and utility bounds for noisy gradient descent.


\begin{algorithm}[t!]
\caption{Task-weighted noisy gradient descent\label{alg:dpsgd}}
\begin{algorithmic}[1]
\STATE{\bf Inputs}: Task-user graph $\Omega$, user features and labels $x, y$, domain $\Ccal$, clipping parameter $\Gamma$, weights~$w$, initial parameters $\hat \theta^{(0)}$, numer of steps $T$, learning rates $\eta_i^{(t)}$.\\
\FOR{$1 \leq i \leq m$}
\FOR{$1 \leq t \leq T$}
\STATE Sample $\xi^{(t)}_i \sim \Ncal^{d}$\\
\STATE $g^{(t)}_i \leftarrow \text{clip}(\sum_{j \in \Omega_i}  w_{ij}\nabla_{\theta_i} \ell(\theta_i^{(t-1)}; x_{ij}, y_{ij}), \Gamma)$ \label{line:gradient}\\
\STATE $\hat \theta_i^{(t)} \leftarrow \Pi_{\Ccal}[\theta_i^{(t-1)} - \eta^{(t)}_i (g^{(t)}_i + \Gamma\sqrt{T/2} \cdot \xi^{(t)}_i)]$
\ENDFOR
\ENDFOR
\STATE {\bf Return} $\hat \theta^{(T)} = (\hat \theta_1^{(T)}, \dots, \hat \theta_m^{(T)})$\label{line:multi_end}
\end{algorithmic}
\end{algorithm}

\subsection{Weighted Multi-Task Convex Minimization}
\label{sec:dpsgd}
We now consider the more general problem of minimizing empirical risk for convex losses in the multi-task setting. Our method applies noisy GD~\citep{bassily2014erm} to the weighted objective~\eqref{eq:obj_weighted}. This is summarized in Algorithm~\ref{alg:dpsgd}. We will consider two standard settings, in which noisy GD is known to achieve nearly optimal empirical risk bounds:
\begin{assumption}[Lipschitz convex on a bounded domain]
\label{assump:lip}
Assume that $\Ccal$ is a bounded convex domain, and that there exists $\Gamma > 0$ such that $\ell(\theta; x, y)$ is a convex and $\Gamma$-Lipschitz function of $\theta \in \Ccal$, uniformly in $x, y$.
\end{assumption}
\begin{assumption}[Lipschitz, strongly convex]
\label{assump:sc}
Assume that there exist $\lambda, \Gamma > 0$ such that $\ell(\theta; x, y)$ is a $\lambda$-strongly convex and $\Gamma$-Lipschitz function of $\theta$, uniformly in $x, y$.
\end{assumption}
Next, we derive privacy and utility bounds.
\begin{theorem}[Privacy guarantee of Algorithm~\ref{alg:dpsgd}]
\label{thm:privacy-dpsgd}
Let $(w_{ij})_{(i,j) \in \Omega}$ be a collection of task-user weights, and let $\beta = \max_{j \in [n]} \sum_{i \in \Oj} w_{ij}^2$. Then Algorithm~\ref{alg:dpsgd} is $(\alpha, \alpha \beta)$-RDP for all $\alpha > 1$.
\end{theorem}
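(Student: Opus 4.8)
The plan is to combine three ingredients: a decomposition of the R\'enyi divergence across tasks, the contraction property of clipping to bound the per-step gradient sensitivity, and adaptive RDP composition across the $T$ gradient steps. Fix two neighboring data sets $D, D'$ differing only in the data of a single user $j$, and let $\Acal_i$ denote the per-task subroutine (the run of the inner $t$-loop for task $i$). The first observation is structural: Algorithm~\ref{alg:dpsgd} processes each task $i$ with independent noise $\xi_i^{(t)}$, so the output distribution is a product of the per-task laws $\Acal_i(D)$; moreover, removing user $j$'s data leaves every task $i \notin \Oj$ untouched, so $\Acal_i(D) = \Acal_i(D')$ for such $i$. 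Using additivity of R\'enyi divergence over product measures,
\[
D_\alpha(\Acal(D) \,\|\, \Acal(D')) = \sum_{i \in \Oj} D_\alpha(\Acal_i(D) \,\|\, \Acal_i(D')),
\]
so it suffices to bound each surviving per-task term by $\alpha w_{ij}^2$ and sum.

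Next I would bound the per-step sensitivity for a fixed task $i \in \Oj$. Under Assumption~\ref{assump:lip} or~\ref{assump:sc}, each $\nabla_{\theta_i}\ell$ has norm at most $\Gamma$, so removing user $j$ changes the pre-clip gradient $\sum_{j' \in \Oi} w_{ij'}\nabla_{\theta_i}\ell$ by the single term $w_{ij}\nabla_{\theta_i}\ell(\cdot; x_{ij},y_{ij})$, of norm at most $w_{ij}\Gamma$. Since $\clip(\cdot, \Gamma)$ is the Euclidean projection onto the ball of radius $\Gamma$, it is $1$-Lipschitz, so the clipped gradient $g_i^{(t)}$ has $L_2$ sensitivity at most $w_{ij}\Gamma$ with respect to user $j$, and this bound is uniform over the current iterate $\theta_i^{(t-1)}$.

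I would then cast each step as a Gaussian mechanism and compose. Conditioned on $\theta_i^{(t-1)}$, the update $\hat\theta_i^{(t)}$ is a fixed post-processing (an affine map followed by the projection $\Pi_\Ccal$) of $g_i^{(t)} + \Gamma\sqrt{T/2}\,\xi_i^{(t)}$, which is the Gaussian mechanism applied to $g_i^{(t)}$ with per-coordinate noise standard deviation $\sigma = \Gamma\sqrt{T/2}$. By the standard Gaussian-mechanism RDP bound together with the sensitivity above, this step is $\bigl(\alpha, \tfrac{\alpha (w_{ij}\Gamma)^2}{2\sigma^2}\bigr) = \bigl(\alpha, \tfrac{\alpha w_{ij}^2}{T}\bigr)$-RDP, where the choice $\sigma = \Gamma\sqrt{T/2}$ is precisely what makes the per-step cost scale as $1/T$. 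Applying the adaptive composition theorem for RDP over the $T$ steps then gives $D_\alpha(\Acal_i(D)\,\|\,\Acal_i(D')) \le T \cdot \tfrac{\alpha w_{ij}^2}{T} = \alpha w_{ij}^2$. Summing over $i \in \Oj$ yields $D_\alpha(\Acal(D)\,\|\,\Acal(D')) \le \alpha \sum_{i\in\Oj} w_{ij}^2$, and since this holds for every neighboring pair (i.e.\ every affected user $j$), taking the maximum over $j$ gives the bound $\alpha\beta$, as claimed.

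The step I expect to require the most care is the composition argument: I must verify that, because the per-step sensitivity bound $w_{ij}\Gamma$ holds uniformly over all reachable iterates (thanks to the uniform Lipschitz assumption), the adaptive dependence of $\theta_i^{(t-1)}$ on earlier noisy outputs does not inflate the bound, and that the per-task factorization of the output law interacts correctly with the per-step adaptive composition. The remaining ingredients—the contraction property of clipping and the Gaussian RDP formula—are routine, and the overall structure parallels the privacy proof of Algorithm~\ref{alg:ssp}.
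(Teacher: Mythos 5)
Your proof is correct, and it rests on the same essential ingredients as the paper's — the per-user bound $w_{ij}\Gamma$ on the gradient perturbation, the Gaussian-mechanism RDP formula with noise scale $\Gamma\sqrt{T/2}$, and additive composition — but it organizes the decomposition in the opposite order. The paper works step-first: at each iteration it stacks the per-task gradients into a single vector in $\Rbb^{md}$, bounds the joint $L_2$ sensitivity by $\sqrt{\beta}\,\Gamma$ (via $\sum_{i \in \Oj} w_{ij}^2\Gamma^2 \leq \beta\Gamma^2$), so that each step is one Gaussian mechanism which is $(\alpha, \alpha\beta/T)$-RDP, and then composes over the $T$ steps. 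You work task-first: you factor the output law as a product over tasks (valid since the per-task runs use independent noise and never interact), apply additivity of R\'enyi divergence over product measures, compose over the $T$ steps within each task to get $\alpha w_{ij}^2$ per task, and finally sum over $i \in \Oj$. The two arguments evaluate the same double sum over (task, step) in different orders and give identical constants; yours makes the per-task budget $\alpha w_{ij}^2$ explicit, while the paper's is shorter. Two points where your write-up is more careful than the paper's: you invoke $1$-Lipschitzness of the projection $\clip(\cdot, \Gamma)$ so that the clipping in Line~\ref{line:gradient} cannot increase the sensitivity (the paper's proof writes the sensitivity as an equality, silently ignoring the clip), and you state explicitly why adaptivity of the iterates does not inflate the composition bound. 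One caveat applies to both proofs equally: the bound $\|\nabla \ell\| \leq \Gamma$ requires Assumption~\ref{assump:lip} or~\ref{assump:sc}, which the theorem statement omits; without it, aggregate clipping alone only yields a per-task sensitivity of $2\Gamma$, independent of $w_{ij}$, and the claimed bound would not follow.
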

\begin{theorem}[Utility guarantee of Algorithm~\ref{alg:dpsgd}]
\label{thm:tradeoff-dpsgd}
Let $\omega \in \Rbb^m$ be a vector of positive weights such that for all~$j$, $\sum_{i \in \Oj} \omega_i^2 \leq \beta$. Let $\hat \theta$ be the output of Algorithm~\ref{alg:dpsgd} with weights $w_{ij} = \omega_i$. Then Algorithm~\ref{alg:dpsgd} is $(\alpha, \alpha\beta)$-RDP, for all $\alpha > 1$. Furthermore,
\begin{enumerate}[(a)]
\item Under Assumption~\ref{assump:lip}, if
$T = \frac{2}{d}\frac{\sum_{i = 1}^m n_i^2}{\sum_{i = 1}^m 1/\omega_i^2}$
and $\eta_i^{(t)} = \frac{\|C\|}{\Gamma \sqrt{\omega_i^2n_i^2 + Td/2}}\frac{1}{\sqrt t}$,
then
\begin{equation}
\label{eq:bound-lip}
\Exp[L(\theta)] - L(\theta^*) = \bigOtilde{\|\Ccal\|\Gamma \sqrt{md}\sqrt{\sum_{i = 1}^m \frac{1}{\omega_i^2}}},
\end{equation}
where $\tilde\Ocal$ hides polylog factors in $n_i$.
\item Under Assumption~\ref{assump:sc}, if
$T = \frac{2}{d}\frac{|\Omega|}{\sum_{i = 1}^m 1/\omega_i^2n_i}$,
and $\eta_i^{(t)} = \frac{1}{\omega_i^2 n_i \lambda t}$, then
\begin{equation}
\label{eq:bound-sc}
\Exp[L(\theta)] - L(\theta^*) = \bigOtilde{\frac{\Gamma^2d}{\lambda}\sum_{i = 1}^m \frac{1}{\omega_i^2n_i}}.
\end{equation}
\end{enumerate}
\end{theorem}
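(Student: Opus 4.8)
The privacy claim is immediate from Theorem~\ref{thm:privacy-dpsgd}: with $w_{ij} = \omega_i$ the constraint $\sum_{i \in \Oj} \omega_i^2 \le \beta$ gives $\max_j \sum_{i \in \Oj} w_{ij}^2 \le \beta$, so the algorithm is $(\alpha, \alpha\beta)$-RDP. The substance is the utility bound, and the plan is to exploit the fact that Algorithm~\ref{alg:dpsgd} runs \emph{independent} noisy projected subgradient descent per task. Since the objective decomposes, $L(\hat\theta) - L(\theta^*) = \sum_{i=1}^m \big(L_i(\hat\theta_i) - L_i(\theta_i^*)\big)$, I would bound each task separately and aggregate. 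For a fixed task $i$, because $w_{ij} = \omega_i$ is constant across users, the weighted per-task objective is $\omega_i L_i$, with the same minimizer $\theta_i^*$ and (sub)gradient $\omega_i \nabla L_i$. The update in Algorithm~\ref{alg:dpsgd} is then exactly projected stochastic subgradient descent on $\omega_i L_i$ over $\Ccal$, whose descent direction $g_i^{(t)} + \Gamma\sqrt{T/2}\,\xi_i^{(t)}$ is unbiased for $\omega_i \nabla L_i$ and has second moment bounded by $B_i^2 := \Gamma^2 \omega_i^2 n_i^2 + \Gamma^2 Td/2$. The first term is the Lipschitz bound $\|\omega_i \nabla L_i\| \le \omega_i n_i \Gamma$ (the aggregated gradient of $n_i$ losses each $\Gamma$-Lipschitz), and the second is $\Exp\|\Gamma\sqrt{T/2}\,\xi_i^{(t)}\|^2 = \Gamma^2 Td/2$; these are exactly the quantities in the prescribed step sizes.

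The core of the argument is then to invoke the standard convergence guarantees for projected (sub)gradient descent applied to $\omega_i L_i$, then divide through by $\omega_i$ to recover a bound on $L_i(\hat\theta_i) - L_i(\theta_i^*)$. Under Assumption~\ref{assump:lip}, the Lipschitz-convex rate $\bigOtilde{\|\Ccal\| B_i/\sqrt T}$ for $\eta_i^{(t)} = \|\Ccal\|/(B_i\sqrt t)$, divided by $\omega_i$, yields the per-task bound $\bigOtilde{\|\Ccal\|\Gamma \sqrt{n_i^2/T + d/(2\omega_i^2)}}$. Under Assumption~\ref{assump:sc}, the weighted objective $\omega_i L_i$ is $\mu_i := \omega_i n_i \lambda$-strongly convex, so the strongly-convex rate $\bigOtilde{B_i^2/(\mu_i T)}$ with the prescribed $\eta_i^{(t)} \propto 1/t$, divided by $\omega_i$, yields the per-task bound $\bigOtilde{(\Gamma^2/\lambda)\,(n_i/T + d/(2\omega_i^2 n_i))}$. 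In both cases the polylog factors hidden in $\bigOtilde{\cdot}$ come from using last-iterate (rather than averaged-iterate) convergence bounds, since Algorithm~\ref{alg:dpsgd} returns $\hat\theta^{(T)}$; as $T$ is polynomial in the $n_i$, these are $\mathrm{polylog}(n_i)$.

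Finally I would aggregate and choose $T$ to balance the optimization and noise terms. For part (a), summing the per-task bounds and applying Cauchy--Schwarz, $\sum_i \sqrt{c_i} \le \sqrt{m}\,\sqrt{\sum_i c_i}$ with $c_i = n_i^2/T + d/(2\omega_i^2)$, gives $\bigOtilde{\|\Ccal\|\Gamma\sqrt{m}\,\sqrt{\sum_i n_i^2/T + (d/2)\sum_i 1/\omega_i^2}}$; the prescribed $T = \tfrac{2}{d}\tfrac{\sum_i n_i^2}{\sum_i 1/\omega_i^2}$ equalizes the two terms, so $\sum_i n_i^2/T = (d/2)\sum_i 1/\omega_i^2$ and the bracket collapses to $d\sum_i 1/\omega_i^2$, producing~\eqref{eq:bound-lip}. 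For part (b) the per-task bounds are already linear, so a direct sum (using $\sum_i n_i = |\Omega|$) together with $T = \tfrac{2}{d}\tfrac{|\Omega|}{\sum_i 1/(\omega_i^2 n_i)}$ equalizes $|\Omega|/T$ with $(d/2)\sum_i 1/(\omega_i^2 n_i)$, giving~\eqref{eq:bound-sc}.

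The step I expect to require the most care is the first: controlling the per-task descent direction so that the clipping does not bias it while the second moment stays exactly $B_i^2$. The clean reading is that clipping serves only sensitivity control (the $(\alpha,\alpha\beta)$-RDP bound follows from the per-user Lipschitz bound $\|\omega_i\nabla\ell\|\le\omega_i\Gamma$ and the noise scale $\Gamma\sqrt{T/2}$), while under the Lipschitz assumption the aggregated gradient is bounded by $\omega_i n_i\Gamma$, so for the utility analysis the descent direction is treated as $\omega_i\nabla L_i$ plus injected Gaussian noise. Once this clean two-term per-task bound is in hand, the remaining work is the short Cauchy--Schwarz-plus-balancing computation above, together with invoking off-the-shelf last-iterate SGD rates; I do not expect any further obstacle there.
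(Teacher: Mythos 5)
Your proposal is correct and follows essentially the same route as the paper's proof: per-task reduction to noisy SGD on the weighted objective $\omega_i L_i$ with second-moment bound $\Gamma^2[\omega_i^2 n_i^2 + Td/2]$, invocation of the standard Lipschitz and strongly convex SGD rates, division by $\omega_i$, summation with the concavity/Cauchy--Schwarz step, and balancing via the prescribed $T$. Even your treatment of clipping (inactive for utility, relevant only to sensitivity) matches how the paper handles it.
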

We consider some special cases, as a consistency check.

\paragraph{Single task} Suppose there is a single task ($m = 1$), then the RDP budget constraint is simply satisfied with $\omega_1^2 = \beta$. Plugging this into the bounds from the theorem, and dividing by the number of examples $n$ (as our loss $L$ is a summation, instead of average, over all examples), we get that the average empirical risk is bounded by $\bigOtilde{\frac{\|\Ccal\|\Gamma\sqrt{d}}{n\sqrt\beta}}$ and $\bigOtilde{\frac{\Gamma^2d}{\lambda n^2 \beta}}$ in the convex and strongly convex case, respectively. This recovers the usual convex ERM bounds~\citep{bassily2014erm} -- with the correspondence $\beta = \bigO{\epsilon^2/\log(1/\delta)}$.

\paragraph{Complete bipartite $\Omega$} Suppose there are $m$ tasks and each user participates in \emph{all} tasks. Then the budget constraint becomes $m\omega_i^2 = \beta$ for all $i$. Plugging this into the bounds and dividing by $mn$ (total number of examples), we get that the average empirical risk is $\bigOtilde{\frac{\|\Ccal\|\Gamma \sqrt{md}}{n\sqrt\beta}}$ and $\bigOtilde{\frac{\Gamma^2md}{\lambda n^2 \beta}}$  in the convex and strongly convex case, respectively. Notice that this is equivalent to solving a single task in dimension~$md$, and the resulting ERM bound is the same.


\section{Optimizing Weights Under Distribution Skew}
\label{sec:optimal}
Equipped with the utility bounds of the previous section, we can ask what choice of weights minimizes the error bound under a given privacy budget. Observe that all utility bounds~\eqref{eq:bound-ssp},~\eqref{eq:bound-lip},~\eqref{eq:bound-sc} are increasing in the quantity $\sum_{i = 1}^m \frac{1}{\omega_i^2n_i^\gamma}$, where $\gamma = 0$ in the convex case, and $\gamma=1$ in the strongly convex case (for both SSP and noisy GD). This amounts to solving a constrained optimization problem of the form
\begin{equation}
\label{eq:opt_exact}
\min_{\omega \in \Rbb^m_+} \sum_{i = 1}^m \frac{1}{\omega_i^2n_i^\gamma} \quad \text{s.t. } \sum_{i \in \Oj} \omega_i^2 \leq \beta, \quad \forall j \in \{1, \dots, n\}.
\end{equation}

In general, the task-user graph $\Omega$ is private (for example, in the recommendation setting, $\Omega$ represents \emph{which} pieces of content each user interacted with, while the labels $y_{ij}$ represent \emph{how much} a user liked that piece of content; we want to protect both). We cannot solve~\eqref{eq:opt_exact} exactly, but under distributional assumptions on $\Omega$, we can solve it approximately.

In modeling the distribution of $\Omega$, we take inspiration from the closely related problem of matrix completion, where a standard assumption is that elements of $\Omega$ are sampled uniformly at random, see e.g.~\citep{jain2013lowrank} -- in other words, $\Omega$ is an Erdős-R\'enyi random graph. However, making such assumption implies that the task sizes $n_i = |\Omega_i|$ concentrate around their mean, and this fails to capture the task-skew problem we set out to solve. Instead, we will relax the assumption by removing uniformity of tasks while keeping uniformity of users. Specifically,

\begin{assumption}
\label{assump:omega}
Assume that $\Omega$ is obtained by sampling, for each task~$i$, $n_i$ users independently and uniformly from $\{1, \dots, n\}$. Furthermore, assume that the task sizes $n_i$ are publicly known, and that $n_i \geq 1$ for all $i$.
\end{assumption}

In practice, both users and tasks can be heterogeneous. In that sense, Assumption~\ref{assump:omega} is a stylized model, that is meant to primarily capture the task skew problem. We leave extensions to the user heterogeneous case to future work.

The assumption that $n_i$ are publicly known is relatively mild: in many applications, this information is available, for example in video recommendation, the number of views of each video is often public. A similar assumption was made by~\cite{epasto2020smoothly}, where the number of examples contributed by each user is assumed to be public. When $n_i$ are not available, the solution can be based on private estimates of $n_i$; we give in Appendix~\ref{app:private_counts} the extension to this case. In experiments, we will always privately estimate $n_i$ and account for the privacy cost of doing so.

A consequence of Assumption~\ref{assump:omega} is that the quantity $B_j := \sum_{i \in \Oj} \omega_i^2$ (that appears in the inequality constraint in equation~\eqref{eq:opt_exact}) becomes a random variable, that is concentrated around its mean, which is equal to $\sum_{i = 1}^m \frac{n_i}{n} \omega_i^2$. More precisely, we will show that, w.h.p., all $B_j$ are bounded above by $c(n) \cdot \sum_{i = 1}^m \frac{n_i}{n} \omega_i^2$ for an appropriate $c(n)$. Then we can replace the $n$ constraints in~\eqref{eq:opt_exact} by a single constraint (for the privacy guarantee, the constraints will be enforced with probability 1, as we will discuss below; but for the purpose of optimizing the utility bound, it suffices that the constraints hold w.h.p.). The problem becomes:
\begin{equation}
\label{eq:opt}
\min_{\omega \in \Rbb^m_+} \sum_{i = 1}^m \frac{1}{\omega_i^2n_i^\gamma} \quad \text{s.t. } c(n)\sum_{i = 1}^m \frac{n_i}{n}\omega_i^2 \leq \beta,
\end{equation}
which we can solve in closed form as follows. Let $\bar \beta = \frac{\beta}{c(n)}$. The Lagrangian is
$\sum_{i = 1}^m \frac{1}{\omega_i^2n_i^\gamma} + \lambda (\sum_{i=1}^m n_i\omega_i^2 - n\bar\beta)$ ($\lambda$ is a Lagrange multiplier), and the KKT conditions yield: $\forall i, \ -\frac{1}{\omega_i^3 n_i^\gamma} + \lambda n_i \omega_i = 0, \ 
\sum_{i = 1}^m n_i\omega_i^2 = n\bar\beta$, which simplifies to
{\small%
\begin{equation}
\label{eq:w}
\omega_i^* = \frac{n_i^{-(\gamma+1)/4}}{\sqrt{\sum_{i' = 1}^m n_{i'}^{(1-\gamma)/2}/n\bar \beta}}\,.
\end{equation}}%
With $\omega_i$ in this form and $c(n) = c \log n$, it can be shown (see Lemma~\ref{lem:concentration}) that w.h.p., $\max_j \sum_{i \in \Oj} \omega_i^2 \leq \beta$. But since we need the privacy guarantee to always hold, we simply add the following clipping step: let $w^*$ be the task-user weights defined as
{\small%
\begin{equation}
\label{eq:w_final}
w^*_{ij} = \omega^*_i \min\left(1, \sqrt{\frac{\beta}{\sum_{i' \in \Omega_j} {\omega_{i'}^*}^2}} \right) \quad \forall i, j \in \Omega.
\end{equation}}%
In other words, for each user $j$, if the total sensitivity $\sum_{i'\in \Omega_j}{\omega_{i'}^*}^2$ exceeds $\beta$, we scale down the weights of this user to meet the budget. This step ensures that the privacy guarantee is exact. But for utility analysis, we know that w.h.p., $w^*_{ij} = \omega_i$ (since the constraint is satisfied w.h.p.), so utility bounds will hold w.h.p.

The next theorem provides the utility guarantee under weights $w^*$.

\begin{theorem}[Privacy-utility trade-off of Algorithms~\ref{alg:ssp} and~\ref{alg:dpsgd} under adaptive weights]
\label{thm:exact}
Suppose Assumption~\ref{assump:omega} holds.
\begin{enumerate}[(a)]
\item Under Assumption~\ref{assump:SSP}, let $\hat\theta$ be the output of Algorithm~\ref{alg:ssp} run with weights $w^*$ (eq.~\eqref{eq:w}-\eqref{eq:w_final} with $\gamma = 1$). Then w.h.p.,
\begin{equation}
\label{eq:final_bound_ssp}
L(\hat \theta) - L(\theta^*) = \bigOtilde{\frac{\Gamma_x^4\Gamma_*^2dm^2}{n\lambda\beta}}.
\end{equation}
\item Under Assumption~\ref{assump:lip}, if Algorithm~\ref{alg:dpsgd} is run with weights $w^*$ (eq.~\eqref{eq:w}-\eqref{eq:w_final} with $\gamma = 0$) and parameters listed in Theorem~\ref{thm:tradeoff-dpsgd}(a), then
\begin{equation}
\label{eq:final_bound_lip}
\tilde\Exp[L(\theta)] - L(\theta^*) = \bigOtilde{\|\Ccal\|\Gamma \sqrt{\frac{md}{n\beta}}\sum_{i = 1}^m n_i^{1/2}},
\end{equation}
where $\tilde\Exp$ denotes expectation conditioned on a high probability event.
\item Under Assumption~\ref{assump:sc}, if Algorithm~\ref{alg:dpsgd} is run with $w^*$ (eq.~\eqref{eq:w}-\eqref{eq:w_final} with $\gamma = 1$) and parameters listed in Theorem~\ref{thm:tradeoff-dpsgd}(b), then
\begin{equation}
\label{eq:final_bound_sc}
\tilde\Exp[L(\theta)] - L(\theta^*) = \bigOtilde{
\frac{\Gamma^2dm^2}{n\lambda\beta}
}.
\end{equation}
\end{enumerate}
In all three cases, the procedure is $(\alpha, \alpha\beta)$-RDP for all $\alpha > 1$.
\end{theorem}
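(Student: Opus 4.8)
The plan is to assemble the result from three ingredients already in place: the weight-dependent privacy and utility bounds of Theorems~\ref{thm:tradeoff-ssp} and~\ref{thm:tradeoff-dpsgd}, the closed-form minimizer~\eqref{eq:w} of the relaxed problem~\eqref{eq:opt}, and a concentration argument (Lemma~\ref{lem:concentration}) guaranteeing that the clipping step~\eqref{eq:w_final} is inactive with high probability. Privacy comes essentially for free: by construction the clipping in~\eqref{eq:w_final} enforces $\sum_{i \in \Oj}(w^*_{ij})^2 \leq \beta$ for every user $j$ \emph{deterministically} (for any realization of $\Omega$), so Theorems~\ref{thm:privacy-ssp} and~\ref{thm:privacy-dpsgd} immediately yield the $(\alpha,\alpha\beta)$-RDP guarantee for both algorithms, proving the last sentence of the statement.

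The next step is to show that the clipping leaves the weights unchanged on a high-probability event, so that the utility bounds can be applied with $\omega = \omega^*$. Define $B_j = \sum_{i \in \Oj}(\omega^*_i)^2$. Under Assumption~\ref{assump:omega} the indicators $\mathbf{1}[j \in \Oi]$ are independent across $i$ with $\Pr(j \in \Oi) = n_i/n$, so $B_j$ is a sum of independent bounded terms with mean $\Exp[B_j] = \sum_i (n_i/n)(\omega^*_i)^2 = \bar\beta$, where the last equality is the averaged constraint of~\eqref{eq:opt} satisfied with equality at~\eqref{eq:w}. A Bernstein/Chernoff tail bound on the upper deviation of $B_j$, with the logarithmic inflation $c(n) = c\log n$, is small enough that a union bound over the $n$ users still gives $\max_j B_j \leq c\log n \cdot \bar\beta = \beta$ with high probability; this is precisely Lemma~\ref{lem:concentration}. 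On this event the scaling factor in~\eqref{eq:w_final} equals $1$, hence $w^*_{ij} = \omega^*_i$ for all $(i,j) \in \Omega$, and I may invoke Theorems~\ref{thm:tradeoff-ssp} and~\ref{thm:tradeoff-dpsgd} with $\omega = \omega^*$.

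All three utility bounds are increasing in $S_\gamma := \sum_{i=1}^m \frac{1}{(\omega^*_i)^2 n_i^\gamma}$, so it remains to substitute~\eqref{eq:w} and simplify. A direct computation gives
\[
S_\gamma = \frac{\bigl(\sum_{i=1}^m n_i^{(1-\gamma)/2}\bigr)^2}{n\bar\beta} = \frac{c\log n}{n\beta}\Bigl(\sum_{i=1}^m n_i^{(1-\gamma)/2}\Bigr)^2.
\]
For $\gamma = 1$ (cases (a) and (c)) the inner sum collapses to $m$, giving $S_1 = \bigOtilde{m^2/(n\beta)}$; feeding this into~\eqref{eq:bound-ssp} and~\eqref{eq:bound-sc} produces~\eqref{eq:final_bound_ssp} and~\eqref{eq:final_bound_sc}. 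For $\gamma = 0$ (case (b)) the bound~\eqref{eq:bound-lip} involves $\sqrt{S_0} = \bigOtilde{\frac{1}{\sqrt{n\beta}}\sum_i n_i^{1/2}}$, which yields~\eqref{eq:final_bound_lip}. For the SSP case the statement of Theorem~\ref{thm:tradeoff-ssp} already holds w.h.p.; for cases (b) and (c), the remaining expectation is over the algorithm's Gaussian noise, taken conditionally on the high-probability event $\{w^* = \omega^*\}$, which is exactly the meaning of $\tilde\Exp$.

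I expect the main obstacle to be the concentration step underlying Lemma~\ref{lem:concentration}. The difficulty is that the optimal weights are highly heterogeneous: by~\eqref{eq:w}, tail tasks (small $n_i$) receive large $\omega^*_i \propto n_i^{-(\gamma+1)/4}$, so the summands $(\omega^*_i)^2 \mathbf{1}[j \in \Oi]$ have a wide dynamic range, and a crude control of either the maximal summand $\max_i (\omega^*_i)^2$ or the variance $\sum_i (n_i/n)(\omega^*_i)^4$ would sacrifice the $c\log n$ factor. The care required is to verify that, for the \emph{specific} weights~\eqref{eq:w}, Bernstein's inequality with the inflation $c(n) = c\log n$ simultaneously dominates both the variance and the maximal-summand contributions, so that the union bound over all $n$ users survives. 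Once the concentration is established, the remaining steps are routine algebraic substitution.
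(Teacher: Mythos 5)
Your proposal is correct and follows essentially the same route as the paper's proof: privacy from the deterministic clipping in~\eqref{eq:w_final}, Lemma~\ref{lem:concentration} (Bernstein's inequality plus a union bound over users, applied with $B = \bar\beta$) to show the clipping is inactive w.h.p., and then substitution of the optimal weights~\eqref{eq:w} into Theorems~\ref{thm:tradeoff-ssp} and~\ref{thm:tradeoff-dpsgd}. Your explicit computation of $S_\gamma = \bigl(\sum_{i=1}^m n_i^{(1-\gamma)/2}\bigr)^2/(n\bar\beta)$ merely fills in algebra that the paper leaves implicit.
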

To understand the effect of adapting to task skew, we compare these bounds to the case when we use uniform weights. With uniform weights, the privacy constraint yields $\omega^{\text{uniform}}_i = (n\bar\beta/\sum_{i' = 1}^m n_{i'})^{1/2}, \quad \forall i$. Taking the ratio between the utility bound for $\omega^{\text{uniform}}$ and the utility bound for $\omega^*$, we obtain the following: In the Lipschitz bounded case, the relative improvement is $R_0(n_1, \dots, n_m) = \frac{(m\sum_{i = 1}^m n_i)^{1/2}}{\sum_{i = 1}^m n_i^{1/2}}$. In the strongly convex case (both for SSP and noisy GD), the relative improvement is $R_1(n_1, \dots, n_m) = \frac{\sum_{i = 1}^m \frac{1}{n_i}\sum_{i = 1}^m n_i}{m^2}$. In particular, $R_0, R_1$ are lower bounded by $1$ (by Cauchy-Schwarz), and equal to $1$ when the task distribution is uniform (all $n_i$ are equal). In both cases, the improvement can be arbitrary large. To see this, consider the extreme case when $n_1 = m^{1+\nu}$ and $n_i$ is a constant for all other $m-1$ tasks ($\nu > 0$). A simple calculation shows that in this case, the leading term as $m \to \infty$ is $R_0 \approx m^{\nu/2}$ and $R_1 \approx m^{\nu}$. Both are unbounded in $m$.

Finally, we give a qualitative comment on the optimal weights~\eqref{eq:w}. While it's clear that increasing the weight of a task improves its quality, it was not clear, a priori, that increasing weights on tail tasks benefits the \emph{overall objective}~\eqref{eq:obj} (since tail tasks represent fewer terms in the sum). The analysis shows that this is indeed the case: the optimal trade-off (see eq.~\eqref{eq:w}) is obtained when $\omega_i \propto n_i^{-(1+\gamma)/4}$, i.e. larger weights are assigned to smaller tasks. This can be explained by a certain diminishing returns effect that depends on the task size: from the optimization problem~\eqref{eq:opt}, the marginal utility benefit of increasing $\omega_i$ is smaller for larger tasks (due to the $n_i^\gamma$ term in the objective). At the same time, the marginal privacy cost of increasing $\omega_i$ is higher for larger tasks (due to the $n_i$ term in the constraint).

\section{Empirical Evaluation}
To evaluate our methods, we run experiments on large-scale recommendation benchmarks on the MovieLens data sets~\citep{harper16movielens}. As mentioned in the introduction, recommendation problems are known to exhibit a long tail of content with much fewer training data than average, and this is the main practical issue we seek to address. We also run experiments on synthetic data, reported in Appendix~\ref{app:synthetic_expt}.

In our evaluation, we investigate the following questions:
\begin{enumerate}[leftmargin=13pt,topsep=0pt,itemsep=0ex,partopsep=0ex,parsep=1pt]
\item Whether on realistic data sets, our adaptive weight methods can improve the overall utility by shifting weights towards the tail, as the analysis indicates. In particular, Theorem~\ref{thm:exact} suggests that (under distributional assumptions on $\Omega$), the optimal task weights are of the form $\omega_i \propto n_i^{-(1+\gamma)/4}$. We experiment with weights of the form $\omega_i \propto n_i^{-\mu}$ for different values of $\mu \in [0, 1]$.
\item Do we observe similar improvements for different algorithms? Our analysis applies both to SSP (Algorithm~\ref{alg:ssp}) and noisy GD (Algorithm~\ref{alg:dpsgd}). We run the experiments with both algorithms.
\item Beyond improvements in the average metrics, what is the extent of quality impact across the head and tail of the task distribution?
\end{enumerate}

\paragraph{Experimental Setup}
Each of the MovieLens data sets consists of a sparse matrix of ratings given by users to movies. The first benchmark, from~\citet{lee13llorma}, is a rating prediction task on MovieLens 10 Million (abbreviated as ML10M), where the quality is evaluated using the RMSE of the predicted ratings. The second benchmark, from~\cite{liang18vae}, is a top-$k$ item recommendation task on MovieLens 20 Million (abbreviated as ML20M), where the model is used to recommend $k$ movies to each user, and the quality is measured using Recall@k. Figure~\ref{fig:app_distribution_skew} in the appendix shows the movie distribution skew in each data set. For example, in ML20M, the top 10\% movies account for 86\% of the training data.

In both cases, we train a DP matrix factorization model. We identify movies to tasks and apply our algorithms to learning the movie embedding representations, for details, see Appendix~\ref{app:movielens_expt}.

The current state of the art on these benchmarks is the DP alternating least squares (DPALS) method~\citep{DPALS}, which we use as a baseline. We will compare to DPALS both with uniform sampling (sample a fixed number of movies per user, uniformly at random) and with tail-biased sampling (sort all movies by increasing counts, then for each user, keep the $k$ first movies). The latter was a heuristic specifically designed by~\cite{DPALS} to address the movie distribution skew, and is the current SOTA.

We also experiment with DPSGD on the same benchmarks, and compare uniform sampling, tail-biased sampling, and adaptive weights.

When computing adaptive weights for our methods, we first compute private estimates $\hat n_i$ of the counts (which we include in the privacy accounting), then define task weights following eq.~\eqref{eq:w}-\eqref{eq:w_final}, but allowing a wider range of exponents. More precisely, we replace eq.~\eqref{eq:w} with
\[
\omega^*_i = \frac{\hat n_i^{-\mu}}{\sqrt{\sum_{i' = 1}^m \hat n_{i'}^{-2\mu+1}/n\bar \beta}},
\]
where $\mu$ is a hyper-parameter. In the analysis, the optimal choice of weights corresponds to $\mu = 1/4$ in the convex case (eq.~\eqref{eq:w} with $\gamma = 0$), and $\mu = 1/2$ in strongly convex case (eq.\eqref{eq:w} with $\gamma = 1$). We experiment with different values of $\mu \in [0, 1]$ to evaluate the effect of shifting the weight distribution towards the tail.

\def\wdth{0.4\textwidth}
\def\figsep{0.1\textwidth}
\begin{figure}[h]
\centering
\includegraphics[width=\wdth]{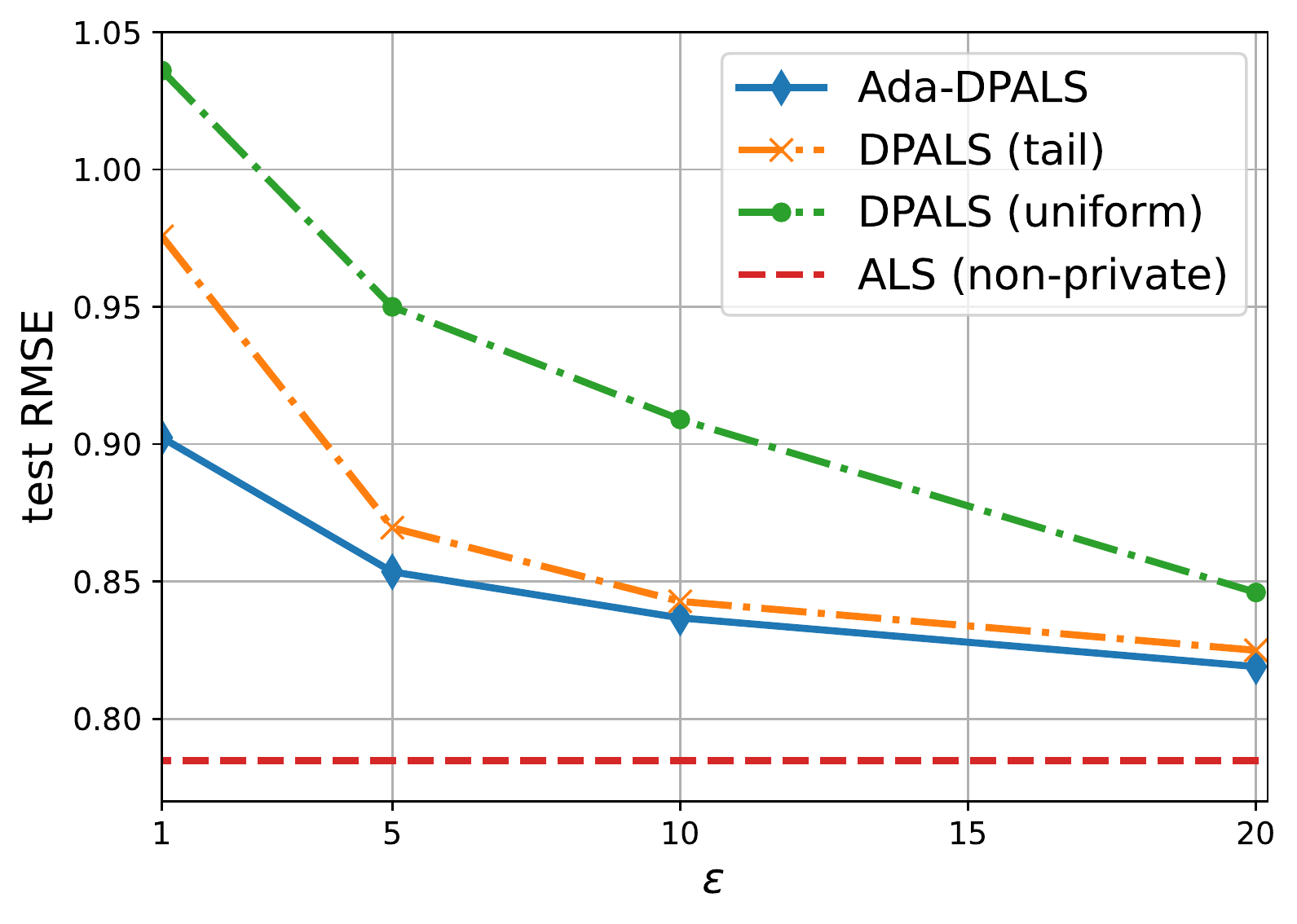}
\hspace{\figsep}
\includegraphics[width=\wdth]{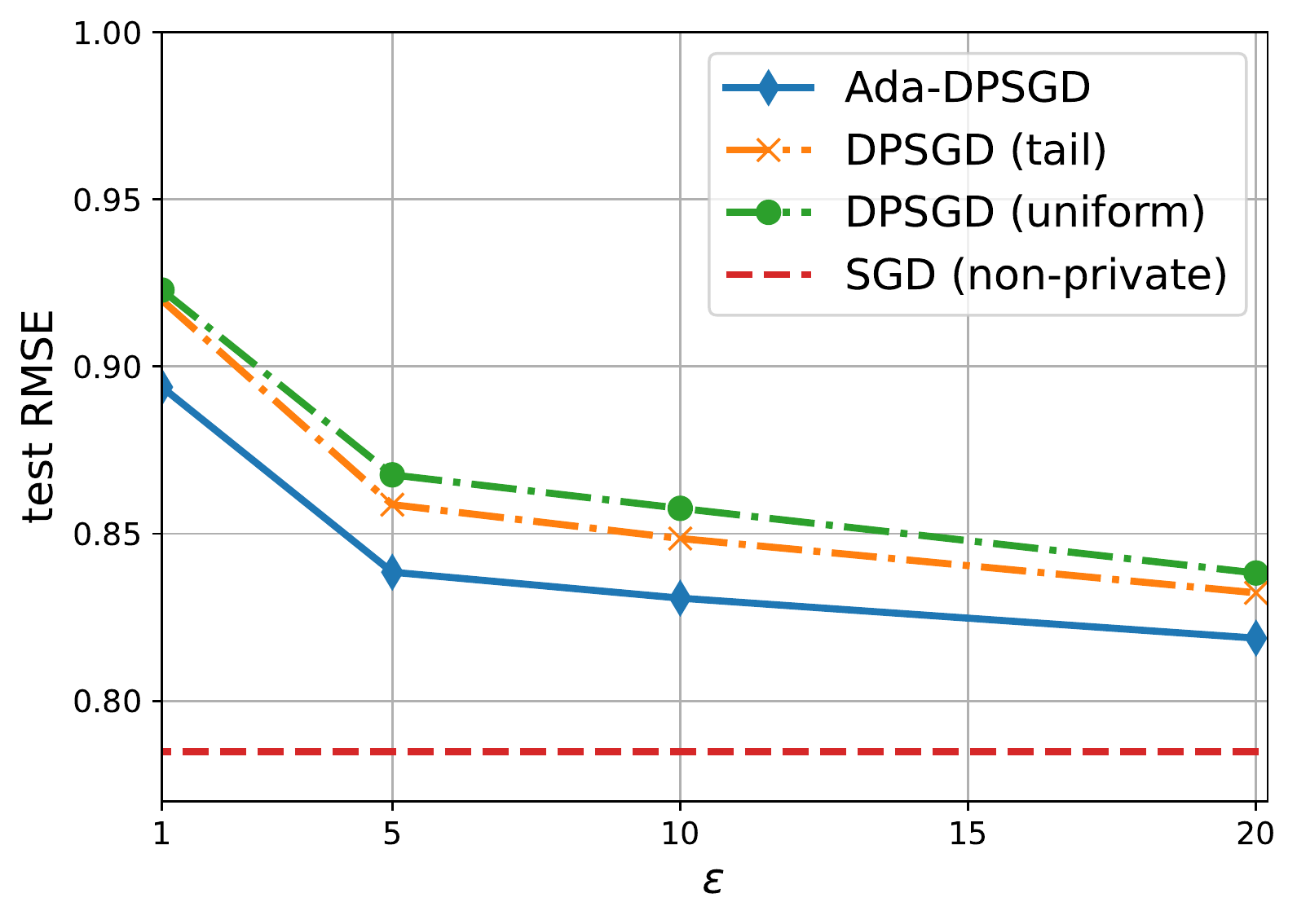}
\caption{Privacy-utility trade-off on ML10M, using uniform sampling, tail-biased sampling, and adaptive weights (our method), applied to DPALS and DPSGD. The utility is measured using RMSE (lower is better).}
\label{fig:trade-off}
\end{figure}
\begin{figure}[h!]
\centering
\includegraphics[width=\wdth]{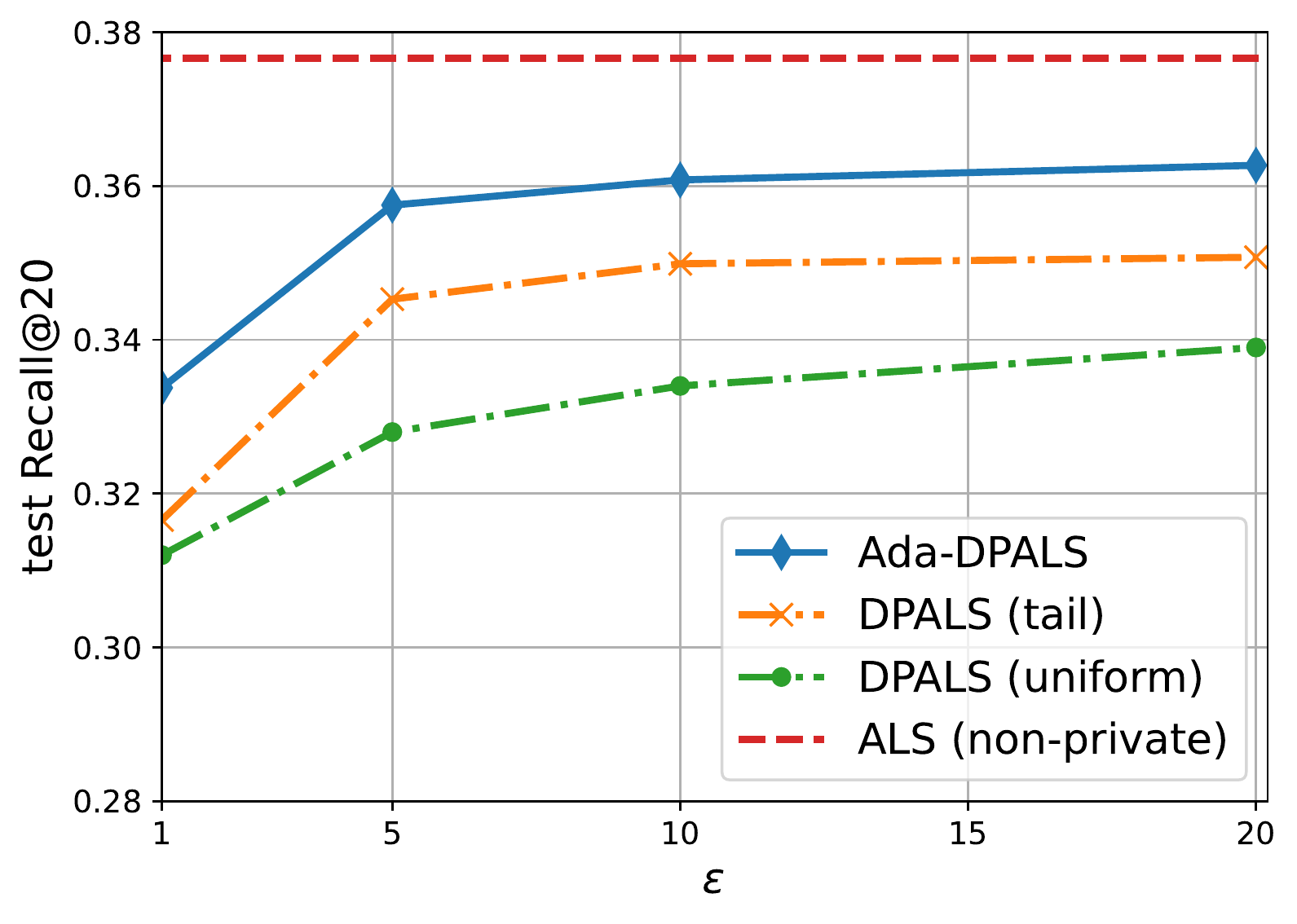}
\hspace{\figsep}
\includegraphics[width=\wdth]{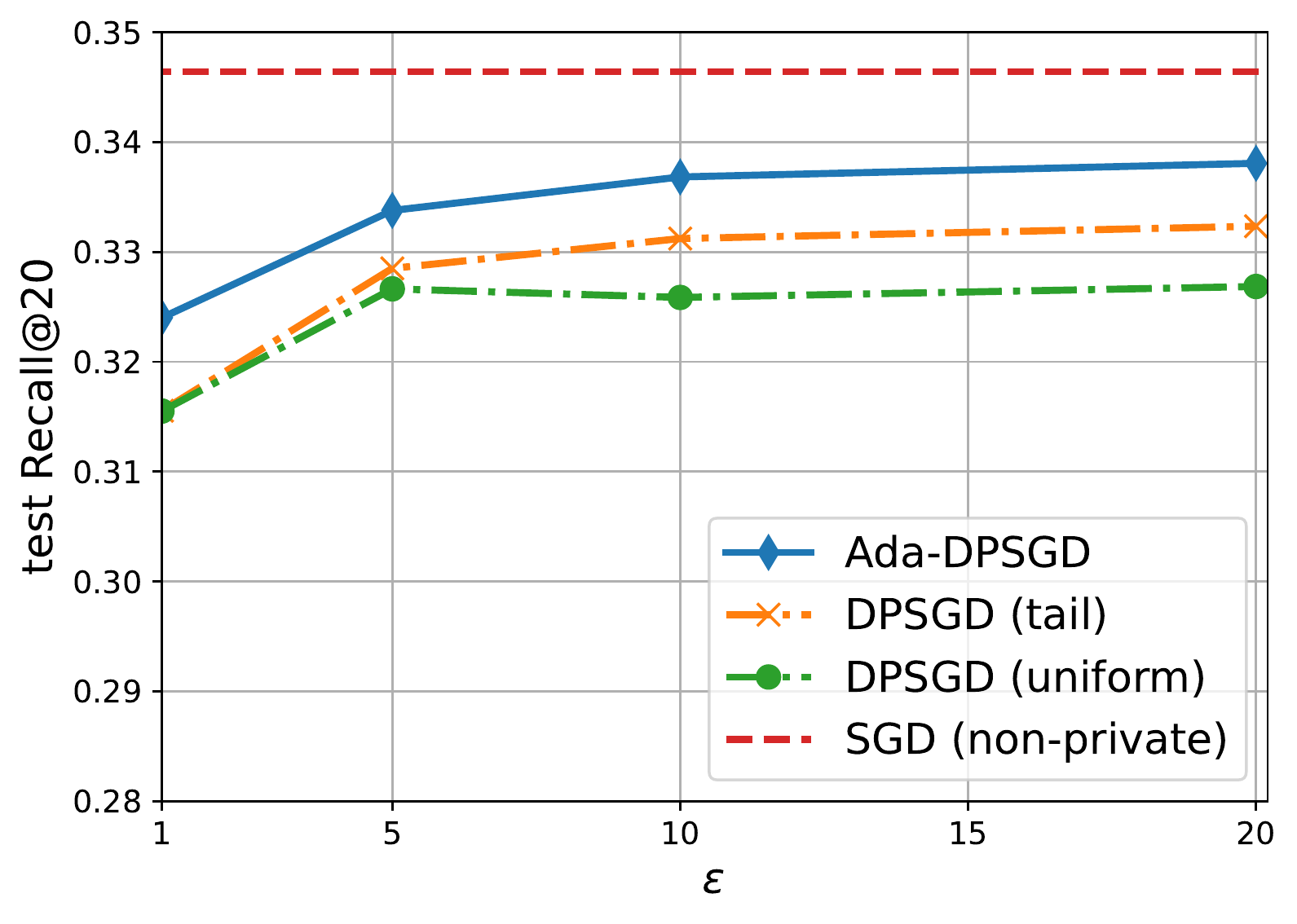}
\caption{Privacy-utility trade-off on ML20M, using uniform sampling, tail-biased sampling, and adaptive weights (our method), applied to DPALS and DPSGD. The utility is measured using Recall@20 (higher is better).}
\label{fig:trade-off2}
\end{figure}
\begin{figure}[h!]
\centering
\includegraphics[width=\wdth]{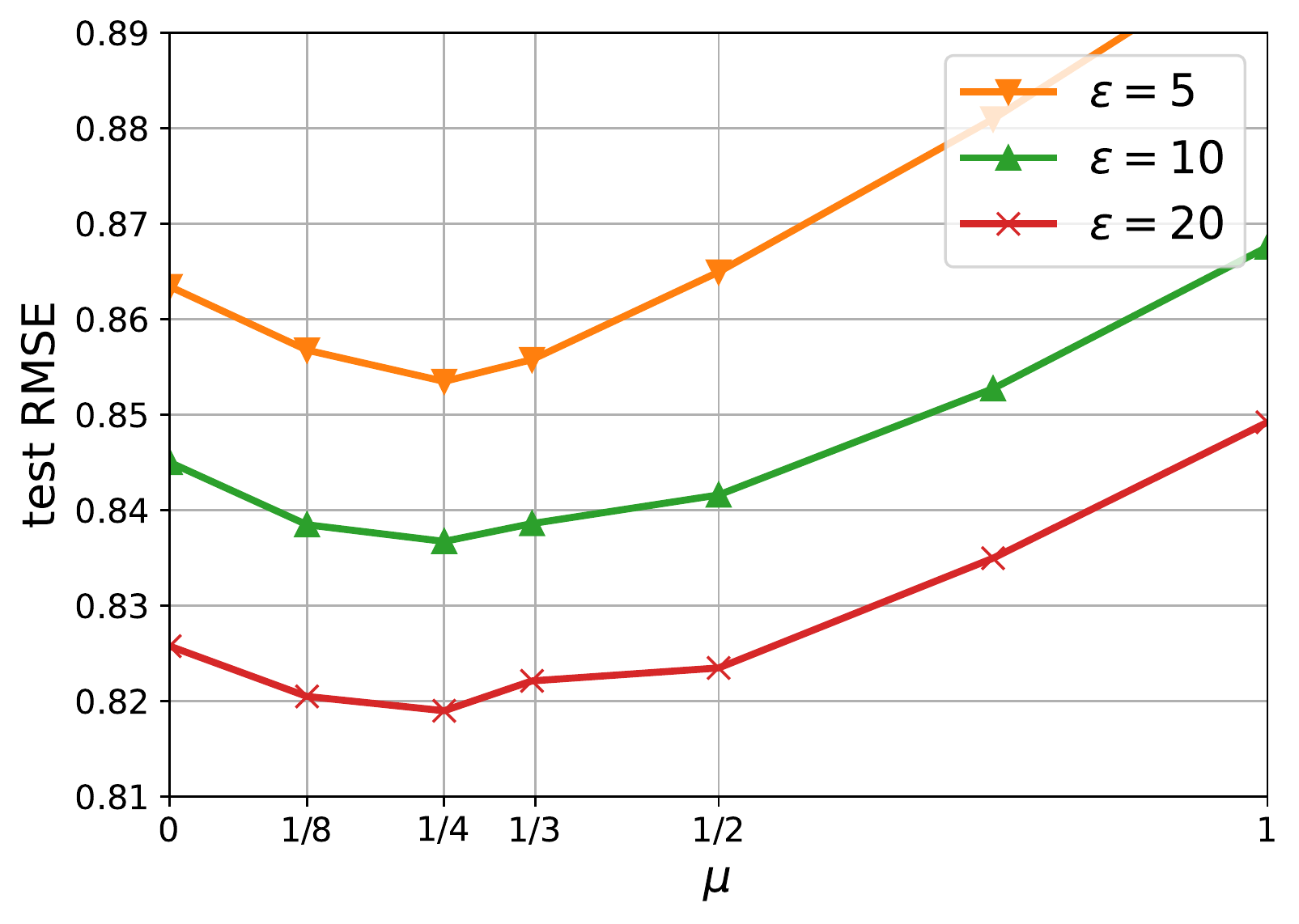}
\hspace{.1\textwidth}
\includegraphics[width=\wdth]{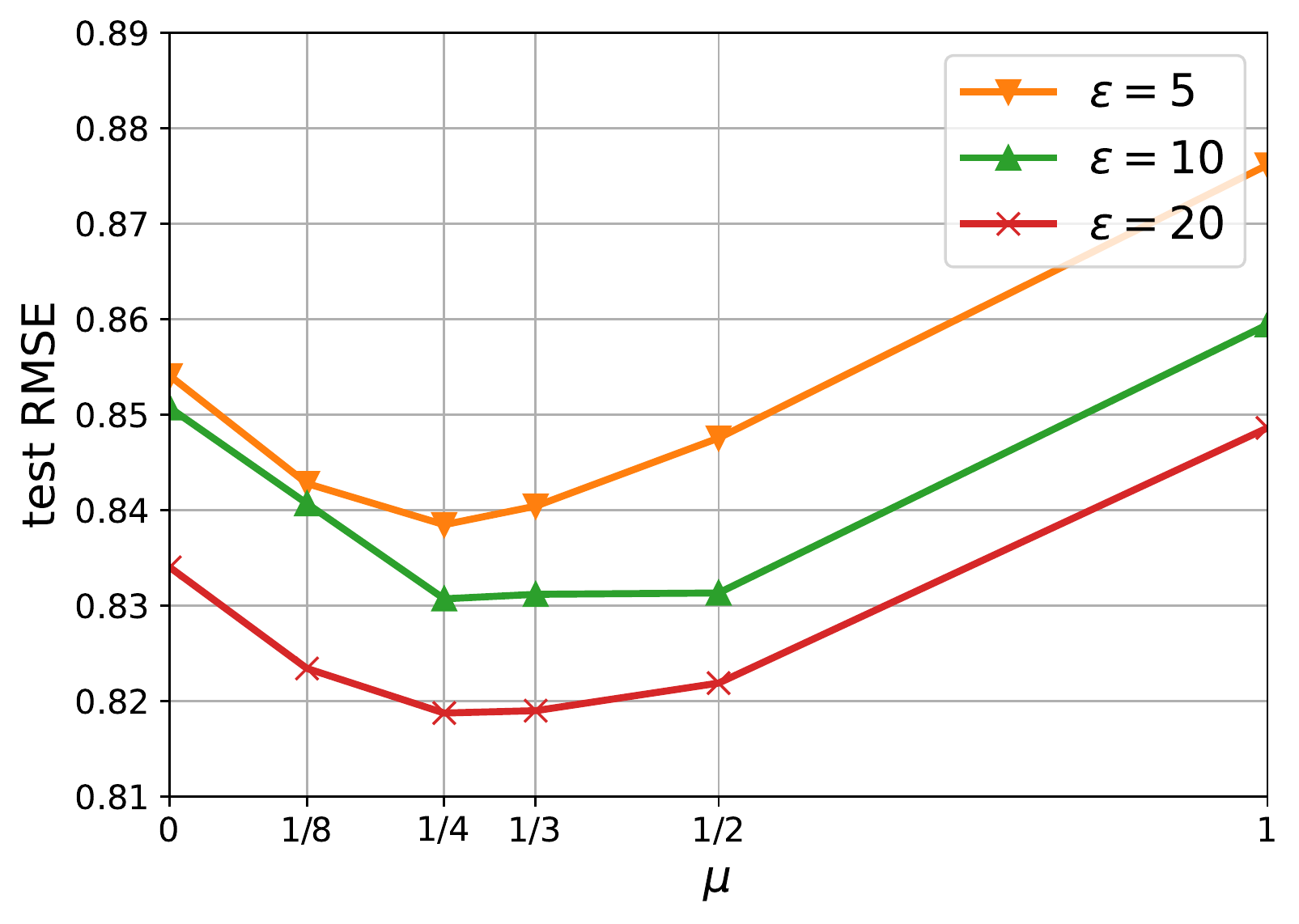}
\caption{Effect of the weight exponent $\mu$ on the ML10M benchmark, using DPALS and DPSGD.}
\label{fig:mu_ml10m}
\vspace{.2in}
\end{figure}

\paragraph{Effect of adaptive weights on privacy-utility trade-off}

We first evaluate the privacy/utility trade-off. The results are reported in Figure~\ref{fig:trade-off} (for ML10M) and Figure~\ref{fig:trade-off2} (for ML20M), where we also include non-private baselines for reference. Our adaptive weights methods achieve the state of the art on both benchmarks. We see improvements when applying adaptive weights both to DPSGD and DPALS. The extent of improvement varies depending on the benchmark. In particular, the improvement is remarkable on the ML20M benchmark across all values of $\epsilon$; using adaptive weights significantly narrows the gap between the previous SOTA and the non-private baseline. 

To illustrate the effect of the exponent $\mu$, we report, in Figure~\ref{fig:mu_ml10m}, the performance on ML10M for different values of $\mu$. The best performance is typically achieved when $\mu$ is between $\frac{1}{4}$ and $\frac{1}{2}$, a range consistent with the analysis. For larger values of $\mu$ (for example $\mu = 1$), there is a clear degradation of quality, likely due to assigning too little weight to head tasks. The trend is similar on ML20M, see Figure~\ref{fig:app_tradeoff_mu} in the appendix. 

\begin{figure}[h!]
\centering
\begin{subfigure}[b]{\wdth}
\centering
\includegraphics[width=\textwidth]{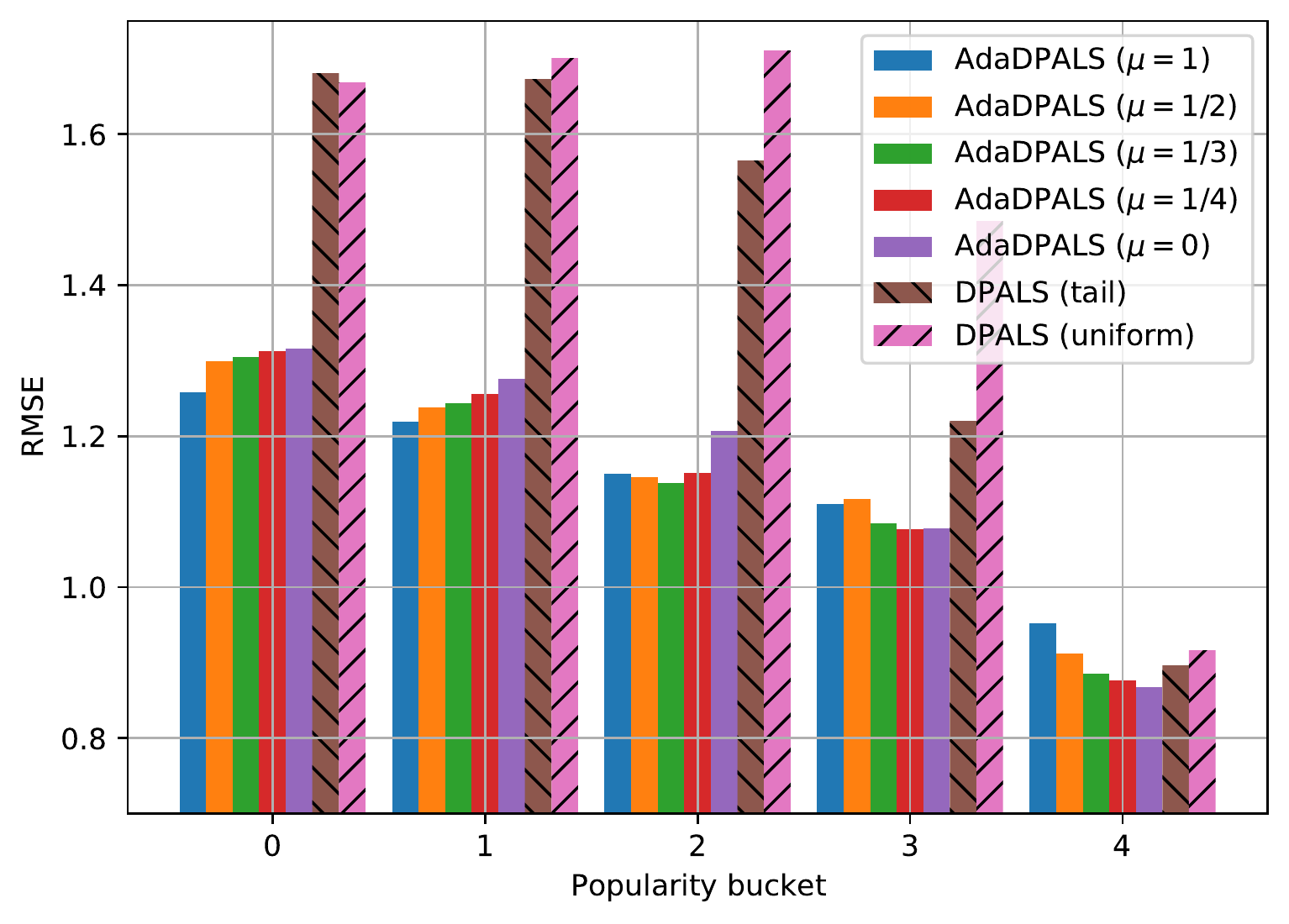}
\caption{RMSE on ML10M ($\epsilon = 1$).}
\end{subfigure}
\hspace{\figsep}
\begin{subfigure}[b]{\wdth}
\centering
\includegraphics[width=\textwidth]{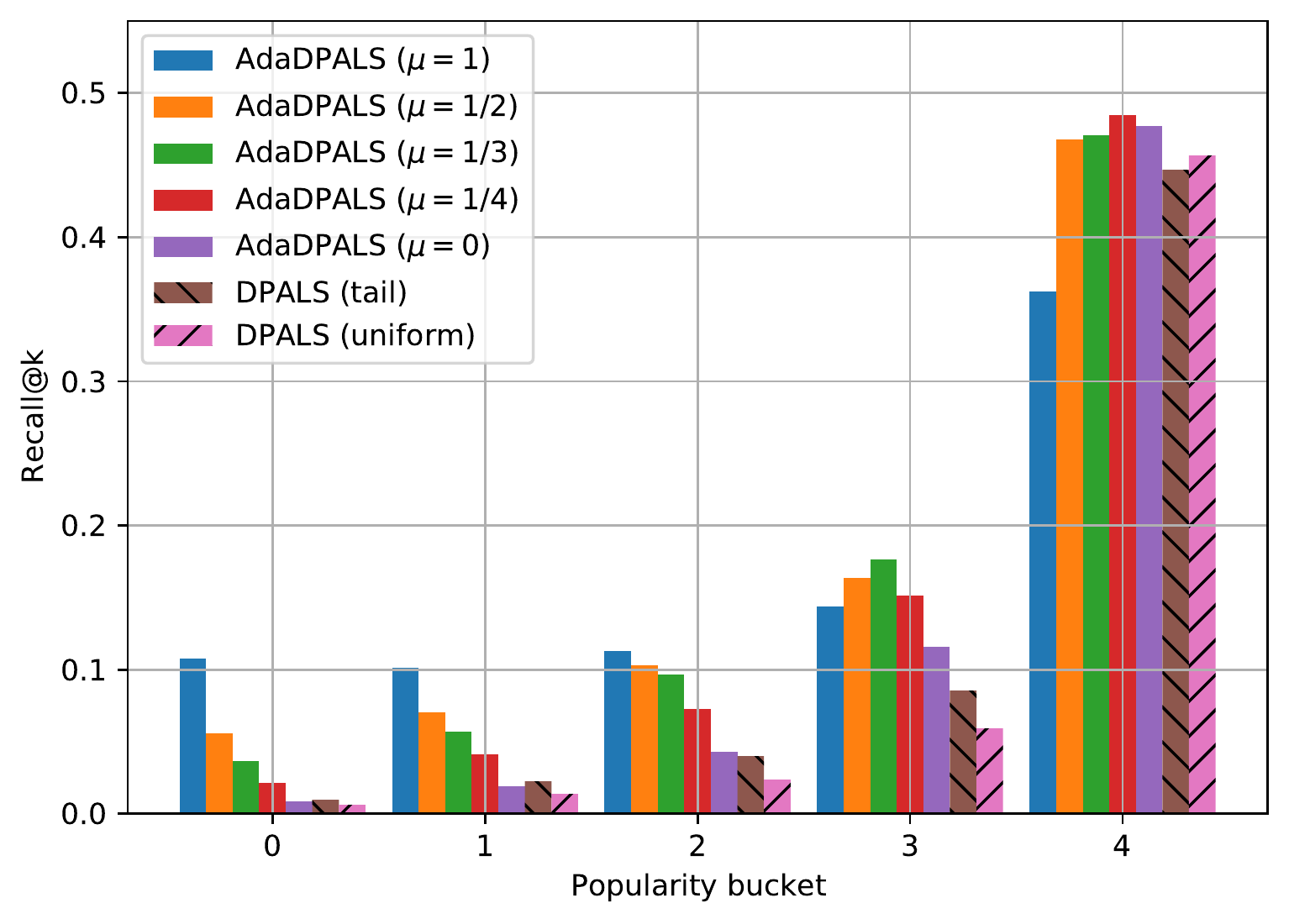}
\caption{Recall on ML20M ($\epsilon = 1$).}
\end{subfigure}
\caption{RMSE and Recall metrics, sliced by movie frequency. Each bucket contains an equal number of movies. Buckets are ordered by increasing movie counts.}
\label{fig:sliced}
\end{figure}

\paragraph{Impact on head and tail tasks}
To better understand the extent of quality impact on head/tail movies, we report the same metrics, sliced by movie counts. We sort the movies $i$ by increasing counts $n_i$, then group them into equally sized buckets, and compute average metrics\footnote{Since recall is naturally lower on tail tasks, we report Recall@k with larger $k$ for tail buckets: we use $k = 20$ on the most popular bucket, $40$ on the second, $60$ on the third, and so on. This allows for more readable plots.} on each bucket. The results are reported in Figure~\ref{fig:sliced} for $\epsilon = 1$, and similar plots are provided in Appendix~\ref{app:movielens_expt} for other values of $\epsilon$.

The results show a clear trade-off between head and tail tasks. On buckets $0$ and $1$ (tail), it is generally the case that the larger $\mu$ is, the better the quality, while on bucket $4$, the opposite trend can be observed. This is consistent with the intuition that as $\mu$ increases, more budget is assigned to the tail, which tends to shift the trade-off in favor of the tail.

Even compared to the previous SOTA (tail-sampling), the improvements on all but the top bucket are significant. Although the tail sampling heuristic was designed to improve tail quality, the experiments indicate that adaptive weights are much more effective. Furthermore, the parameter $\mu$ allows more control over this trade-off.

Finally, to give a concrete illustration of these improvements, we inspect the quality of recommendations on a few sample queries, reported in Appendix~\ref{app:movielens_expt}. We find that there is a visible improvement on tail recommendations in models trained using our method.


\section{Conclusion}
To address the practical problem of task distribution skew, we formally analyze the question of budget allocation among tasks under user-level DP. Our method is based on computing weights that adapt to the task distribution skew. We quantify utility improvements under optimal weights, in a range of settings. Importantly, the method achieves significant improvements on benchmarks, and allows finer control over quality trade-offs between head and tail tasks.

To compute optimal weights, our analysis relied on certain distributional assumptions on the task-user graph $\Omega$, and although this allowed us to model the task distribution skew, relaxing these assumptions is a promising direction. In particular, it may be possible to directly compute a privacy-preserving solution of problem~\eqref{eq:opt_exact} in its general form, using techniques similar to the constraint-private LPs studied by~\cite{hsu2014LP}.





\bibliography{ref}

\begin{thebibliography}{44}
\providecommand{\natexlab}[1]{#1}
\providecommand{\url}[1]{\texttt{#1}}
\expandafter\ifx\csname urlstyle\endcsname\relax
  \providecommand{\doi}[1]{doi: #1}\else
  \providecommand{\doi}{doi: \begingroup \urlstyle{rm}\Url}\fi

\bibitem[Abadi et~al.(2016)Abadi, Chu, Goodfellow, McMahan, Mironov, Talwar,
  and Zhang]{abadi2016dpsgd}
M.~Abadi, A.~Chu, I.~Goodfellow, H.~B. McMahan, I.~Mironov, K.~Talwar, and
  L.~Zhang.
\newblock Deep learning with differential privacy.
\newblock In \emph{Proceedings of the 2016 ACM SIGSAC Conference on Computer
  and Communications Security}, CCS '16, page 308–318, New York, NY, USA,
  2016. Association for Computing Machinery.

\bibitem[Abowd(2018)]{abowd2018census}
J.~M. Abowd.
\newblock The u.s. census bureau adopts differential privacy.
\newblock In \emph{Proceedings of the 24th ACM SIGKDD International Conference
  on Knowledge Discovery \& Data Mining}, KDD '18, page 2867, New York, NY,
  USA, 2018. Association for Computing Machinery.

\bibitem[Amin et~al.(2019)Amin, Kulesza, Munoz, and
  Vassilvtiskii]{amin2019bounding}
K.~Amin, A.~Kulesza, A.~Munoz, and S.~Vassilvtiskii.
\newblock Bounding user contributions: A bias-variance trade-off in
  differential privacy.
\newblock In K.~Chaudhuri and R.~Salakhutdinov, editors, \emph{Proceedings of
  the 36th International Conference on Machine Learning}, volume~97 of
  \emph{Proceedings of Machine Learning Research}, pages 263--271. PMLR, 09--15
  Jun 2019.

\bibitem[Amin et~al.(2022)Amin, Gillenwater, Joseph, Kulesza, and
  Vassilvitskii]{plume}
K.~Amin, J.~Gillenwater, M.~Joseph, A.~Kulesza, and S.~Vassilvitskii.
\newblock Plume: Differential privacy at scale.
\newblock \emph{CoRR}, abs/2201.11603, 2022.

\bibitem[Bagdasaryan et~al.(2019)Bagdasaryan, Poursaeed, and
  Shmatikov]{bagdasaryan2019}
E.~Bagdasaryan, O.~Poursaeed, and V.~Shmatikov.
\newblock Differential privacy has disparate impact on model accuracy.
\newblock In \emph{Advances in Neural Information Processing Systems},
  volume~32, 2019.

\bibitem[Bassily et~al.(2014)Bassily, Smith, and Thakurta]{bassily2014erm}
R.~Bassily, A.~Smith, and A.~Thakurta.
\newblock Private empirical risk minimization: Efficient algorithms and tight
  error bounds.
\newblock In \emph{2014 IEEE 55th Annual Symposium on Foundations of Computer
  Science}, pages 464--473, 2014.

\bibitem[Bertin-Mahieux et~al.(2011)Bertin-Mahieux, Ellis, Whitman, and
  Lamere]{bertinmahieux2011msd}
T.~Bertin-Mahieux, D.~P.~W. Ellis, B.~Whitman, and P.~Lamere.
\newblock The million song dataset.
\newblock In \emph{In Proceedings of the 12th International Conference on Music
  Information Retrieval (ISMIR)}, 2011.

\bibitem[Boucheron et~al.(2013)Boucheron, Lugosi, and
  Massart]{boucheron2013concentration}
S.~Boucheron, G.~Lugosi, and P.~Massart.
\newblock \emph{Concentration inequalities: A nonasymptotic theory of
  independence}.
\newblock Oxford university press, 2013.

\bibitem[Calandrino et~al.(2011)Calandrino, Kilzer, Narayanan, Felten, and
  Shmatikov]{calandrino2011you}
J.~A. Calandrino, A.~Kilzer, A.~Narayanan, E.~W. Felten, and V.~Shmatikov.
\newblock ``you might also like:'' privacy risks of collaborative filtering.
\newblock In \emph{2011 IEEE symposium on security and privacy}, pages
  231--246. IEEE, 2011.

\bibitem[Chien et~al.(2021)Chien, Jain, Krichene, Rendle, Song, Thakurta, and
  Zhang]{DPALS}
S.~Chien, P.~Jain, W.~Krichene, S.~Rendle, S.~Song, A.~Thakurta, and L.~Zhang.
\newblock Private alternating least squares: Practical private matrix
  completion with tighter rates.
\newblock In \emph{Proceedings of the 38th International Conference on Machine
  Learning}, 2021.

\bibitem[Cummings et~al.(2022)Cummings, Feldman, McMillan, and
  Talwar]{cummings2022mean}
R.~Cummings, V.~Feldman, A.~McMillan, and K.~Talwar.
\newblock Mean estimation with user-level privacy under data heterogeneity.
\newblock In A.~H. Oh, A.~Agarwal, D.~Belgrave, and K.~Cho, editors,
  \emph{Advances in Neural Information Processing Systems}, 2022.

\bibitem[Dwork and Roth(2014)]{dwork2014algorithmic}
C.~Dwork and A.~Roth.
\newblock The algorithmic foundations of differential privacy.
\newblock \emph{Foundations and Trends in Theoretical Computer Science},
  9\penalty0 (3--4):\penalty0 211--407, 2014.

\bibitem[Dwork et~al.(2006)Dwork, McSherry, Nissim, and
  Smith]{dwork2006calibrating}
C.~Dwork, F.~McSherry, K.~Nissim, and A.~Smith.
\newblock Calibrating noise to sensitivity in private data analysis.
\newblock In \emph{Proceedings of the Third Conference on Theory of
  Cryptography}, TCC'06, page 265–284, Berlin, Heidelberg, 2006.
  Springer-Verlag.

\bibitem[Dwork et~al.(2007)Dwork, McSherry, and Talwar]{dwork2007price}
C.~Dwork, F.~McSherry, and K.~Talwar.
\newblock The price of privacy and the limits of lp decoding.
\newblock In \emph{Proceedings of the thirty-ninth annual ACM Symposium on
  Theory of Computing}, pages 85--94, 2007.

\bibitem[Dwork et~al.(2010)Dwork, Rothblum, and Vadhan]{dwork2010boosting}
C.~Dwork, G.~N. Rothblum, and S.~Vadhan.
\newblock Boosting and differential privacy.
\newblock In \emph{2010 IEEE 51st Annual Symposium on Foundations of Computer
  Science}, pages 51--60, 2010.

\bibitem[Epasto et~al.(2020)Epasto, Mahdian, Mao, Mirrokni, and
  Ren]{epasto2020smoothly}
A.~Epasto, M.~Mahdian, J.~Mao, V.~Mirrokni, and L.~Ren.
\newblock Smoothly bounding user contributions in differential privacy.
\newblock In H.~Larochelle, M.~Ranzato, R.~Hadsell, M.~Balcan, and H.~Lin,
  editors, \emph{Advances in Neural Information Processing Systems}, volume~33,
  pages 13999--14010. Curran Associates, Inc., 2020.

\bibitem[Feldman and Zrnic(2021)]{feldman2021individual}
V.~Feldman and T.~Zrnic.
\newblock Individual privacy accounting via a r\'enyi filter.
\newblock In A.~Beygelzimer, Y.~Dauphin, P.~Liang, and J.~W. Vaughan, editors,
  \emph{Advances in Neural Information Processing Systems}, 2021.

\bibitem[Foulds et~al.(2016)Foulds, Geumlek, Welling, and
  Chaudhuri]{foulds2016ssp}
J.~Foulds, J.~Geumlek, M.~Welling, and K.~Chaudhuri.
\newblock On the theory and practice of privacy-preserving bayesian data
  analysis.
\newblock In \emph{Proceedings of the Thirty-Second Conference on Uncertainty
  in Artificial Intelligence}, UAI'16, page 192–201, Arlington, Virginia,
  USA, 2016. AUAI Press.

\bibitem[Hardt and Rothblum(2010)]{hardt2010mw}
M.~Hardt and G.~N. Rothblum.
\newblock A multiplicative weights mechanism for privacy-preserving data
  analysis.
\newblock In \emph{2010 IEEE 51st Annual Symposium on Foundations of Computer
  Science}, pages 61--70, 2010.

\bibitem[Harper and Konstan(2016)]{harper16movielens}
F.~M. Harper and J.~A. Konstan.
\newblock The movielens datasets: History and context.
\newblock \emph{Acm Transactions on Interactive Intelligent Systems (TiiS)},
  5\penalty0 (4):\penalty0 19, 2016.

\bibitem[Hsu et~al.(2014)Hsu, Roth, Roughgarden, and Ullman]{hsu2014LP}
J.~Hsu, A.~Roth, T.~Roughgarden, and J.~Ullman.
\newblock Privately solving linear programs.
\newblock In \emph{Automata, Languages, and Programming}, pages 612--624,
  Berlin, Heidelberg, 2014. Springer Berlin Heidelberg.

\bibitem[Hu et~al.(2021)Hu, Wu, and Smith]{hu2022multitask}
S.~Hu, Z.~S. Wu, and V.~Smith.
\newblock Private multi-task learning: Formulation and applications to
  federated learning.
\newblock \emph{CoRR}, abs/2108.12978, 2021.

\bibitem[Jain et~al.(2013)Jain, Netrapalli, and Sanghavi]{jain2013lowrank}
P.~Jain, P.~Netrapalli, and S.~Sanghavi.
\newblock Low-rank matrix completion using alternating minimization.
\newblock In \emph{Proceedings of the Forty-Fifth Annual ACM Symposium on
  Theory of Computing}, STOC '13, page 665–674, New York, NY, USA, 2013.
  Association for Computing Machinery.
\newblock ISBN 9781450320290.

\bibitem[Jain et~al.(2018)Jain, Thakkar, and Thakurta]{jain2018differentially}
P.~Jain, O.~D. Thakkar, and A.~Thakurta.
\newblock Differentially private matrix completion revisited.
\newblock In \emph{International Conference on Machine Learning}, pages
  2215--2224. PMLR, 2018.

\bibitem[Jain et~al.(2021)Jain, Rush, Smith, Song, and
  Guha~Thakurta]{jain2021private}
P.~Jain, J.~Rush, A.~Smith, S.~Song, and A.~Guha~Thakurta.
\newblock Differentially private model personalization.
\newblock In \emph{Advances in Neural Information Processing Systems},
  volume~34, pages 29723--29735. Curran Associates, Inc., 2021.

\bibitem[Kasiviswanathan et~al.(2013)Kasiviswanathan, Nissim, Raskhodnikova,
  and Smith]{kasiviswanathan2013}
S.~P. Kasiviswanathan, K.~Nissim, S.~Raskhodnikova, and A.~Smith.
\newblock Analyzing graphs with node differential privacy.
\newblock In \emph{Proceedings of the 10th Theory of Cryptography Conference on
  Theory of Cryptography}, TCC'13, page 457–476, Berlin, Heidelberg, 2013.
  Springer-Verlag.

\bibitem[Kearns et~al.(2014)Kearns, Pai, Roth, and Ullman]{kearns2014mechanism}
M.~Kearns, M.~Pai, A.~Roth, and J.~Ullman.
\newblock Mechanism design in large games: Incentives and privacy.
\newblock In \emph{Proceedings of the 5th conference on Innovations in
  theoretical computer science}, pages 403--410, 2014.

\bibitem[Korolova(2010)]{korolova2010privacy}
A.~Korolova.
\newblock Privacy violations using microtargeted ads: A case study.
\newblock In \emph{2010 IEEE International Conference on Data Mining
  Workshops}, pages 474--482. IEEE, 2010.

\bibitem[Kubat and Matwin(1997)]{kubat1997addressing}
M.~Kubat and S.~Matwin.
\newblock Addressing the curse of imbalanced training sets: One-sided
  selection.
\newblock In \emph{Proceedings of the Fourteenth International Conference on
  Machine Learning {(ICML} 1997), Nashville, Tennessee, USA, July 8-12, 1997},
  pages 179--186, 1997.

\bibitem[Lee et~al.(2013)Lee, Kim, Lebanon, and Singer]{lee13llorma}
J.~Lee, S.~Kim, G.~Lebanon, and Y.~Singer.
\newblock Local low-rank matrix approximation.
\newblock In \emph{Proceedings of the 30th International Conference on
  International Conference on Machine Learning - Volume 28}, ICML'13, pages
  II--82--II--90, 2013.

\bibitem[Levy et~al.(2021)Levy, Sun, Amin, Kale, Kulesza, Mohri, and
  Suresh]{levy2021learning}
D.~Levy, Z.~Sun, K.~Amin, S.~Kale, A.~Kulesza, M.~Mohri, and A.~T. Suresh.
\newblock Learning with user-level privacy.
\newblock In \emph{Advances in Neural Information Processing Systems},
  volume~34, pages 12466--12479, 2021.

\bibitem[Li et~al.(2020)Li, Khodak, Caldas, and Talwalkar]{li2020metalearning}
J.~Li, M.~Khodak, S.~Caldas, and A.~Talwalkar.
\newblock Differentially private meta-learning.
\newblock In \emph{8th International Conference on Learning Representations,
  {ICLR} 2020}, 2020.

\bibitem[Liang et~al.(2018)Liang, Krishnan, Hoffman, and Jebara]{liang18vae}
D.~Liang, R.~G. Krishnan, M.~D. Hoffman, and T.~Jebara.
\newblock Variational autoencoders for collaborative filtering.
\newblock WWW '18, page 689–698, 2018.

\bibitem[Liu and Zheng(2020)]{liu2020longtail}
S.~Liu and Y.~Zheng.
\newblock \emph{Long-Tail Session-Based Recommendation}, page 509–514.
\newblock Association for Computing Machinery, New York, NY, USA, 2020.

\bibitem[McMahan et~al.(2018)McMahan, Ramage, Talwar, and Zhang]{mcmahan2018}
H.~B. McMahan, D.~Ramage, K.~Talwar, and L.~Zhang.
\newblock Learning differentially private recurrent language models.
\newblock In \emph{6th International Conference on Learning Representations,
  {ICLR} 2018, Vancouver, BC, Canada}, 2018.

\bibitem[Mironov(2017)]{mironov2017renyi}
I.~Mironov.
\newblock R{\'e}nyi differential privacy.
\newblock In \emph{2017 IEEE 30th Computer Security Foundations Symposium
  (CSF)}, pages 263--275. IEEE, 2017.

\bibitem[Proserpio et~al.(2014)Proserpio, Goldberg, and
  McSherry]{proserpio2014wPinq}
D.~Proserpio, S.~Goldberg, and F.~McSherry.
\newblock Calibrating data to sensitivity in private data analysis: A platform
  for differentially-private analysis of weighted datasets.
\newblock \emph{Proc. VLDB Endow.}, 7\penalty0 (8):\penalty0 637–648, apr
  2014.

\bibitem[Rogers et~al.(2021)Rogers, Subramaniam, Peng, Durfee, Lee, Kancha,
  Sahay, and Ahammad]{rogers2021linkedin}
R.~Rogers, S.~Subramaniam, S.~Peng, D.~Durfee, S.~Lee, S.~K. Kancha, S.~Sahay,
  and P.~Ahammad.
\newblock Linkedin’s audience engagements api: A privacy preserving data
  analytics system at scale.
\newblock \emph{Journal of Privacy and Confidentiality}, 11\penalty0 (3), Dec.
  2021.

\bibitem[Shokri et~al.(2017)Shokri, Stronati, Song, and
  Shmatikov]{shokri2017membership}
R.~Shokri, M.~Stronati, C.~Song, and V.~Shmatikov.
\newblock Membership inference attacks against machine learning models.
\newblock In \emph{2017 IEEE Symposium on Security and Privacy (SP)}, pages
  3--18. IEEE, 2017.

\bibitem[Ullman(2015)]{ullman2015private}
J.~Ullman.
\newblock Private multiplicative weights beyond linear queries.
\newblock In \emph{Proceedings of the 34th ACM SIGMOD-SIGACT-SIGAI Symposium on
  Principles of Database Systems}, PODS '15, page 303–312, New York, NY, USA,
  2015. Association for Computing Machinery.

\bibitem[Wang(2018)]{wang2018revisiting}
Y.~Wang.
\newblock Revisiting differentially private linear regression: optimal and
  adaptive prediction {\&} estimation in unbounded domain.
\newblock In A.~Globerson and R.~Silva, editors, \emph{Proceedings of the
  Thirty-Fourth Conference on Uncertainty in Artificial Intelligence, {UAI}
  2018, Monterey, California, USA, August 6-10, 2018}, pages 93--103. {AUAI}
  Press, 2018.

\bibitem[Wilson et~al.(2020)Wilson, Zhang, Lam, Desfontaines, Simmons-Marengo,
  and Gipson]{DPSQL}
R.~J. Wilson, C.~Y. Zhang, W.~Lam, D.~Desfontaines, D.~Simmons-Marengo, and
  B.~Gipson.
\newblock Differentially private sql with bounded user contribution.
\newblock In \emph{Privacy Enhancing Technologies Symposium (PETS)}, 2020.

\bibitem[Xu et~al.(2021)Xu, Du, and Wu]{xu2021removing}
D.~Xu, W.~Du, and X.~Wu.
\newblock Removing disparate impact on model accuracy in differentially private
  stochastic gradient descent.
\newblock In \emph{Proceedings of the 27th ACM SIGKDD Conference on Knowledge
  Discovery \& Data Mining}, KDD '21, page 1924–1932, New York, NY, USA,
  2021. Association for Computing Machinery.

\bibitem[Yin et~al.(2012)Yin, Cui, Li, Yao, and Chen]{yin2012longtail}
H.~Yin, B.~Cui, J.~Li, J.~Yao, and C.~Chen.
\newblock Challenging the long tail recommendation.
\newblock \emph{Proc. VLDB Endow.}, 5\penalty0 (9):\penalty0 896–907, may
  2012.

\end{thebibliography}
\bibliographystyle{abbrvnat}

\newpage
\appendix
\section*{Appendix}
The proofs of the main results are provided in Appendix~\ref{app:proofs}. Appendix~\ref{app:private_counts} discusses the case when task counts are not public. We report additional experiments on synthetic data (Appendix~\ref{app:synthetic_expt}) and real data (Appendix~\ref{app:movielens_expt}).


\section{Proofs}
\label{app:proofs}
\subsection{Theorem~\ref{thm:privacy-ssp} (Privacy guarantee of Algorithm~\ref{alg:ssp})}

\begin{proof}
The result is an application of the Gaussian mechanism. The procedure computes, for $i \in [m]$, the estimates $\hat A_i$ and $\hat b_i$ (Lines~\ref{line:Ai}-\ref{line:bi}), given as follows
\begin{align*}
&\hat A_i = \bar A_i + \Gamma_x^2 \Xi_i, \quad \bar A_i = \sum_{j \in \Omega_i} w_{ij} (x_{ij} x_{ij}^\top + \lambda I) \\
&\hat b_i = \bar b_i + \Gamma_x\Gamma_y \xi_i, \quad \bar b_i = \sum_{j \in \Oi} w_{ij} y_{ij}x_{ij}
\end{align*}
where $\Xi_i \sim \Ncal^{d \times d}$, $\xi_i \sim \Ncal^d$. Let $\bar A$ be the matrix obtained by stacking $(\bar A_i)_{i \in [n]}$. If $\bar A'$ is the same matrix obtained without user $j$'s data, then
\[
\|\bar A - \bar A'\|_F^2
= \sum_{i \in \Oj} \|w_{ij}x_{ij} x_{ij}^\top\|_F^2
\leq \sum_{i \in \Oj} w_{ij}^2 \Gamma_x^4 \leq \beta \Gamma_x^4
\]
where we use the fact that for all $i$, $\|x_{ij}\| \leq \Gamma_x$ and $\sum_{j \in \Omega_i} w_{ij}^2 \leq \beta$. Since we add Gaussian noise with variance $\Gamma_x^4$, releasing $\hat A$ is $(\alpha, \alpha \frac{\beta}{2})$-RDP~\citep{mironov2017renyi}. 

Similarly, if $\bar b$ is obtained by stacking $(\bar b_i)_{i \in [m]}$, and $\bar b'$ is the same vector without user $j$'s data, then $\|\bar b - \bar b'\|^2 \leq \sum_{i \in \Oj} \|w_{ij}y_{ij}x_{ij}\|^2 \leq  \sum_{j \in \Omega_i} w_{ij}^2\Gamma_y^2\Gamma_x^2 \leq \beta\Gamma_y^2\Gamma_x^2$, and releasing $\hat b$ is $(\alpha, \alpha \frac{\beta}{2})$-RDP. By simple RDP composition, the process is $(\alpha, \alpha \beta)$-RDP.
\end{proof}


\subsection{Theorem~\ref{thm:tradeoff-ssp} (Utility guarantee of Algorithm~\ref{alg:ssp})}
\begin{proof}
Since by assumption, the weights satisfy $\sum_{i \in \Omega_j} \omega_i^2 \leq \beta$, the RDP guarantee is an immediate consequence of Theorem~\ref{thm:privacy-ssp}.

To prove the utility bound, recall that the total loss is a sum over tasks
\begin{equation}
L(\theta) = \sum_{i = 1}^n \|A_i\theta_i - b_i\|_F^2 + n_i\lambda\|\theta_i\|^2.\label{proof2:loss_decomposition}
\end{equation}
We will bound the excess risk of each term, using the following result. For a proof, see, e.g.~\citep[Appendix~B.1]{wang2018revisiting}.
\begin{lemma}
\label{lem:ssp}
Suppose Assumption~\ref{assump:SSP} holds. Consider the linear regression problem $L(\theta_i) = \|A_i\theta_i - b_i\|_F^2$, let $\theta_i^*$ be its solution, and $\hat \theta_i$ be the SSP estimate obtained by replacing $A_i$ and $b_i$ with their noisy estimates $\hat A_i = A_i + \sigma \Gamma_x^2 \Xi, \hat b_i = b_i + \sigma \Gamma_x^2\Gamma_* \xi$ where $\Xi \sim \Ncal^{d\times d}, \xi\sim\Ncal^d$. Then
\[
L_i(\hat \theta_i) - L_i(\theta_i^*) = \bigO{\frac{d^2\Gamma_x^2\Gamma_{*}^2}{\alpha n_i}\sigma^2},
\]
where $\alpha = \frac{\lambda_{\min}(A_i^\top A_i)d}{n_i\Gamma_x^2} = \frac{\lambda d}{\Gamma_x^2}$.
\end{lemma}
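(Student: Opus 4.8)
The plan is to reduce the excess risk to a quadratic form in the estimation error $\Delta_i := \hat\theta_i - \theta_i^*$, linearize the perturbed inverse $\hat A_i^{-1}$ through a Neumann expansion, and then evaluate the resulting Gaussian quadratic form using the curvature lower bound $\lambda_{\min}(A_i) \geq n_i\lambda$ that the ridge term $\lambda I$ provides. First I would use the quadratic structure of the objective: since $\theta_i^*$ is the minimizer, its gradient vanishes (equivalently $b_i = A_i \theta_i^*$), so expanding $L_i(\theta_i^* + \Delta_i)$ and cancelling the first-order term gives the exact identity $L_i(\hat\theta_i) - L_i(\theta_i^*) = \Delta_i^\top A_i \Delta_i$, where $A_i = \sum_{j \in \Oi} x_{ij}x_{ij}^\top + n_i\lambda I$ is the (regularized) curvature matrix.

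Next I would estimate $\Delta_i$. Writing the noise as $E_i = \sigma\Gamma_x^2 \Xi_i$ and $e_i = \sigma\Gamma_x^2\Gamma_* \xi_i$, so that $\hat A_i = A_i + E_i$ and $\hat b_i = b_i + e_i$, the expansion $(A_i + E_i)^{-1} = A_i^{-1} - A_i^{-1} E_i A_i^{-1} + \cdots$ together with $b_i = A_i\theta_i^*$ yields a leading-order error $\Delta_i = A_i^{-1}(e_i - E_i\theta_i^*) + r_i$, where $r_i$ gathers the higher-order terms. Substituting the leading part into the quadratic form of Step~1 gives $(e_i - E_i\theta_i^*)^\top A_i^{-1}(e_i - E_i\theta_i^*)$. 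Here $e_i$ and $E_i\theta_i^*$ are independent Gaussian vectors, each with covariance $\preceq \sigma^2\Gamma_x^4\Gamma_*^2 I$ (for $E_i\theta_i^*$ one uses the symmetric-Gaussian structure of $\Xi_i$ and $\|\theta_i^*\| \leq \Gamma_*$). Taking expectations, the term is therefore at most $\sigma^2\Gamma_x^4\Gamma_*^2\,\mathrm{tr}(A_i^{-1})$, and since $A_i \succeq n_i\lambda I$ we have $\mathrm{tr}(A_i^{-1}) \leq d/(n_i\lambda)$. This produces the order $\sigma^2\Gamma_x^4\Gamma_*^2 d/(n_i\lambda)$, which is exactly the claimed bound after substituting the stated value of $\alpha$.

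Two steps remain, and I expect the second to be the crux. To pass from expectation to the high-probability statement used in Theorem~\ref{thm:tradeoff-ssp}, I would concentrate the Gaussian quadratic form via a Hanson--Wright / chi-square tail bound, which costs only polylog factors. The main obstacle is controlling the remainder $r_i$, i.e.\ justifying that inverting the noisy matrix $\hat A_i$ rather than $A_i$ changes the answer only at lower order: this requires the operator-norm bound $\|E_i\|_{\mathrm{op}} = \sigma\Gamma_x^2\|\Xi_i\|_{\mathrm{op}} \lesssim \sigma\Gamma_x^2\sqrt d$ (w.h.p., for a symmetric Gaussian matrix) to be dominated by $\lambda_{\min}(A_i) = n_i\lambda$, so that the Neumann series converges and the tail terms are negligible. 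Once this spectral domination is in place, the remaining estimates are routine.
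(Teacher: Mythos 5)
First, a point of comparison: the paper does not actually prove this lemma---its ``proof'' is the citation to \citet[Appendix~B.1]{wang2018revisiting}---so there is no in-paper argument for you to match, and your from-scratch perturbation analysis is the right kind of argument to attempt. Your leading-order computation is correct and reproduces the stated bound: with the ridge Hessian $A_i \succeq n_i\lambda I$, the exact identities $L_i(\hat\theta_i) - L_i(\theta_i^*) = \Delta_i^\top A_i \Delta_i$ and $\Delta_i = \hat A_i^{-1}(e_i - E_i\theta_i^*)$, the covariance bound for the symmetric Gaussian $\Xi_i$, and $\mathrm{tr}(A_i^{-1}) \leq d/(n_i\lambda)$ combine to give $\Ocal\big(\sigma^2\Gamma_x^4\Gamma_*^2 d/(n_i\lambda)\big)$, which is the lemma's bound once $\alpha = \lambda d/\Gamma_x^2$ is substituted. (You also quietly repaired the paper's statement, which conflates the design matrix with the Gram matrix: under its own notation $\lambda_{\min}(A_i^\top A_i)$ would be $(n_i\lambda)^2$, not $n_i\lambda$.)

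The genuine gap is the step you flag and then defer. The spectral-domination condition $\sigma\Gamma_x^2\sqrt d \lesssim \lambda_{\min}(A_i)$, i.e.\ $n_i\lambda \gtrsim \sigma\Gamma_x^2\sqrt d$, is not implied by any hypothesis of the lemma: it is an extra constraint tying $\sigma$ to $n_i$, $\lambda$ and $d$, and it is precisely the constraint that fails in the regime the paper cares most about (small tail tasks and small weights $\omega_i$, hence large $\sigma = 1/\omega_i$ when the lemma is invoked in Theorem~\ref{thm:tradeoff-ssp}). When it fails, the Neumann series diverges and no routine estimate rescues the remainder: the actual estimator in Algorithm~\ref{alg:ssp} is $\hat A_i^\dagger \hat b_i$ with $\dagger$ the pseudo-inverse of the PSD projection, and in the noise-dominated regime $\hat A_i$ typically has positive eigenvalues as small as order $\sigma\Gamma_x^2/\sqrt d$, so $\|\hat A_i^\dagger\|$---hence $\|\hat\theta_i\|$ and the excess risk---is no longer controlled by your expansion. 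Your argument therefore proves the lemma only under the additional hypothesis $n_i\lambda \gtrsim \sigma\Gamma_x^2\sqrt d$ (holding w.h.p.\ over $\|\Xi_i\|_{\mathrm{op}}$); a complete proof must either add that hypothesis or give a separate argument for the noise-dominated regime. In fairness, the paper's unconditional statement glosses over exactly the same issue---the analysis it cites in \citet{wang2018revisiting} also operates under a noise-versus-curvature condition---but a proof cannot.
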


Fix a task $i$. Since by assumption all weights $w_{ij}$ are equal to $\omega_i$, Lines~\ref{line:Ai}-\ref{line:bi} become $\hat A_i = \omega_i A_i + \Gamma_x^2 \Xi_i$ and $\hat b_i = \omega_i b_i + \Gamma_x^2\Gamma_* \xi_i$, and $\hat \theta_i$ is the solution to $\hat A_i\theta_i = \hat b_i$. This corresponds to the SSP algorithm, applied to the loss $L_i$, with noise variances $\sigma^2 = \frac{1}{\omega_i^2}$. By Lemma~\ref{lem:ssp}, we have
\begin{equation}
L_i(\hat\theta_i) - L_i(\theta_i^*) = \bigO{\frac{d^2\Gamma_x^2\Gamma_*^2}{\alpha} \frac{1}{n_i\omega_i^2}} \label{proof:ssp-2}
\end{equation}

We conclude by summing~\eqref{proof:ssp-2} over $i \in [m]$.
\end{proof}

\subsection{Theorem~\ref{thm:privacy-dpsgd} (Privacy guarantee of Algorithm~\ref{alg:dpsgd})}
\begin{proof}
The result is an application of the Gaussian mechanism. At each step $t$ of Algorithm~\ref{alg:dpsgd}, the procedure computes, for $i \in [m]$, a noisy estimate of the gradient $g_i^{(t)} = \sum_{j \in \Omega_i} g_i^{(t)}$ (Line~\ref{line:gradient}). Let $g^{(t)} \in \Rbb^{md}$ be the vector obtained by stacking $g_i^{(t)}$ for all $i$. And let ${g'}^{(t)}$ be the same vector obtained without user $j$'s data. Then
\begin{align*}
\|{g'}^{(t)} - g^{(t)}\|_2^2
&= \sum_{i \in \Oj} \|w_{ij} \nabla \ell(\theta_i; x_{ij},y_{ij})\|_2^2 \\
&\leq \Gamma^2 \sum_{i \in \Oj} w_{ij^2} \\
&\leq \beta \Gamma^2.
\end{align*}
where we use the assumption that $\sum_{i \in \Oj} w_{ij^2} \leq \beta$. Since we add Gaussian noise with variance $\Gamma^2T/2$, the procedure is $(\alpha, \alpha \beta/T)$-RDP. Finally, by composition over $T$ steps, the algorithm is $(\alpha, \beta)$-RDP.
\end{proof}

\subsection{Theorem~\ref{thm:tradeoff-dpsgd} (Utility guarantee of Algorithm~\ref{alg:dpsgd})}
We will make use of the following standard lemmas (for example Lemma~2.5 and 2.6 in~\cite{bassily2014erm}). Let $f$ be a convex function defined on domain $\Ccal$, let $\theta^* = \argmin_{\theta \in \Ccal} f(\theta)$. Consider the SGD algorithm with learning $\eta_t$.
\[
\theta^{(t+1)} = \Pi_\Ccal[\theta^{(t)} - \eta_t g^{(t)}].
\]
Assume that there exists $G$ such that for all $t$, $\Exp[g^{(t)}] = \nabla \ell(\theta^{(t)})$ and $\Exp[\|g^{(t)}\|^2] \leq G^2$,
\begin{lemma}[Lipschitz case]\label{lem:convex} Let $\eta^{(t)} = \frac{\|\Ccal\|}{G\sqrt{t}}$. Then for all $T \geq 1$,
\[
\Exp[f(\theta^{(t)})] - f(\theta^*) = \bigO{\frac{\|C\|G \log T}{\sqrt T}}.
\]
\end{lemma}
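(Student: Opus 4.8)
The plan is to read the statement as a bound on the \emph{last} iterate $\theta^{(T)}$ (the $\log T$ factor is the signature of last-iterate, rather than averaged, SGD convergence), and to prove it via the suffix-averaging argument of Shamir--Zhang. First I would record the basic projected-SGD one-step inequality. For any comparator $u \in \Ccal$, nonexpansiveness of $\Pi_\Ccal$ together with the update $\theta^{(t+1)} = \Pi_\Ccal[\theta^{(t)} - \eta^{(t)} g^{(t)}]$ gives
\[
\|\theta^{(t+1)} - u\|^2 \le \|\theta^{(t)} - u\|^2 - 2\eta^{(t)}\innerp{g^{(t)}}{\theta^{(t)} - u} + (\eta^{(t)})^2\|g^{(t)}\|^2.
\]
Taking expectations, using $\Exp[g^{(t)} \mid \theta^{(t)}] = \nabla f(\theta^{(t)})$, convexity $\innerp{\nabla f(\theta^{(t)})}{\theta^{(t)} - u} \ge f(\theta^{(t)}) - f(u)$, and $\Exp\|g^{(t)}\|^2 \le G^2$, yields the per-step estimate
\[
\Exp[f(\theta^{(t)}) - f(u)] \le \frac{\Exp\|\theta^{(t)} - u\|^2 - \Exp\|\theta^{(t+1)} - u\|^2}{2\eta^{(t)}} + \frac{\eta^{(t)}G^2}{2},
\]
which, via the tower property, holds for any $u$ measurable with respect to the history up to step $T-k < t$, in particular the random iterate $u = \theta^{(T-k)}$.

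Next I would set $b_t := \Exp[f(\theta^{(t)})] - f(\theta^*)$ and, for $0 \le k \le T-1$, the suffix averages $Q_k := \frac{1}{k+1}\sum_{t=T-k}^T b_t$, so that $Q_0 = b_T$ is the last iterate and $Q_{T-1} = \frac{1}{T}\sum_{t=1}^T b_t$ is the full average. Telescoping gives $b_T = Q_{T-1} + \sum_{k=1}^{T-1}(Q_{k-1} - Q_k)$, and an elementary rearrangement yields the identity
\[
Q_{k-1} - Q_k = \frac{1}{k(k+1)}\sum_{t=T-k+1}^{T}\bigl(b_t - b_{T-k}\bigr).
\]
The value of this identity is that $b_t - b_{T-k} = \Exp[f(\theta^{(t)}) - f(\theta^{(T-k)})]$ is precisely what the one-step inequality with comparator $u = \theta^{(T-k)}$ controls. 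Summing that inequality over $t = T-k+1, \dots, T$, I would telescope the distance terms by Abel summation, using that $1/\eta^{(t)} = G\sqrt{t}/\|\Ccal\|$ is increasing, that all iterates lie in $\Ccal$ (so pairwise distances are at most $\|\Ccal\|$), and that $\Exp\|\theta^{(T-k+1)} - \theta^{(T-k)}\|^2 \le (\eta^{(T-k)})^2 G^2$ is small. This should produce a bound $\sum_{t=T-k+1}^T(b_t - b_{T-k}) = \bigO{\|\Ccal\| G\, k/\sqrt{T-k}}$.

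Combining, $Q_{k-1} - Q_k = \bigO{\|\Ccal\| G/((k+1)\sqrt{T-k})}$, while the full average obeys the standard estimate $Q_{T-1} = \bigO{\|\Ccal\| G/\sqrt T}$. The last step is the harmonic-type sum $\sum_{k=1}^{T-1}\frac{1}{(k+1)\sqrt{T-k}}$, which I would split at $k = T/2$: the low-$k$ range contributes $\bigO{\log T/\sqrt T}$ (from $\sum 1/k$) and the high-$k$ range contributes $\bigO{1/\sqrt T}$, giving $b_T = \bigO{\|\Ccal\| G\log T/\sqrt T}$ as claimed.

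The main obstacle is the telescoping estimate in the third paragraph. The one-step inequality must be applied with the \emph{random} comparator $\theta^{(T-k)}$, so I must verify that conditioning on the history through step $T-k$ still licenses the convexity step, and the Abel-summation bookkeeping of the telescoped distance terms must retain the sharp $k/\sqrt{T-k}$ dependence: a crude $\|\Ccal\|^2/\eta^{(T)}$ bound loses the $k$-dependence entirely and destroys the $\log T$ rate. Extracting this dependence correctly — rather than the routine per-step analysis or the final harmonic sum — is where the real work lies.
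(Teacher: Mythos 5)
The paper contains no proof of this lemma: it is imported verbatim as a standard result, with a pointer to Lemmas~2.5 and~2.6 of Bassily, Smith, and Thakurta (2014), which are themselves the last-iterate SGD bounds of Shamir and Zhang. Your reconstruction---correctly reading $\theta^{(t)}$ in the statement as the last iterate $\theta^{(T)}$, then running the suffix-averaging argument with the random comparator $u=\theta^{(T-k)}$ (licensed by the tower property), the exact identity for $Q_{k-1}-Q_k$, the Abel-summation estimate $\bigO{\|\Ccal\|G\,k/\sqrt{T-k}}$ that preserves the $k$-dependence, and the harmonic sum split at $k=T/2$---is precisely the canonical Shamir--Zhang proof underlying that citation, and it is correct.
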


\begin{lemma}[Strongly convex case]\label{lem:strongly-convex} Assume that $f$ is $\lambda$ strongly convex and let $\eta^{(t)} = \frac{1}{\lambda t}$. Then for all $T \geq 1$,
\[
\Exp[f(\theta^{(t)})] - f(\theta^*) = \bigO{\frac{G^2 \log T}{\lambda T}}.
\]
\end{lemma}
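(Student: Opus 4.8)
The plan is to run the standard projected-SGD analysis, tracking the squared distance to the optimum $d_t = \|\theta^{(t)} - \theta^*\|^2$ and exploiting $\lambda$-strong convexity so that the contraction it produces exactly cancels against the decaying step size $\eta^{(t)} = 1/(\lambda t)$. First I would bound a single step: since $\theta^* \in \Ccal$ and the Euclidean projection $\Pi_\Ccal$ is non-expansive,
\[
d_{t+1} = \|\Pi_\Ccal[\theta^{(t)} - \eta^{(t)} g^{(t)}] - \theta^*\|^2 \leq \|\theta^{(t)} - \theta^* - \eta^{(t)} g^{(t)}\|^2 = d_t - 2\eta^{(t)}\innerp{g^{(t)}}{\theta^{(t)} - \theta^*} + (\eta^{(t)})^2\|g^{(t)}\|^2.
\]
Taking expectations and using that $g^{(t)}$ is an unbiased estimate of $\nabla f(\theta^{(t)})$, together with the second-moment bound $\Exp[\|g^{(t)}\|^2] \leq G^2$, yields a recursion in $\Exp[d_t]$.

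Next I would invoke strong convexity in the form $\innerp{\nabla f(\theta^{(t)})}{\theta^{(t)} - \theta^*} \geq f(\theta^{(t)}) - f(\theta^*) + \frac{\lambda}{2}d_t$. Substituting this into the recursion and rearranging isolates the per-step excess risk:
\[
\Exp[f(\theta^{(t)})] - f(\theta^*) \leq \frac{1 - \eta^{(t)}\lambda}{2\eta^{(t)}}\Exp[d_t] - \frac{1}{2\eta^{(t)}}\Exp[d_{t+1}] + \frac{\eta^{(t)}}{2}G^2.
\]
Plugging in $\eta^{(t)} = 1/(\lambda t)$, so that $\frac{1 - \eta^{(t)}\lambda}{2\eta^{(t)}} = \frac{\lambda(t-1)}{2}$ and $\frac{1}{2\eta^{(t)}} = \frac{\lambda t}{2}$, gives
\[
\Exp[f(\theta^{(t)})] - f(\theta^*) \leq \frac{\lambda}{2}\left[(t-1)\Exp[d_t] - t\Exp[d_{t+1}]\right] + \frac{G^2}{2\lambda t}.
\]
Summing over $t = 1, \dots, T$ makes the bracketed term telescope to $-T\Exp[d_{T+1}] \leq 0$, while the residual sums to $\frac{G^2}{2\lambda}\sum_{t=1}^T \frac1t = \bigO{\frac{G^2 \log T}{\lambda}}$. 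Dividing by $T$ bounds the average excess risk $\frac1T\sum_{t=1}^T(\Exp[f(\theta^{(t)})] - f(\theta^*))$ by $\bigO{\frac{G^2\log T}{\lambda T}}$, and convexity of $f$ (Jensen's inequality applied to the averaged iterate $\bar\theta = \frac1T\sum_{t=1}^T \theta^{(t)}$, which is the reported output) transfers this to $\Exp[f(\bar\theta)] - f(\theta^*)$, establishing the claim.

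The hard part is really the bookkeeping in the telescoping step: the entire reason the $1/(\lambda t)$ schedule works is that the contraction factor $(1 - \eta^{(t)}\lambda)$ rescales the coefficient $\frac{1}{2\eta^{(t)}}$ into $\frac{\lambda(t-1)}{2}$, which aligns with the $\frac{\lambda t}{2}$ coefficient on $\Exp[d_{t+1}]$ at the preceding index so the weighted distances cancel cleanly; a mismatched schedule would leave an uncancelled $\Exp[d_t]$ term that spoils the sum. A secondary subtlety is that the bound is intrinsically a statement about the averaged iterate rather than the final one, so the Jensen step is needed to phrase it for a single returned point $\hat\theta$.
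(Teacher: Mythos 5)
Your telescoping argument is sound and the bookkeeping checks out: with $\eta^{(t)} = \frac{1}{\lambda t}$ the coefficient $\frac{1-\eta^{(t)}\lambda}{2\eta^{(t)}} = \frac{\lambda(t-1)}{2}$ vanishes at $t=1$, so $\Exp[\|\theta^{(1)}-\theta^*\|^2]$ never enters, the weighted distances telescope against $\frac{\lambda t}{2}\Exp[\|\theta^{(t+1)}-\theta^*\|^2]$, and the harmonic sum produces the $\log T$. One point of comparison you could not have seen: the paper does not prove this lemma at all --- it is invoked as standard, citing Lemmas~2.5 and~2.6 of \citep{bassily2014erm}, which are in turn the \emph{last-iterate} SGD bounds of Shamir and Zhang (2013). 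So you are supplying an elementary, self-contained argument where the paper supplies a citation; that is legitimate in itself (modulo the routine caveat that unbiasedness and the second-moment bound $\Exp[\|g^{(t)}\|^2]\leq G^2$ should be applied conditionally on $\theta^{(t)}$ via the tower property).

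The genuine gap is in your final step. You transfer the bound on the average excess risk to a single point via Jensen applied to $\bar\theta = \frac1T\sum_{t=1}^T \theta^{(t)}$, asserting that this averaged iterate ``is the reported output.'' It is not: Algorithm~\ref{alg:dpsgd} returns the \emph{final} iterate $\hat\theta^{(T)}$, and the proof of Theorem~\ref{thm:tradeoff-dpsgd} applies Lemma~\ref{lem:strongly-convex} to that returned point. An average-iterate guarantee does not imply a last-iterate one --- there is no Jensen-type argument in that direction --- and obtaining the same $\bigO{G^2\log T/(\lambda T)}$ rate for $\theta^{(T)}$ requires the substantially more delicate suffix-averaging analysis of Shamir--Zhang, which is precisely the work the paper's citation is doing. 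The gap is repairable in two ways: either reproduce that last-iterate argument, or note that modifying Algorithm~\ref{alg:dpsgd} to output $\bar\theta$ costs nothing (averaging is post-processing, so the guarantee of Theorem~\ref{thm:privacy-dpsgd} is unaffected, and the downstream bounds hold verbatim with your version of the lemma). As written, however, your proof establishes the stated bound for a different point than the one the paper's algorithm actually returns.
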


\begin{proof}[proof of Theorem~\ref{thm:tradeoff-dpsgd}]
First, since $w_{ij} = \omega_i$, the gradient $g_i^{(t)}$ in Line~\ref{line:gradient} of the algorithm becomes
\[
g_i^{(t)} = \omega_i \nabla L_i(\theta_i^{(t-1)}).
\]
Let $\hat g_i^{(t)} = g_i^{(t)} + \Gamma\sqrt{T/2}\xi_i^{(t)}$. Then $\Exp[\hat g_i^{(t)}] = g_i^{(t)}$ and
\begin{align*}
\Exp[\|\hat g_i^{(t)}\|^2]
&= \Exp[\|g_i^{(t)}\|^2] + \Exp[\|\Gamma\sqrt{T/2}\xi_i^{(t)}\|^2]\\
&\leq \Gamma^2[\omega_i^2n_i^2 + Td/2].
\end{align*}
In the first line we use independence of $\xi_i^{(t)}$ and $g_i^{(t)}$. In the second line we use that the variance of a multivariate normal is $d$, and the fact that $L_i$ has Lipschitz constant $n_i \Gamma$ (since it is the sum of $n_i$ terms, each being $\Gamma$-Lipschitz).

Define
\[
G_i^2 = \Gamma^2[\omega_i^2n_i^2 + Td/2].
\]

First, consider the Lipschitz bounded case. Applying Lemma~\ref{lem:convex} to $f = \omega_i L_i$, $G = G_i$, and $\eta_i = \frac{\|C\|}{G_i\sqrt t}$, we have for all $T$
\[
\omega_i\Exp[L_i(\theta_i) - L_i(\theta_i^*)] = \bigO{\frac{\|C\|G_i\log T}{\sqrt T}}.
\]
Multiplying by $\frac{1}{\omega_i}$ and summing over tasks $i \in \{1, \dots, m\}$, we have
\begin{align*}
\Exp[L(\theta)] - L(\theta^*)
&= \bigO{\frac{\|\Ccal\|\Gamma\log T}{\sqrt T}\sum_{i=1}^m\sqrt{n_i^2 + \frac{Td}{2\omega_i^2}}} \\
&= \bigO{\|\Ccal\|\Gamma\log T\sqrt{m} \sqrt{\frac{\sum_{i=1}^m n_i^2}{T} + \sum_{i=1}^m\frac{d}{2\omega_i^2}}}
\end{align*}
where we used $\sum_{i = 1}^m \sqrt{a_i} \leq \sqrt{m\sum_{i = 1}^m a_i}$ (by concavity). Finally, setting $T$ to equate the terms under the square root, we get $T = \frac{2}{d}\frac{\sum_{i = 1}^m n_i^2}{\sum_{i = 1}^m 1/\omega_i^2}$, and with this choice of $T$,
\[
\Exp[L(\theta)] - L(\theta^*) = \bigOtilde{\|\Ccal\|\Gamma \sqrt{md}\sqrt{\sum_{i = 1}^m \frac{1}{\omega_i^2}}},
\]
as desired.

We now turn to the strongly convex case. Applying Lemma~\ref{lem:strongly-convex} to $f = \omega_i L_i$, $G = G_i$ (same as above), strong convexity constant $\omega_i n_i \lambda$, and $\eta_i = \frac{1}{\omega_i n_i \lambda t}$, we have for all~$T$,
\[
\omega_i\Exp[L_i(\theta_i) - L_i(\theta_i^*)] = \bigO{\frac{G_i^2\log T}{n_i\omega_i\lambda T}}.
\]
Multiplying by $\frac{1}{\omega_i}$ and summing over tasks, we get
\[
\Exp[L(\theta)] - L(\theta^*) = \bigO{\frac{\Gamma^2 \log T}{\lambda}\left(\frac{1}{T}\sum_{i = 1}^m n_i + \frac{d}{2}\sum_{i = 1}^m \frac{1}{\omega_i^2n_i}\right)}.
\]
Setting $T$ to equate the last two sums, we get $T = \frac{2}{d}\frac{\sum_{i = 1}^m n_i}{\sum_{i = 1}^m 1/\omega_i^2n_i}$ and with this choice of $T$,
\[
\Exp[L(\theta)] - L(\theta^*) = \bigOtilde{\frac{\Gamma^2d}{\lambda}\sum_{i = 1}^m \frac{1}{\omega_i^2n_i}},
\]
as desired.
\end{proof}

\subsection{Theorem~\ref{thm:exact} (Privacy-utility trade-off under adaptive weights)}
To prove the theorem, we will use the following concentration result:
\begin{lemma}[Concentration bound on the privacy budget]
\label{lem:concentration}Suppose that Assumption~\ref{assump:omega} holds and let $B>0$ be given. Let $n_i = |\Omega_i|$ and $\omega_i=n_i^{-(1+\gamma)/4}/\sqrt{\sum_{i'} n_{i'}^{(1-\gamma)/2}/nB}$. Then there exists $c>0$ (that does not depend on $B$) such that, with high probability, for all $i$, $\sum_{i\in\Oj} \omega_i^2 \leq Bc\log n$.
\end{lemma}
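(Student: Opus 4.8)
The plan is to fix a user $j$, express the per-user budget $B_j := \sum_{i \in \Oj} \omega_i^2$ as a sum of independent bounded random variables, and apply a Bernstein-type concentration bound followed by a union bound over the $n$ users. Concretely, for each task $i$ let $Z_{ij} = \mathbf{1}[j \in \Oi]$. Under Assumption~\ref{assump:omega} the tasks are sampled independently, so for a fixed $j$ the variables $(Z_{ij})_{i=1}^m$ are mutually independent, each with $\Pr[Z_{ij}=1] = n_i/n$ (the marginal probability that user $j$ is among the $n_i$ users drawn for task $i$; inclusions within a single task are negatively correlated, but only independence \emph{across} tasks is needed here). Thus $B_j = \sum_{i=1}^m \omega_i^2 Z_{ij}$ is a sum of independent terms supported on $[0,\omega_i^2]$.

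The first step is to compute the mean, which is where the specific form of the weights pays off. Substituting $\omega_i^2 = nB\, n_i^{-(1+\gamma)/2} / \sum_{i'} n_{i'}^{(1-\gamma)/2}$ gives $\frac{n_i}{n}\omega_i^2 = B\, n_i^{(1-\gamma)/2}/\sum_{i'} n_{i'}^{(1-\gamma)/2}$, and summing over $i$ the denominator cancels the numerator, so $\Exp[B_j] = \sum_{i=1}^m \frac{n_i}{n}\omega_i^2 = B$ for every $j$. This is exactly the normalization engineered in the derivation of~\eqref{eq:w}: the weights are scaled so that the expected per-user budget equals $B$.

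Next I would invoke Bernstein's inequality for independent bounded variables. The variance is controlled by $\mathrm{Var}(B_j) \leq \sum_i \frac{n_i}{n}\omega_i^4 \leq M \cdot \Exp[B_j] = MB$, where $M := \max_i \omega_i^2$ is the range of the individual terms, and Bernstein then yields $\Pr[B_j \geq B + t] \leq \exp\!\big(-\tfrac{t^2/2}{MB + Mt/3}\big)$. Choosing $t = \Theta(M\log n)$ makes the exponent at least $3\log n$ (the range term $Mt/3$ dominating the variance term $MB$, since $B \leq M\log n$ is mild), so the tail probability is at most $n^{-3}$. A union bound over the $n$ users then gives, w.h.p., $\max_j B_j \leq B + t$, and writing $B + t = O(M\log n) \leq c\,B\log n$ with $c$ chosen proportional to $M/B$ yields the claimed bound $\sum_{i \in \Oj}\omega_i^2 \leq cB\log n$.

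The main obstacle is controlling the range $M = \max_i \omega_i^2$ relative to the mean $B$: because the optimal weights are largest on the \emph{smallest} tasks, a single tail task contributes an atypically large term, so the sum is heavy-tailed and one cannot hope for tighter than multiplicative $\Theta(\log n)$ slack. Quantitatively, for $n_{\min} \geq 1$ one has $M/B = n\, n_{\min}^{-(1+\gamma)/2}/\sum_{i'} n_{i'}^{(1-\gamma)/2} \leq n/m$, so the constant $c$ is governed by $n/m$ and is an absolute constant precisely in the regime where the number of tasks is comparable to the number of users (the relevant regime for the recommendation applications), with the Bernstein range term forcing the $\log n$ factor. A careful check that Bernstein's bound is applied to the correct (upper) tail, together with the observation that this concentration only needs to hold w.h.p.\ (the \emph{exact} privacy guarantee being separately enforced by the clipping step~\eqref{eq:w_final}), completes the argument.
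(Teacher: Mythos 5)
Your proposal is structurally the same proof as the paper's: decompose $\sum_{i\in\Oj}\omega_i^2$ into independent indicator-weighted terms $X_i = \omega_i^2 Z_{ij}$ with $\Pr[Z_{ij}=1]=n_i/n$, verify that the normalization of the weights makes $\sum_{i=1}^m \Exp[X_i] = B$ exactly, and finish with Bernstein's inequality plus a union bound over the $n$ users. Where you part ways with the paper is the range/variance accounting, and your version is the correct one. The paper asserts $\var[X_i]\le B^2/n$ and $\max_i \omega_i^2 \le B$, citing only $n_i\ge 1$; with those two bounds Bernstein does give deviation $O(B\log n)$ with an absolute constant. But neither inequality follows from $n_i \ge 1$: unwinding them, both require $\sum_{i'} n_{i'}^{(1-\gamma)/2}$ to be on the order of $n$ (the variance bound needs $n\, n_i^{-\gamma/2} \le \sum_{i'} n_{i'}^{(1-\gamma)/2}$), which is an extra hypothesis, not a consequence of Assumption~\ref{assump:omega}. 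You instead keep the true range $M=\max_i\omega_i^2$, bound $\var(B_j)\le MB$, and conclude that the achievable constant is $c = O(M/B)\le O(n/m)$ --- absolute only when $m = \Theta(n)$.

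The gap between your conclusion and the stated lemma is therefore not a deficiency of your argument but a flaw in the lemma (and in the paper's own proof). Concretely, take $\gamma=1$, $m=\sqrt n$, and all $n_i=1$: then $\omega_i^2 = nB/m = \sqrt n\, B$ for every $i$, and the user sampled for task $1$ satisfies $\sum_{i\in\Oj}\omega_i^2 \ge \omega_1^2 = \sqrt n\, B \gg cB\log n$ for any constant $c$ --- with probability one, not merely with noticeable probability. So Lemma~\ref{lem:concentration} with a constant independent of $m$, $n$, $n_i$ requires an additional assumption such as $m=\Theta(n)$ or $\sum_i n_i^{(1-\gamma)/2} = \Omega(n)$, under which your own argument already yields the claim (and Theorem~\ref{thm:exact}, which invokes the lemma, inherits the same caveat). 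One small repair to your write-up: the choice $t=\Theta(M\log n)$ only makes the Bernstein exponent $\Omega(\log n)$ when $B = O(M\log n)$; in the opposite regime (e.g.\ a complete bipartite $\Omega$ with $m \gg \log n$, where $M = B/m \ll B$) that exponent degrades, so you should take $t = \Theta(B + M\log n)$, which costs nothing in the final bound.
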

\begin{proof}
Fix a user $j$. We seek to bound $\sum_{i \in \Oj} \omega_i^2$. Let $p_{ij} = P((i,j) \in \Omega)$. Recall that by Assumption~\ref{assump:omega}, $p_{ij} = n_i/n$.

For each $i$, define $X_i$ as the random variable which takes value $\omega_i^2 = nB n_i^{-(1+\gamma)/2}/\sum_{i'} n_{i'}^{(1-\gamma)/2}$ with probability $n_i/n$ and $0$ otherwise. Then bounding $\sum_{i \in \Oj} \omega_i^2$ is equivalent to bounding $\sum_{i = 1}^m X_i$.

We have
\begin{align*}
\mean[X_i]
&= B n_i^{(1-\gamma)/2}/\sum_{i'} n_{i'}^{(1-\gamma)/2}\,,\\
\var[X_i]
&\leq \mean[X_i^2]
= (n_i/n) \cdot \Big(nB n_i^{-(1+\gamma)/2}/\sum_{i'} n_{i'}^{(1-\gamma)/2}\Big)^2\\
&= nB^2 n_i^{-\gamma}/\Big(\sum_{i'} n_{i'}^{(1-\gamma)/2}\Big)^2 \leq B^2/n\,,
\end{align*}
where the last inequality is by the fact that $\forall i, \ n_i\geq 1$ (see Assumption~\ref{assump:omega}). Hence we have that
\[
\sum_{i = 1}^m \mean[X_i] = B\,,\quad \sum_{i = 1}^m \var[X_i] \leq (m/n)B^2\leq B^2,
\]
where we use the assumption that $n \geq m$. Furthermore, we have $\omega_i^2 \leq B$ since $\forall j,\ n_i\geq 1$. By the standard Bernstein's inequality (\cite{boucheron2013concentration} \S 2.8), there exists $c>0$ (that does not depend on $B$ or $n_i$) such that:
\[
\Pr\left(\sum_{i = 1}^m X_i \geq B c \log n \right) \leq 1/n^3\,.
\]
The high probability bound follows by taking the union over all the $i$'s.
\end{proof}

\begin{proof}[Proof of Theorem~\ref{thm:exact}]
Let $\omega^*$, $w^*$ be the weights given in eq.~\eqref{eq:w}-\eqref{eq:w_final}, respectively ($\omega^*$ are the optimal weights, and $w^*$ are their clipped version). We apply Lemma~\ref{lem:concentration} with $B=\bar\beta=\beta/c\log n$ and hence obtain that with high probability, for all $i$, $\sum_{j\in\Omega_i} {\omega_i^*}^2 \leq \beta$. In other words, w.h.p. clipping does not occur and $w^*_{ij} = \omega_i$ for all $(i, j) \in \Omega$. Conditioned on this high probability event, we can apply the general utility bounds in Theorems~\ref{thm:tradeoff-ssp} and~\ref{thm:tradeoff-dpsgd}, with $\omega_i = \omega_i^*$. This yields the desired bounds.
\end{proof}

\section{Utility analysis under approximate counts}
\label{app:private_counts}

In Section~\ref{sec:optimal}, the optimal choice of weights $\omega^*$ assumes knowledge of the counts $n_i$. If the counts $n_i$ are not public, we can use DP estimates $\hat n_i$, and use them to solve the problem. Since differentially private counting has been studied extensively, we only state the utility bound in terms of the accuracy of the noisy counts -- the final privacy bound can be done through standard composition. For any constant $s>0$, we call a counting procedure $\mathcal{M}$ $s$-accurate, if w.h.p., the counts $\hat n_i$ produced by $\mathcal{M}$ satisfy that $|\hat n_i - n_i|\leq s$ for all $i$.

We will give an analysis of Algorithm~\ref{alg:ssp} with approximate counts. The general idea is to apply the algorithm with weights $\hat\omega$ of the same form as the optimal weights $\omega^*$, but where the exact counts are replaced with estimated counts. Suppose we are given $s$-accurate count estimates $\hat n_i$. Define
\begin{equation}
\label{eq:approx_w}
\hat \omega_i = \frac{1}{(\hat n_i + s)^{1/2}\sqrt{m/n\bar\beta}},
\end{equation}
This corresponds to eq.~\eqref{eq:w}, but with $n_i$ replaced by $\hat n_i + s$ (it will become clear below why we need to slightly over-estimate the counts).

As in the exact case, we need to clip the weights, so that the privacy guarantee always holds. Define the clipped version $\hat w$ as
\begin{equation}
\label{eq:approx_w_final}
\hat w_{ij} = \hat\omega_i \min\Big(1, \frac{\beta^{1/2}}{(\sum_{i' \in \Oj} \hat\omega^2_{i'})^{1/2}}\Big) \quad \forall (i, j) \in \Omega.
\end{equation}
\begin{remark}
The input to the algorithm are the unclipped weights $\hat\omega$, which are computed differentially privately. The clipped weights are not released as part of the procedure. They are simply used as a scaling factor in the computation of $\hat A_i, \hat b_i$ (Lines~\ref{line:Ai}-\ref{line:bi}).
\end{remark}

\begin{theorem}[Privacy-utility trade-off of Algorithm~\ref{alg:ssp} with estimated counts]
\label{thm:approximate}
Suppose Assumptions~\ref{assump:SSP} and~\ref{assump:omega} hold. Let $\hat n_i \in \Rbb^m$ be count estimates from an $s$-accurate procedure for some $s > 0$. Define $\hat \omega, \hat w$ as in eq.~\eqref{eq:approx_w}-\eqref{eq:approx_w_final}. Let $\hat \theta$ be the output of Algorithm~\ref{alg:ssp} run with weights $\hat w$. Then Algorithm~\ref{alg:ssp} is $(\alpha, \alpha\beta)$-RDP for all $\alpha > 1$, and w.h.p.,
\begin{align}
L(\hat \theta) - L(\theta^*) = \bigOtilde{\frac{\Gamma_x^4\Gamma_*^2dm}{n\lambda\beta} \sum_{i = 1}^m \frac{\hat n_i + s}{n_i}}.
\label{eq:final_bound_approx}
\end{align}

\end{theorem}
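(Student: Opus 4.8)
The plan is to mirror the proof of Theorem~\ref{thm:exact}, but carefully track how the count-estimation error $s$ propagates through both the privacy constraint (which must hold exactly) and the utility bound (which may hold only w.h.p.). The privacy guarantee is immediate: since $\hat w$ is obtained by the clipping in eq.~\eqref{eq:approx_w_final}, we have $\sum_{i \in \Oj} \hat w_{ij}^2 \leq \beta$ for every user $j$ by construction, so Theorem~\ref{thm:privacy-ssp} gives $(\alpha, \alpha\beta)$-RDP unconditionally. The real work is the utility bound.

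First I would establish that, w.h.p., clipping does not occur, i.e. $\hat w_{ij} = \hat\omega_i$ for all $(i,j) \in \Omega$. This is the key place where the deliberate over-estimation $\hat n_i + s$ earns its keep. The $s$-accuracy assumption gives $|\hat n_i - n_i| \leq s$ w.h.p., hence $n_i \leq \hat n_i + s \leq n_i + 2s$, so the surrogate counts $\hat n_i + s$ dominate the true counts $n_i$. The weights $\hat\omega_i$ in eq.~\eqref{eq:approx_w} have exactly the form required by Lemma~\ref{lem:concentration} with $\gamma = 1$, but with $n_i$ replaced by $\hat n_i + s \geq n_i$. Since increasing the counts in the denominator only shrinks each $\hat\omega_i^2$ relative to the idealized weights built from the true $n_i$, the concentration argument of Lemma~\ref{lem:concentration} applies and yields $\sum_{i \in \Oj} \hat\omega_i^2 \leq \beta$ w.h.p. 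Conditioned on this event (and on the accuracy event), clipping is inactive and $\hat w_{ij} = \hat\omega_i$.

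Next I would plug $\omega_i = \hat\omega_i$ into the generic SSP utility bound~\eqref{eq:bound-ssp}, which on the good event gives
\[
L(\hat\theta) - L(\theta^*) = \bigO{\frac{\Gamma_x^4\Gamma_*^2 d}{\lambda} \sum_{i=1}^m \frac{1}{\hat\omega_i^2 n_i}}.
\]
Substituting $\hat\omega_i^2 = \frac{n\bar\beta}{m(\hat n_i + s)}$ from eq.~\eqref{eq:approx_w} and recalling $\bar\beta = \beta/(c\log n)$, each summand becomes $\frac{m(\hat n_i + s)}{n\bar\beta \, n_i}$, so the sum telescopes into $\frac{m}{n\bar\beta}\sum_i \frac{\hat n_i + s}{n_i}$. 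Folding $1/\bar\beta = c\log n/\beta$ into the $\tilde\Ocal$ notation produces exactly the claimed bound~\eqref{eq:final_bound_approx}. Here I should keep the sum $\sum_i \frac{\hat n_i + s}{n_i}$ unexpanded, since the theorem states the bound in terms of the estimated counts; one could optionally remark that $\frac{\hat n_i + s}{n_i} \leq \frac{n_i + 2s}{n_i} = 1 + 2s/n_i$, recovering the exact-count bound~\eqref{eq:final_bound_ssp} up to the overhead $\sum_i 2s/n_i$ incurred by noisy counting.

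\textbf{The main obstacle} I anticipate is verifying that Lemma~\ref{lem:concentration} really does apply with the surrogate counts. The lemma as stated builds $\omega_i$ from the exact $n_i$, whereas here the weights use $\hat n_i + s$, which are themselves random (and correlated with nothing adversarial, but still data-dependent). The cleanest route is to argue monotonicity: on the accuracy event $\{\,\forall i,\ |\hat n_i - n_i| \leq s\,\}$ we have $\hat\omega_i \leq \omega_i^{\mathrm{ideal}}$ pointwise, where $\omega_i^{\mathrm{ideal}}$ is the Lemma~\ref{lem:concentration} weight built from the true $n_i$ with budget $\bar\beta$; hence $\sum_{i\in\Oj}\hat\omega_i^2 \leq \sum_{i\in\Oj}(\omega_i^{\mathrm{ideal}})^2 \leq \beta$ w.h.p. by the lemma. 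Care is needed because the normalizing denominators differ, so the pointwise domination must be checked with the normalizations included; the over-estimation by $s$ is exactly what guarantees the inequality goes the right way, and this is precisely why eq.~\eqref{eq:approx_w} inflates the counts rather than using $\hat n_i$ directly. A final union bound over the accuracy event, the concentration event, and the w.h.p. SSP bound closes the argument.
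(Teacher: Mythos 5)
Your proposal is correct and follows essentially the same route as the paper's proof: the monotonicity argument (on the accuracy event $\hat n_i + s \geq n_i$, hence $\hat\omega_i \leq \omega_i^*$ coordinate-wise), Lemma~\ref{lem:concentration} applied to the exact-count weights with $B = \bar\beta$ to rule out clipping w.h.p., and then Theorem~\ref{thm:tradeoff-ssp} with the substitution $\hat\omega_i^2 = n\bar\beta/(m(\hat n_i + s))$. The normalization concern you flag as the main obstacle is in fact vacuous in this $\gamma = 1$ case, since the normalizing factor in eq.~\eqref{eq:approx_w} is $\sqrt{m/n\bar\beta}$, independent of the counts, so the coordinate-wise domination is immediate --- exactly the property the paper exploits via its monotone map $g$.
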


\begin{proof}[Proof of Theorem~\ref{thm:approximate}]
We first show that w.h.p., the approximate weights $\hat \omega_i$ \emph{under-estimate} the exact weights $\omega_i^*$. This will allow us to argue that w.h.p., clipping will not occur.

First, define the function $g : \Rbb^d \to \Rbb^d$, 
\[
g_i(\eta) = \frac{1}{\eta_i^{1/2}\sqrt{m/n\bar\beta}}.
\]
Observe that whenever $\eta \leq \eta'$ (coordinate-wise), $g(\eta) \geq g(\eta')$ (coordinate-wise).

Let $\eta^*$ be the vector of true counts ($\eta^*_i = n_i$), and $\hat\eta$ be the vector or approximate counts $\hat \eta_i = n_i$. We have that $\omega^* = g(\eta^*)$ and $\hat \omega = g(\hat \eta + s)$. But since by assumption, the counts estimates are $s$-accurate, then w.h.p., $|\hat n_i - n^*_i| \leq s$, thus $\eta^* \leq \hat \eta + s$ and
\[
\hat \omega = g(\hat \eta + s) \leq \omega(\eta^*) = \omega^*,
\]
that is, the estimated weights $\hat \omega$ under-estimate the optimal weights $\omega^*$ w.h.p. (this is precisely why, in the definition of $\hat \omega$, we used the adjusted counts $\hat \eta + s$ instead of $\hat \eta$).

Next, $\hat \omega \leq \omega^*$ implies that
\begin{equation}
\label{proof4:budget_bound}
\sum_{i \in \Oj} \hat \omega_i^2 \leq \sum_{i \in \Oj} {\omega^*_i}^2.
\end{equation}
We also know, from Lemma~\ref{lem:concentration} (applied with $B = \bar\beta$), that w.h.p., $\sum_{i \in \Oj} {\omega_i^*}^2 \leq c\bar\beta\log n = \beta$ for all~$j$. Combining this with~\eqref{proof4:budget_bound}, we have that w.h.p., $\hat\omega$ satisfies the RDP budget constraint, therefore $w_{ij} = \hat \omega_i^2$ (clipping does not occur since the budget constraint is satisfied).

Applying Theorem~\ref{thm:tradeoff-ssp} with weights $\hat \omega$ yields the desired result.
\end{proof}

We compare the utility bounds under adaptive weights with exact counts~\eqref{eq:final_bound_ssp}, adaptive weights with approximate counts~\eqref{eq:final_bound_approx}, and uniform weights. The bounds are, respectively, $\bigOtilde{\frac{\Gamma_x^4\Gamma_*^2dm^2}{n\lambda\beta}}$, $\bigOtilde{\frac{\Gamma_x^4\Gamma_*^2dm}{n\lambda\beta} \sum_{i = 1}^m \frac{\hat n_i + s}{n_i}}$, and $\bigOtilde{\frac{\Gamma_x^4\Gamma_*^2d}{n\lambda\beta} \sum_{i = 1}^m n_i\sum_{i = 1}^m \frac{1}{n_i}}$.

The cost of using approximation counts (compared to exact counts) is a relative increase by the factor
\begin{align*}
\frac{1}{m}\sum_{i = 1}^m \frac{\hat n_i + s}{n_i}
&\leq \frac{1}{m}\sum_{i = 1}^m \frac{n_i + 2s}{n_i} \\
&= 1 + \frac{2}{m}\sum_{i = 1}^m \frac{s}{n_i}.
\end{align*}
where we used that w.h.p., $\hat n_i \leq n_i + s$. The last term is the \emph{average relative error} of count estimates. If the estimates are accurate on average, we don't expect to see a large utility loss due to using approximate counts.

\begin{figure}[h]
\centering
\includegraphics[width=0.33\textwidth]{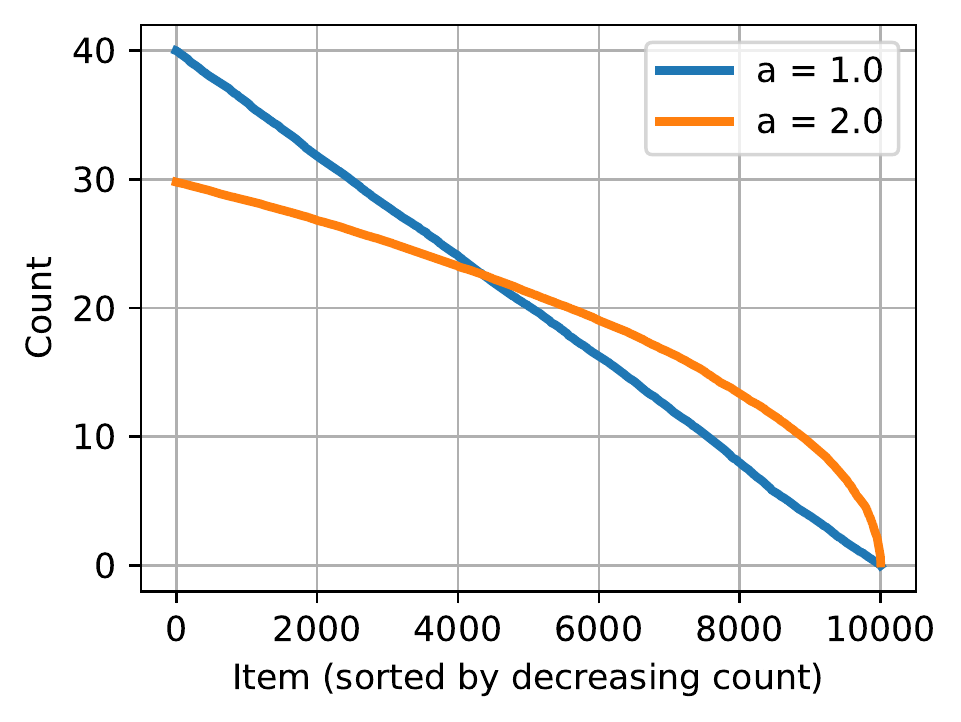}
\caption{Task distribution in the synthetic data.}
\label{fig:powerlaw_data}
\end{figure}

\section{Synthetic Experiments}
\label{app:synthetic_expt}
We conduct experiments following the setup in Section~\ref{sec:budget}. Specifically, we consider $m=100$ linear regression tasks of dimension $d=5$ and data from $n=10,000$ users.
We assume that each task has an optimal solution $\theta_i$, and each user $j$ has a vector $u_j$, such that $x_{ij} = u_j$, and $y_{ij} = \innerp{u_i}{\theta_j} + \mathcal{N}(0, \sigma_F^2)$, with $\sigma_F$ representing some inherent data noise.
We generate $u_j$ and $\theta_i$ from a Gaussian distribution $\mathcal{N}^d$ and projected to the unit ball, and set $\sigma_F = 10^{-3}$. 


To model a skewed distribution of tasks, we first sample, for each task $i$, a value $q_i \in [0, 1]$ following a power law distribution with parameter $a=1, 2$ (the pdf of the distribution is $f(x) \propto a x^{a-1}$), representing two level of skewness. We then normalize $q_i$s such that they sum up to $20$.
Then we construct $\Omega$ by sampling each $(i,j)$ with probability $q_i$. This way, in expectation, each user contributes to $20$ tasks, and $q_i n$ users contribute to task $i$. The distribution of $q_i n$ is plotted in Figure~\ref{fig:powerlaw_data}. We partition the data into training and test sets following an 80-20 random split.


We run AdaDPALS and AdaDPSGD with weights set to $\omega_i \propto n_i^{-\mu}$ for varying $\mu$, where $n_i$ is the number of users contributing to task $i$. The weights are normalized per user. Setting $\mu=0$ corresponds to the standard unweighted objective function. Since the purpose of the experiment is to validate the analysis and illustrate the algorithm in an stylized setting, we run the algorithm using the exact $n_i$s instead of estimating them privately.
In Figure~\ref{fig:synth1.0}-\ref{fig:synth2.0dpsgd}, we plot the RMSE for different values of $\mu$ and $\epsilon$, for both algorithms and skewness. We set $\delta = 10^{-5}$.

\begin{figure}[h]
\centering
\begin{subfigure}[b]{0.49\textwidth}
\includegraphics[width=1\textwidth]{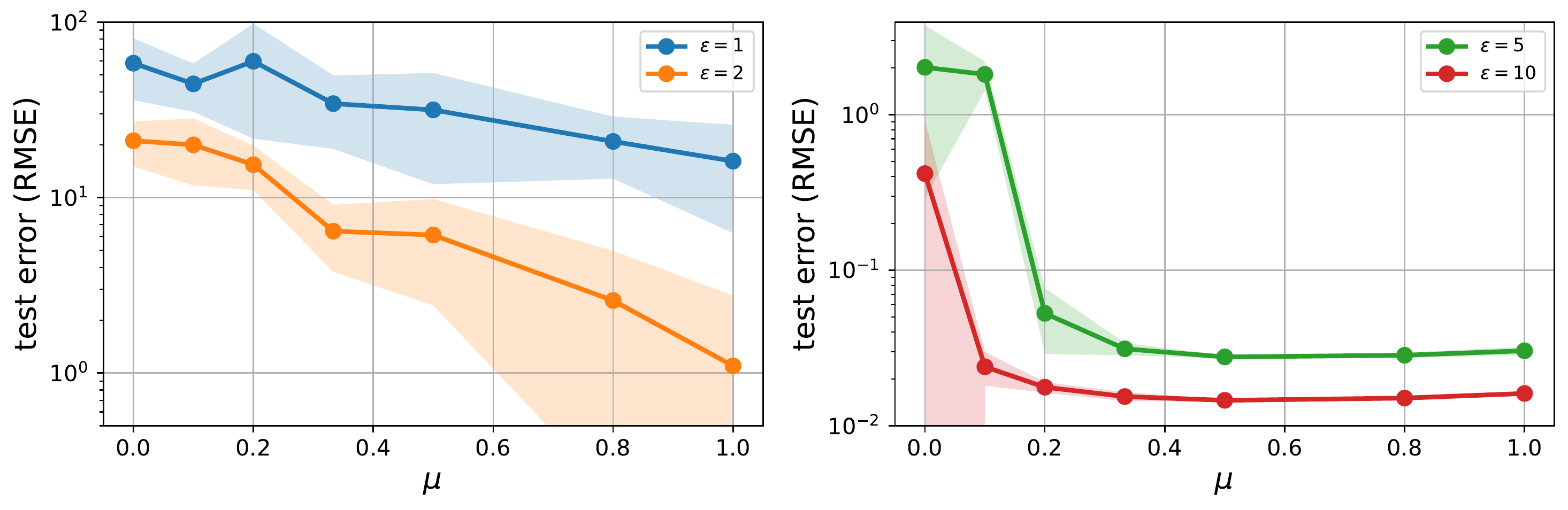}
\caption{DPMultiRegression. Skewness $a=1$.}
\label{fig:synth1.0}
\end{subfigure}
\begin{subfigure}[b]{0.49\textwidth}
\includegraphics[width=1\textwidth]{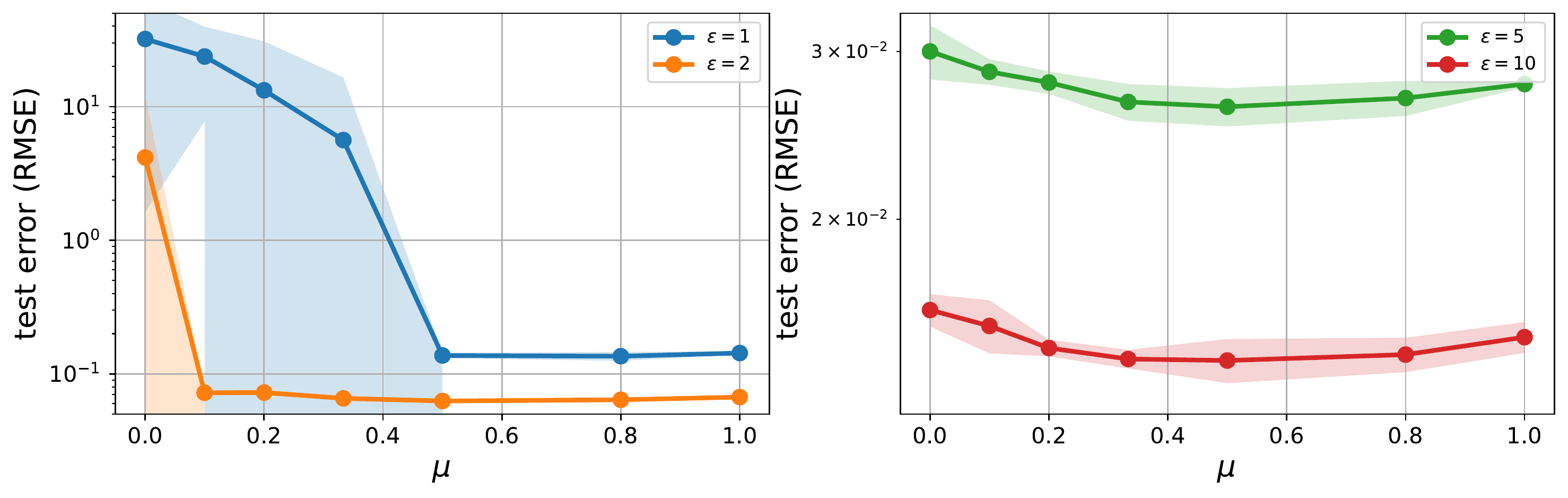}
\caption{DPMultiRegression. Skewness $a=2$.}
\label{fig:synth2.0}
\end{subfigure}
\begin{subfigure}[b]{0.49\textwidth}
\includegraphics[width=1\textwidth]{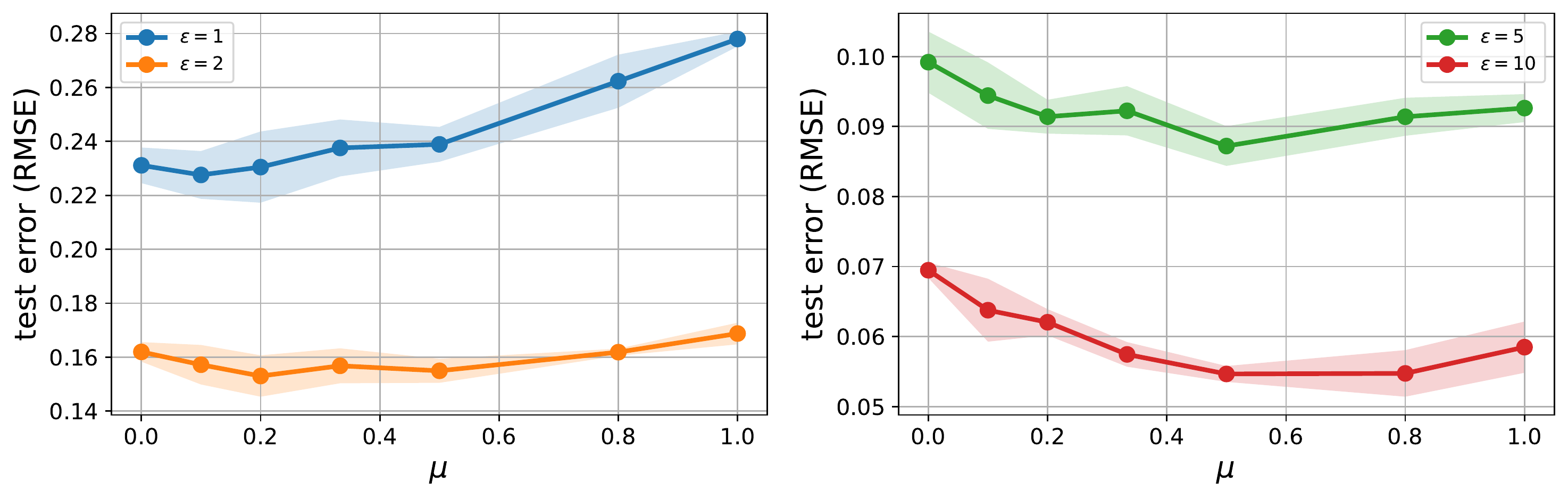}
\caption{DPSGD. Skewness $a=1$.}
\label{fig:synth1.0dpsgd}
\end{subfigure}
\begin{subfigure}[b]{0.49\textwidth}
\includegraphics[width=1\textwidth]{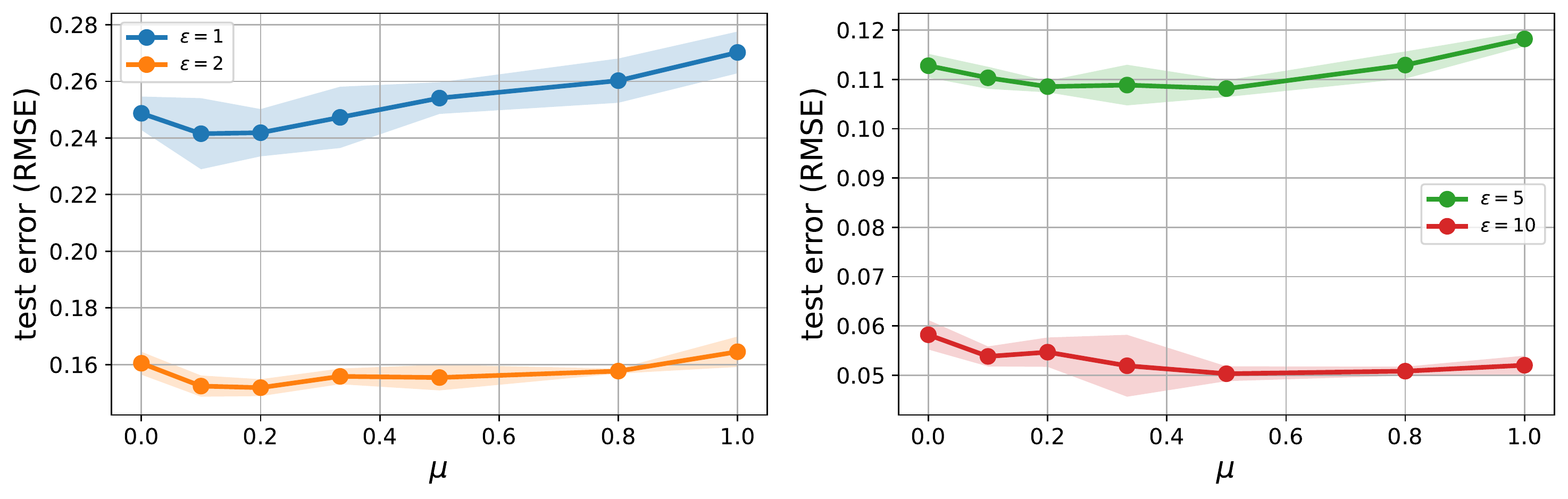}
\caption{DPSGD. Skewness $a=2$.}
\label{fig:synth2.0dpsgd}
\end{subfigure}
\caption{RMSE vs. $\mu$ on synthetic data. $\delta = 10^{-5}$.}
\end{figure}

We observe that with uniform weights ($\mu=0$), the quality of the estimate can be quite poor, especially at lower values of $\epsilon$. The quality improves significantly as we increase $\mu$. 
Theoretical analysis (Equation \eqref{eq:w}) suggests $\mu = 1/2$ as the optimal choice, yet the empirically optimal $\mu$ can vary case by case.
This may be due to the fact that the analysis makes no assumptions about the feature and label distribution, while in the experiment, the data is sampled from a linear model; we leave further analysis into designing better weighting strategies for {\em easier} data for future work.

\section{Additional Experiments on MovieLens and Million Song Data}
\label{app:movielens_expt}
In this section, we report additional experiments on MovieLens and Million Song Data~\citep{bertinmahieux2011msd} data sets.

\begin{table}[h!]
\centering
\small
\caption{Statistics of the MovieLens and Million Song data sets.}
\label{tbl:movielens}
\begin{tabular}{cccccc} 
\toprule
 & ML10M & ML20M & MSD \\ [0.5ex]
\midrule
$n$ (number of users)  & 69,878 & 136,677 & 571,355 \\
$m$ (number of items)  & 10,677 & 20,108 & 41,140 \\
$|\Omega|$ (number of observations) & 10M & 9.99M & 33.63M \\
\bottomrule
\end{tabular}
\end{table}


\begin{figure}[h]
\centering
\includegraphics[width=.36\textwidth]{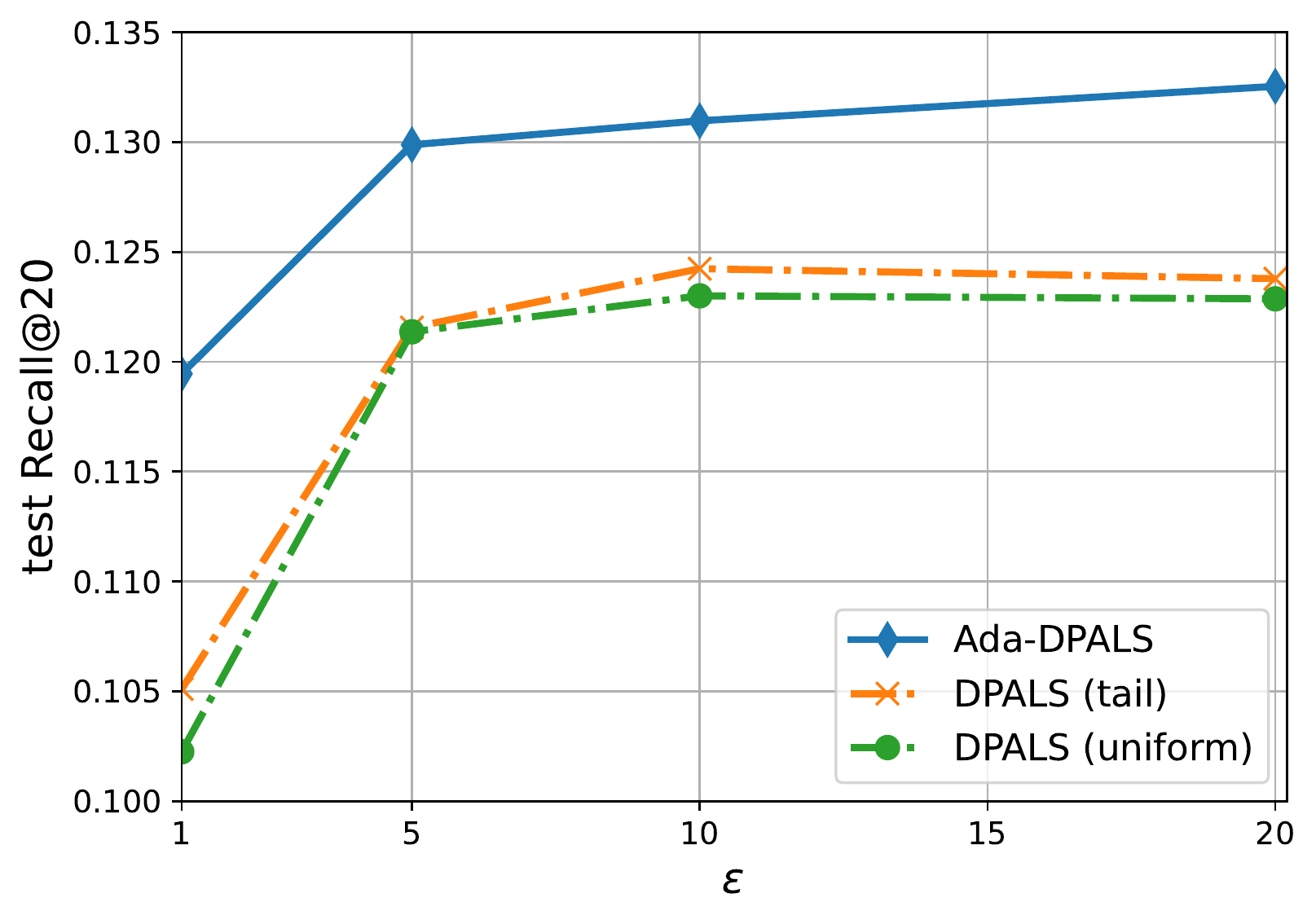}
\caption{Privacy-utility trade-off on the Million Song Data benchmark.}
\label{fig:app_msd}
\end{figure}

We report additional experiments on the larger Million Song Data benchmark (abbreviated as MSD), see Figures~\ref{fig:app_msd}-\ref{fig:app_sliced}. Due to the larger size of the data, we were only able to tune models of smaller size (embedding dimension of up to $32$), but we expect the trends reported here to persist in larger dimension.

\begin{figure}[h]
\centering
\begin{subfigure}[b]{0.37\textwidth}
\includegraphics[width=\textwidth]{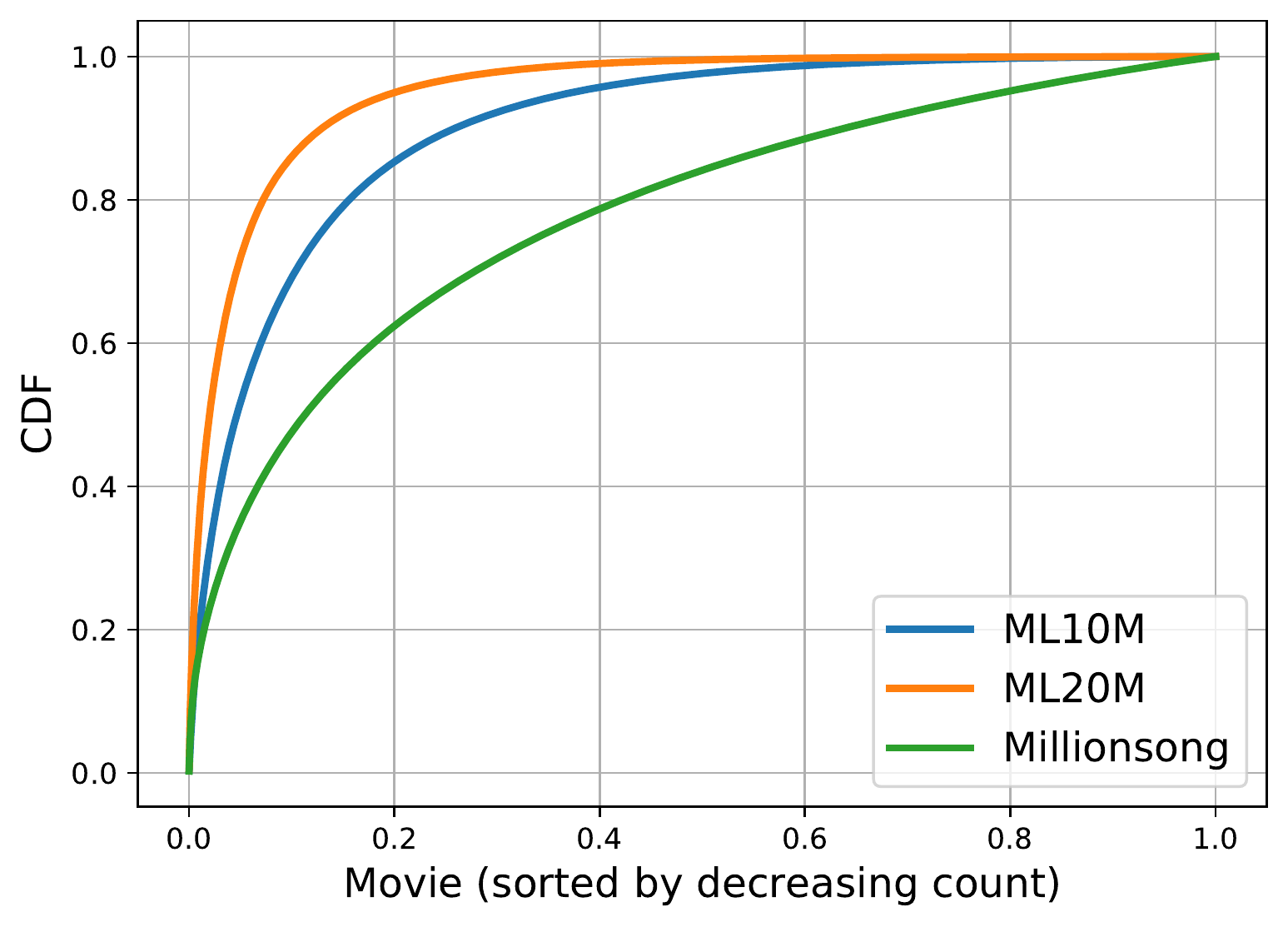}
\caption{Cumulative distribution function}
\end{subfigure}%
\hspace{0.1\textwidth}%
\begin{subfigure}[b]{0.37\textwidth}
\includegraphics[width=\textwidth]{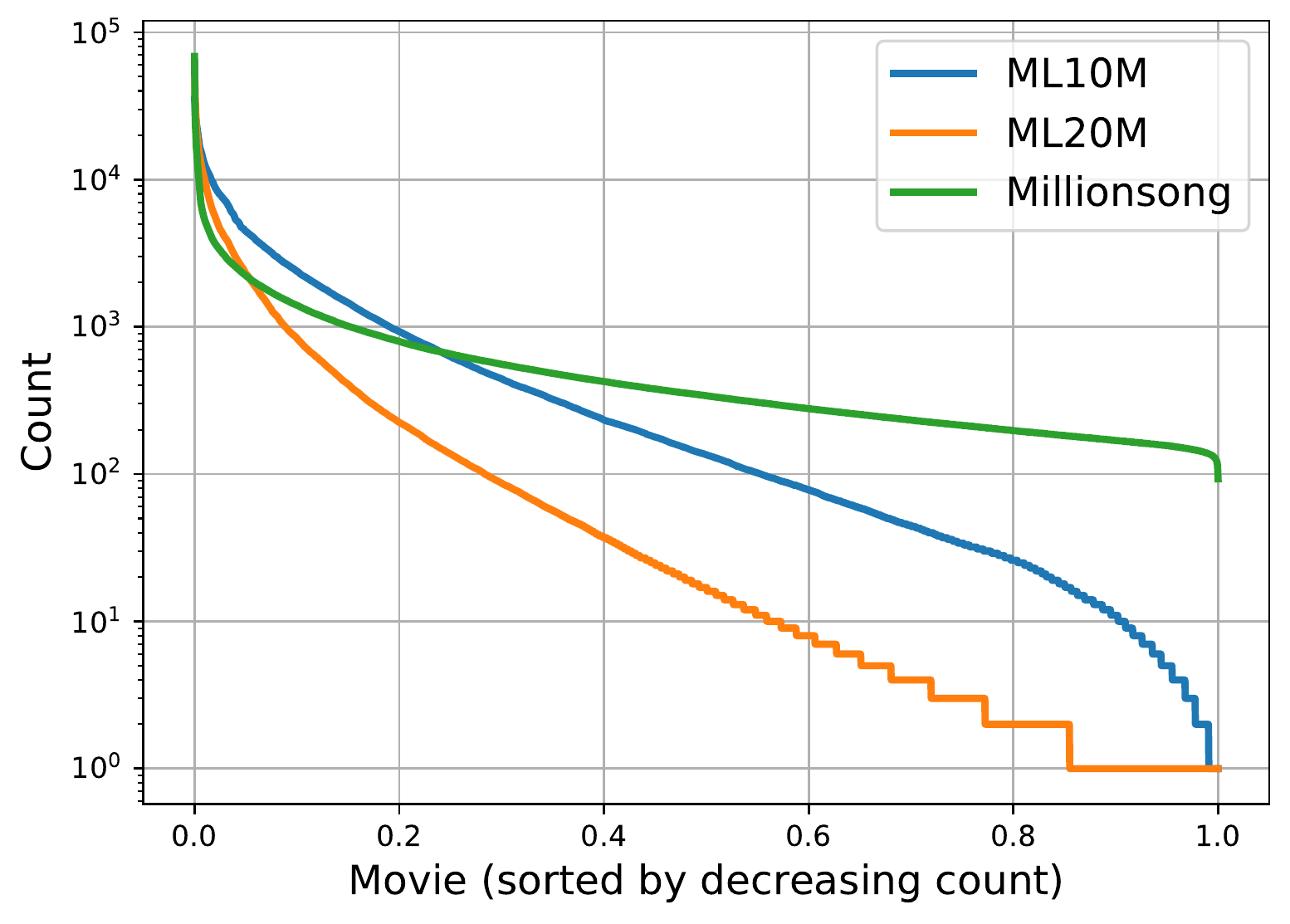}
\caption{Sorted task counts}
\end{subfigure}
\caption{Task distribution skew in MovieLens and Million Song Data.}
\label{fig:app_distribution_skew}
\end{figure}

\subsection{Detailed experimental setting}
We follow the same experimental setting as~\cite{jain2018differentially,DPALS,jain2021private}. When reporting $(\epsilon, \delta)$ DP guarantees, we use values of $\epsilon \in [1, 20]$, and take $\delta = 10^{-5}$ for ML10M and $\delta = 1/n$ for ML20M and MSD. All hyper-parameter values are specified in the source code.

We solve a private matrix completion problem, where the training data is a partially observed rating matrix $(y_{ij})_{(i,j) \in \Omega}$ (where $y_{ij}$ is the rating given by user $j$ to item $i$), and the goal is to compute a low-rank approximation $Y \approx UV^\top$, by minimizing the following objective function:
\[
L(U, V) = \sum_{(i,j) \in \Omega} (\innerp{u_i}{v_j} - y_{ij})^2 + \lambda|u_i|^2 + \lambda|v_j|^2.
\]
The matrix $U \in \Rbb^{n \times d}$ represents item embeddings, and the matrix $V \in \Rbb^{m \times d}$ represents user embeddings.

\paragraph{Algorithms}
We consider a family of alternating minimization algorithms studied by~\cite{DPALS,jain2021private}, in which we will use our algorithm as a sub-routine. This is summarized in Algorithm~\ref{alg:am}.

\begin{algorithm}[h]
\caption{Alternating Minimization for Private Matrix Completion\label{alg:am}}
\begin{algorithmic}[1]
\STATE{\bf Inputs}: Training data $\{y_{ij}\}_{(i,j) \in \Omega}$, number of steps $T$, initial item matrix $\hat U^{(0)}$, pre-processing RDP budget $\beta_0$, training RDP budget $\beta$.\\
\STATE Pre-process the training data (with RDP budget $\beta_0$).
\FOR{$1 \leq t\leq T$}
\STATE $\hat V^{(t)}\leftarrow \argmin_{V} L(\hat U^{(t-1)}, V)$\\
\STATE Compute a differentially private solution $\hat U^{(t)}$ of $\min_{U} L(U, \hat V^{(t)})$ (with RDP budget $\beta$) \label{line:u_update}
\ENDFOR\\
\STATE {\bf Return} $\hat U^{(T)}$
\end{algorithmic}
\end{algorithm}
Algorithm~\ref{alg:am} describes a family of algorithms, that includes DPALS~\citep{DPALS}, and the AltMin algorithm of~\cite{jain2021private}. It starts by pre-processing the training data (this includes centering the data, and computing private counts to be used for sampling or for computing adaptive weights). Then, it alternates between updating $V$ and updating $U$. Updating $V$ is done by computing an exact least squares solution, while updating $U$ is done differentially privately. By simple RDP composition, the final mechanism is $(\alpha, \alpha(\beta_0 + T\beta))$-RDP, which we translate to $(\epsilon, \delta)$-DP.

\begin{remark}
The algorithm only outputs the item embedding matrix $\hat U$. This is indeed sufficient for the recommendation task: given an item matrix $\hat U$ (learned differentially privately), to generate predictions for user $j$, one can compute the user's embedding $v_j^* = \argmin_{v\in \Rbb^d} \sum_{i \in \Omega^j} (\innerp{\hat  u_i}{v} - y_{ij})^2 + \lambda|v|^2$, then complete the $j$-th column by computing $\hat Uv_j^*$. This minimization problem only depends on the published matrix $\hat U$ and user $j$'s data, therefore can be done in isolation for each user (for example on the user's own device), without incurring additional privacy cost. This is sometimes referred to as the billboard model of differential privacy, see for example~\citep{jain2021private}.
\end{remark}

Notice that updating the item embeddings $U$ (Line~\ref{line:u_update} in Algorithm~\ref{alg:am}) consists in solving the following problem differentially privately:
\[
\min_{U} \sum_{i = 1}^m \sum_{j \in \Omega_i} (\innerp{u_i}{v_j} - y_{ij})^2 + \lambda |u_i|^2.
\]
This corresponds to our multi-task problem~\eqref{eq:obj}, where we identify each item to one task, with parameters $\theta_i = u_i$, features $x_{ij} = v_j$, and labels $y_{ij}$. Then we can either apply Algorithm~\ref{alg:ssp} (SSP with adaptive weights), or Algorithm~\ref{alg:dpsgd} (DPSGD with adaptive weights) to compute the item update. Our algorithms and the baselines we compare to are summarized in Table~\ref{tbl:algorithms}.
\begin{table}[h!]
\centering
\small
\caption{Algorithms used in experiments.}
\label{tbl:algorithms}
\begin{tabular}{lp{5cm}l}
\toprule
Method & Sub-routine used to update $U$ (Line~\ref{line:u_update} in Algorithm~\ref{alg:am}) & Budget allocation method \\ [0.5ex]
\midrule
DPALS (uniform)  & SSP & Uniform sampling \\
DPALS (tail) & SSP & Tail-biased sampling of~\citep{DPALS} \\
AdaDPALS & Algorithm~\ref{alg:ssp} & Adaptive weights (eq.~\eqref{eq:w}-\eqref{eq:w_final}) \\
DPSGD (uniform)  & DPSGD & Uniform sampling \\
DPSGD (tail) & DPSGD & Tail-biased sampling of~\citep{DPALS} \\
AdaDPSGD & Algorithm~\ref{alg:dpsgd} & Adaptive weights (eq.~\eqref{eq:w}-\eqref{eq:w_final}) \\
\bottomrule
\end{tabular}
\end{table}

We compare to two algorithms: the DPALS method~\citep{DPALS} which is the current SOTA on these benchmarks, and the DPSGD algorithm (which we found to perform well in the full-batch regime, at the cost of higher run times). In each case, we compare three methods to perform budget allocation: using uniform sampling, the tail-biased sampling heuristic of~\cite{DPALS}, and our adaptive weights method. Note that tail sampling is \emph{already adaptive to the task skew}: it estimates task counts and uses them to select the tasks to sample for each user.

Both for tail-sampling and adaptive weights, movie counts are estimated privately, and we account for the privacy cost of doing so. The proportion of RDP budget spent on estimating counts (out of the total RDP budget) is $20\%$ for $\epsilon = 20$, $14\%$ for $\epsilon = 5$ and $12\%$ for $\epsilon = 1$. This was tuned on the DPALS baseline (with tail sampling).


\paragraph{Metrics}
\def\OmegaT{\Omega^{\text{test}}}
\def\OjT{{\Oj}^{\text{test}}}
The quality metrics that the benchmarks use are defined as follows: Let $\OmegaT$ be the set of test ratings. Then for a given factorization $U, V$,
\[
\text{RMSE}(U, V) = \sqrt{\frac{\sum_{(i,j) \in \OmegaT} (\innerp{u_i}{v_j} - y_{ij})^2}{|\OmegaT|}}
\]
Recall is defined as follows. For a given user $i$, let $\OjT$ be the set of movies rated by the user $j$. If we denote by $\hat\Oj$ the set of top $k$ predictions for user $j$, then the recall is defined as
\[
\text{Recall@k} = \frac{1}{n}\sum_{j = 1}^n \frac{|\OjT \cap \hat\Omega^j|}{\min(k, |\OjT|)}.
\]

\begin{figure}[h]
\centering
\begin{subfigure}[b]{0.33\textwidth}
\includegraphics[width=\textwidth]{figures/ml10m_sliced_eps1.pdf}
\caption{ML10M, $\epsilon = 1$.}
\label{fig:app_10m_a}
\end{subfigure}\hfill
\begin{subfigure}[b]{0.33\textwidth}
\includegraphics[width=\textwidth]{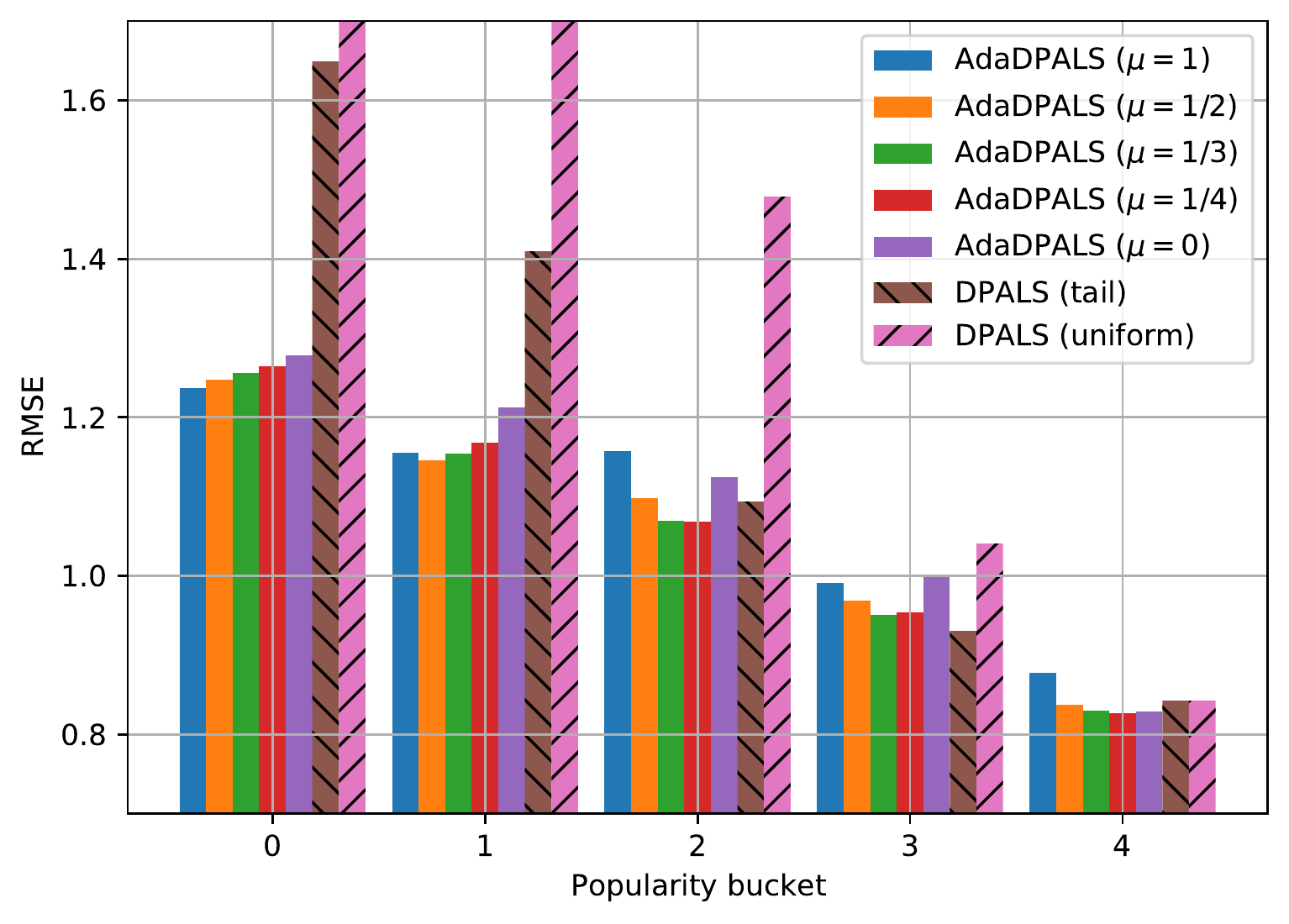}
\caption{ML10M, $\epsilon = 5$.}
\end{subfigure}
\begin{subfigure}[b]{0.33\textwidth}
\includegraphics[width=\textwidth]{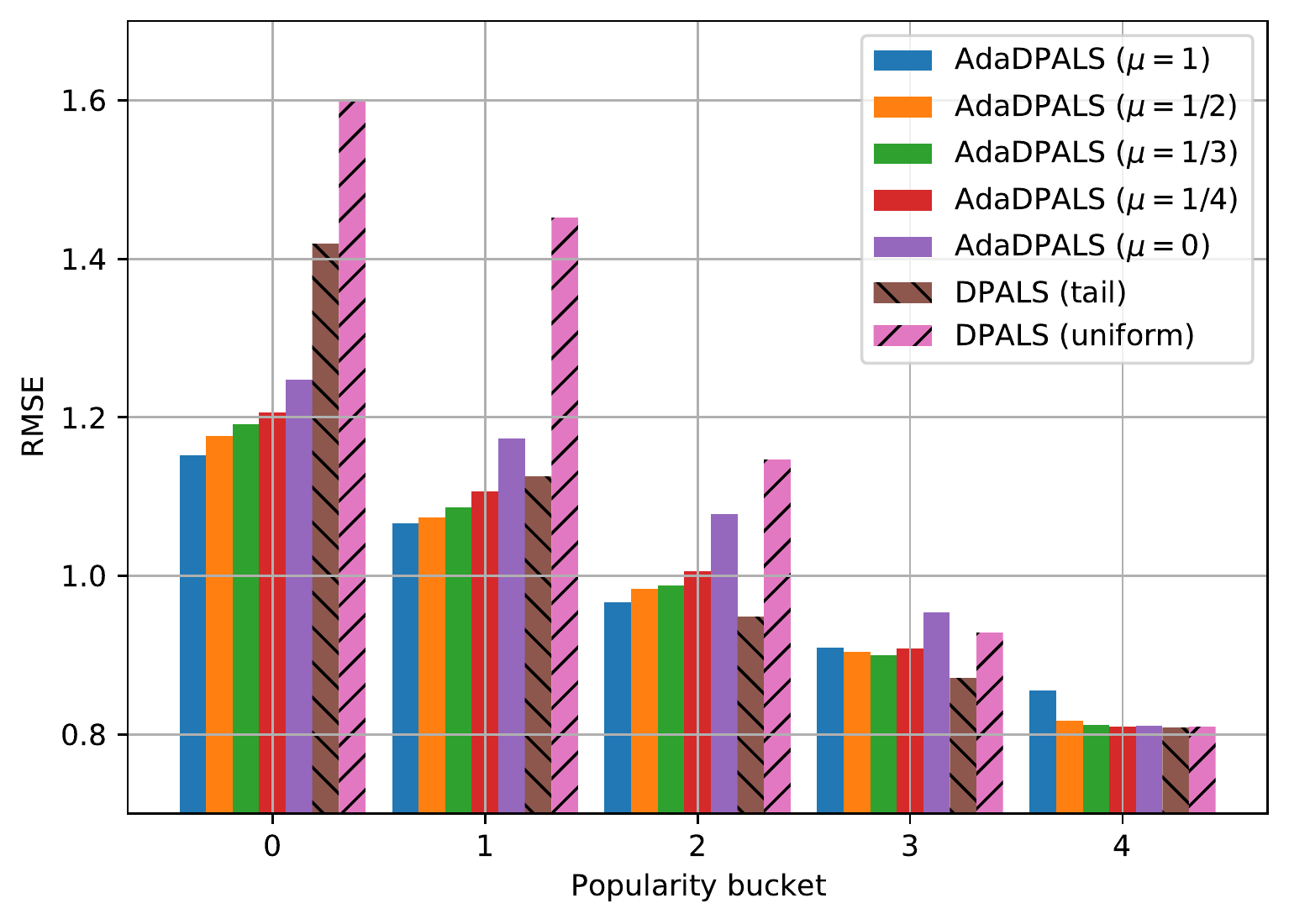}
\caption{ML10M, $\epsilon = 20$.}
\end{subfigure}

\begin{subfigure}[b]{0.33\textwidth}
\includegraphics[width=\textwidth]{figures/ml20m_sliced_eps1.pdf}
\caption{ML20M, $\epsilon = 1$.}
\end{subfigure}\hfill
\begin{subfigure}[b]{0.33\textwidth}
\includegraphics[width=\textwidth]{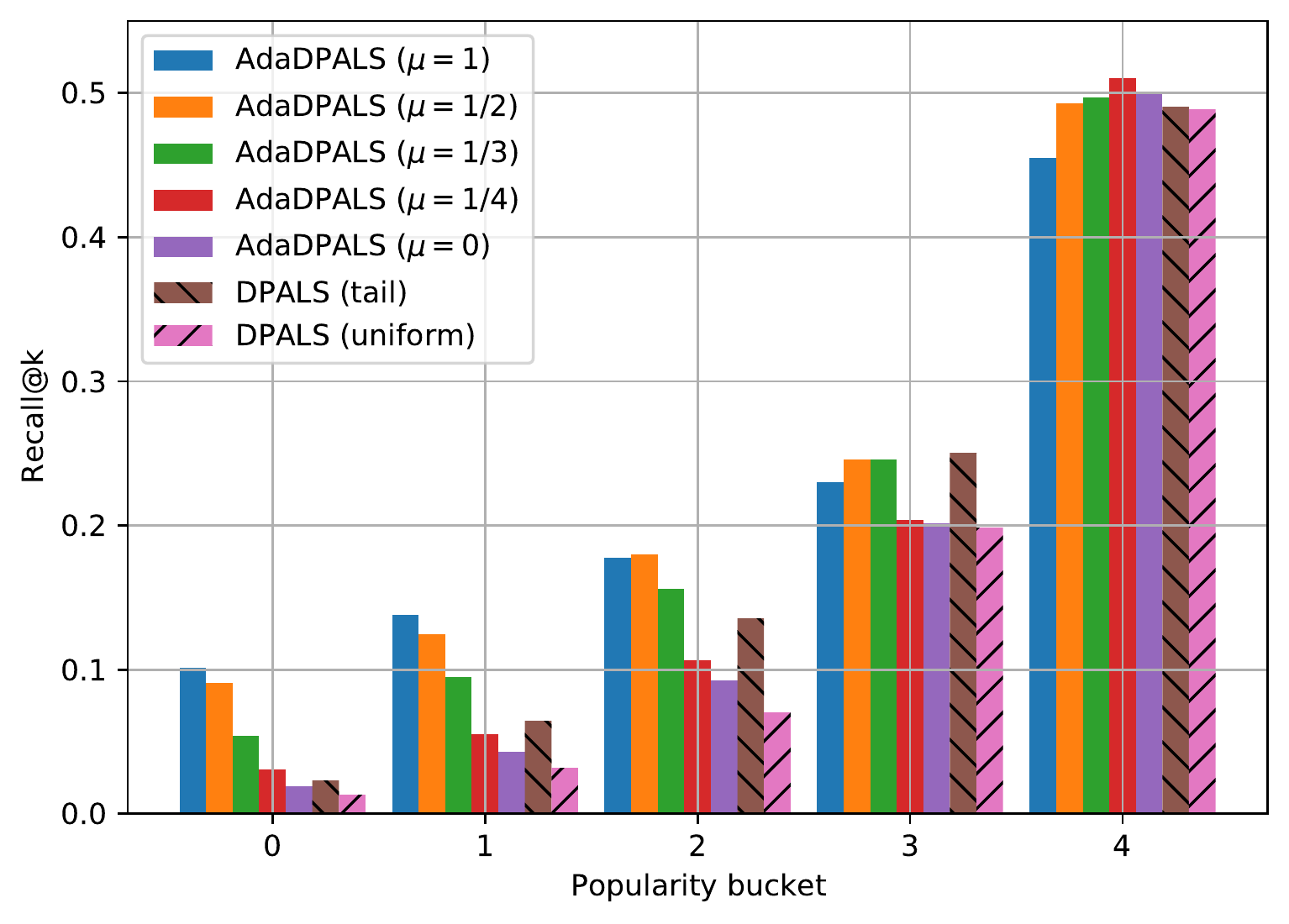}
\caption{ML20M, $\epsilon = 5$.}
\end{subfigure}
\begin{subfigure}[b]{0.33\textwidth}
\includegraphics[width=\textwidth]{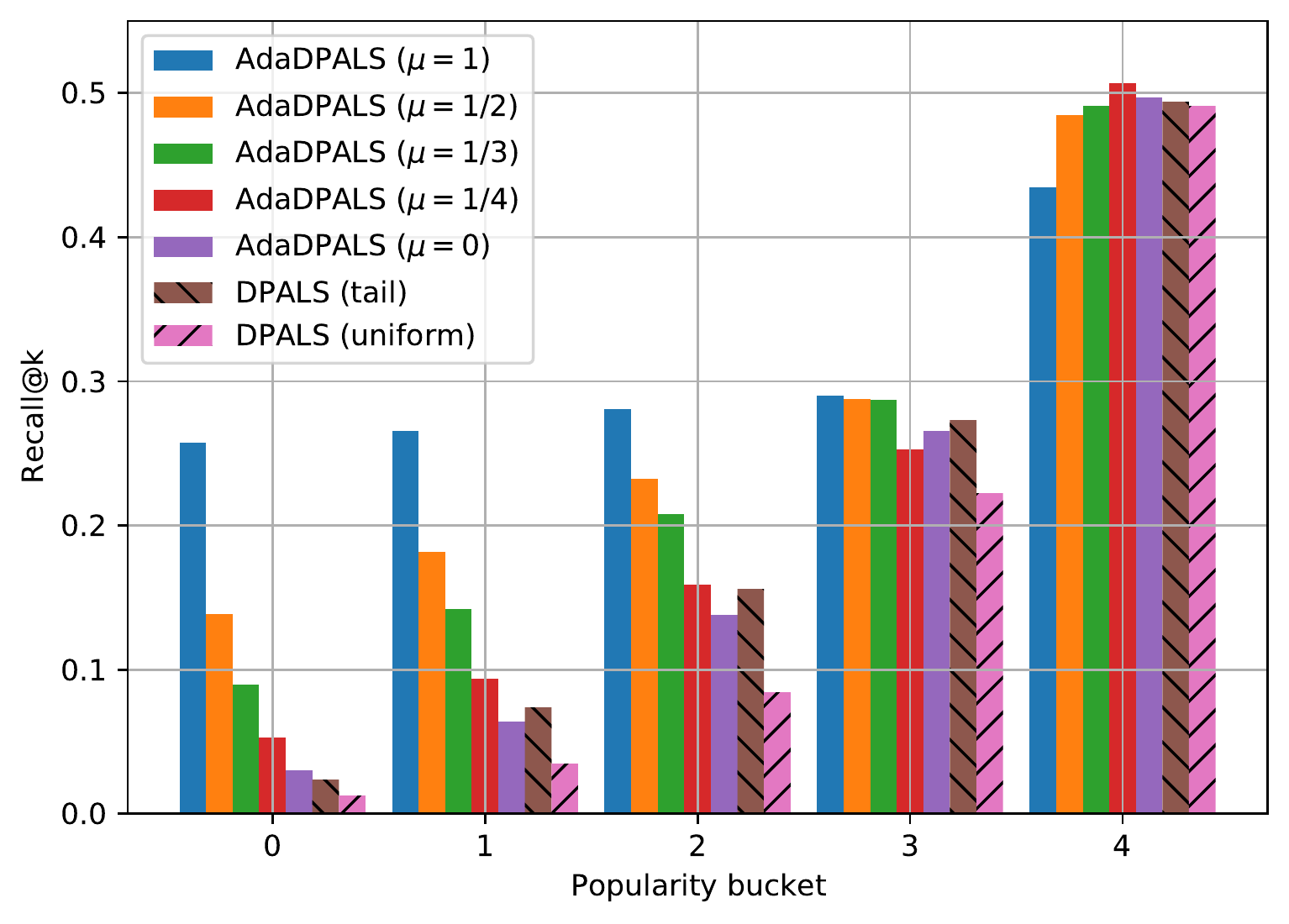}
\caption{ML20M, $\epsilon = 20$.}
\end{subfigure}

\begin{subfigure}[b]{0.33\textwidth}
\includegraphics[width=\textwidth]{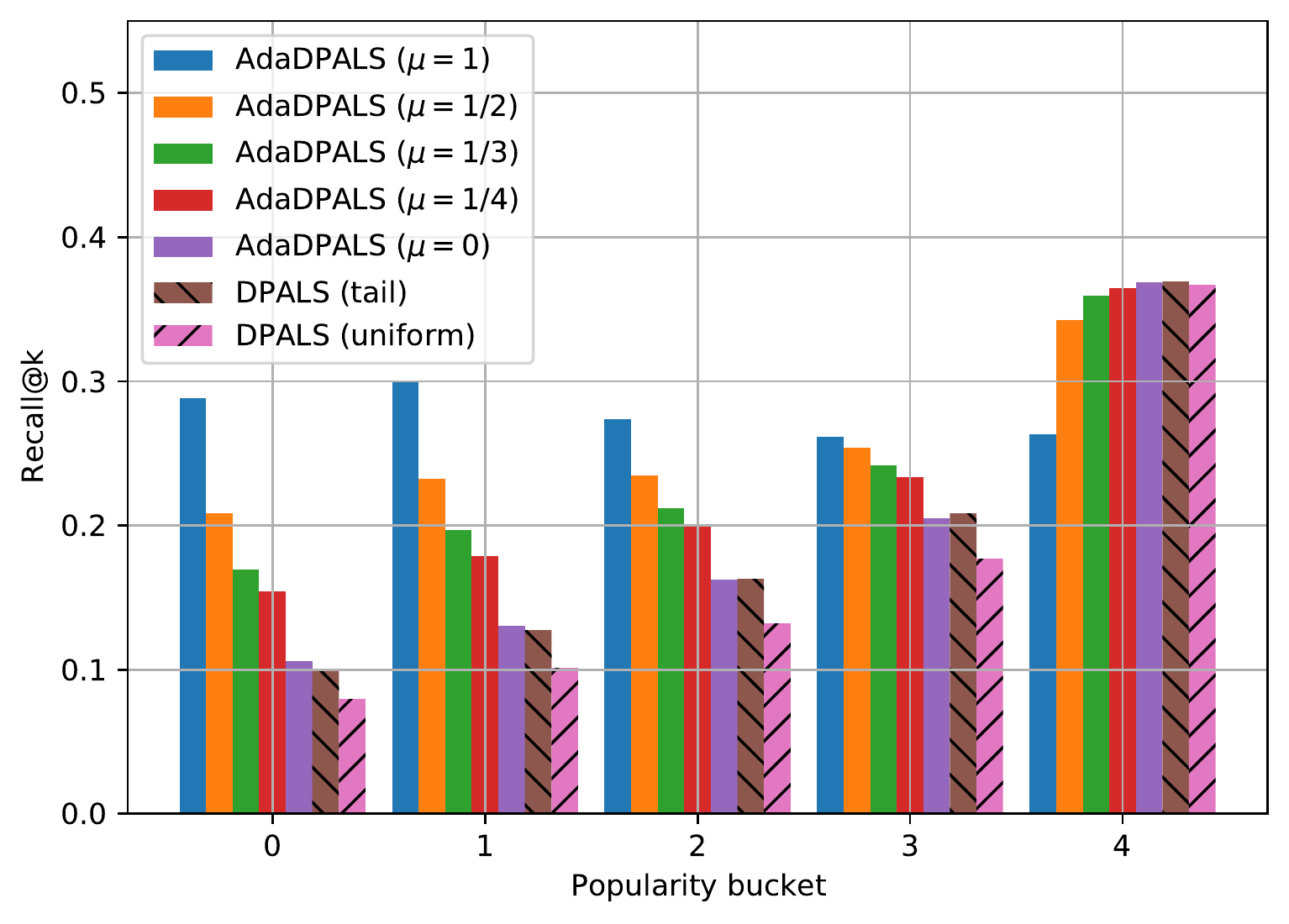}
\caption{MSD, $\epsilon = 1$.}
\end{subfigure}\hfill
\begin{subfigure}[b]{0.33\textwidth}
\includegraphics[width=\textwidth]{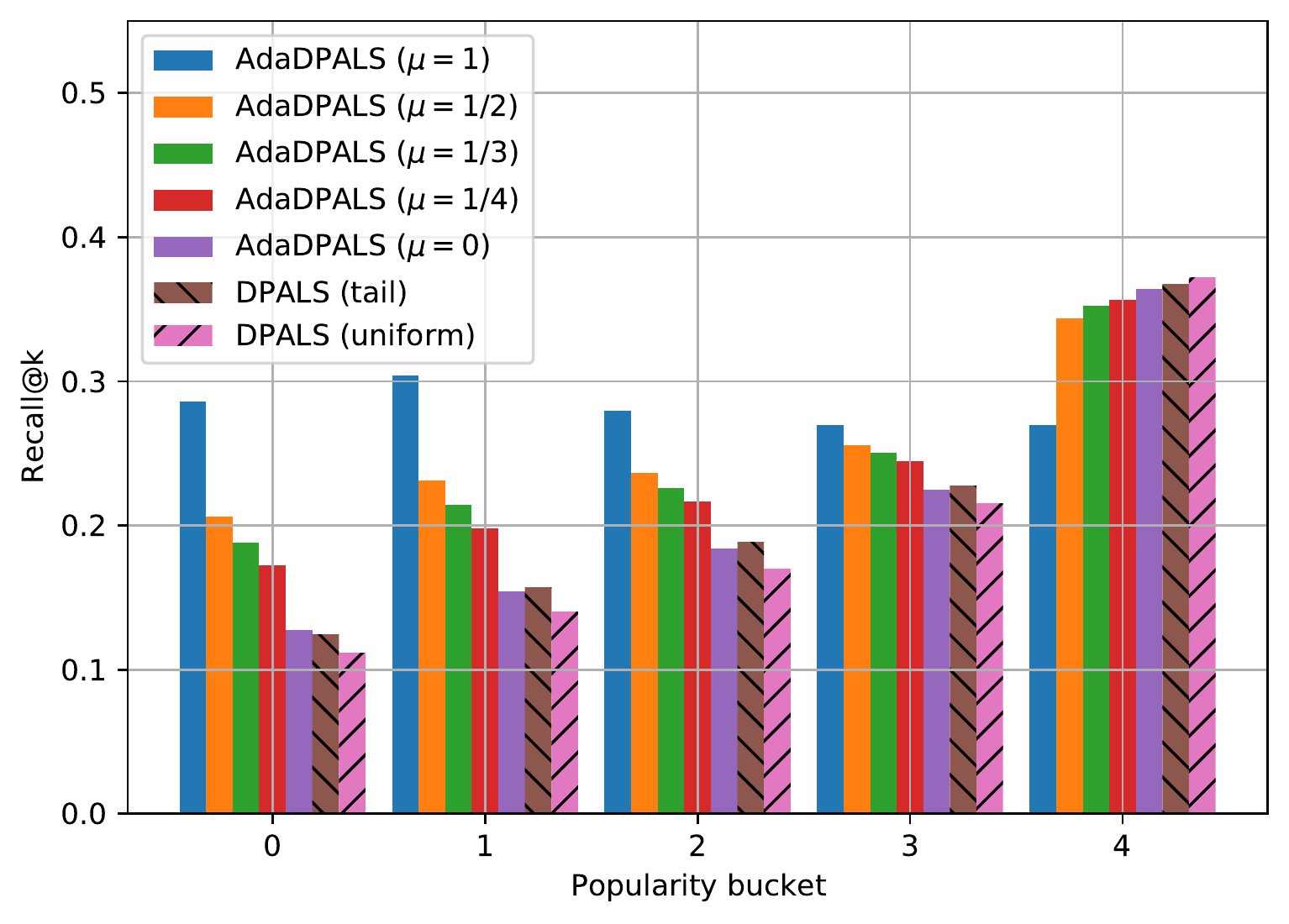}
\caption{MSD, $\epsilon = 5$.}
\end{subfigure}
\begin{subfigure}[b]{0.33\textwidth}
\includegraphics[width=\textwidth]{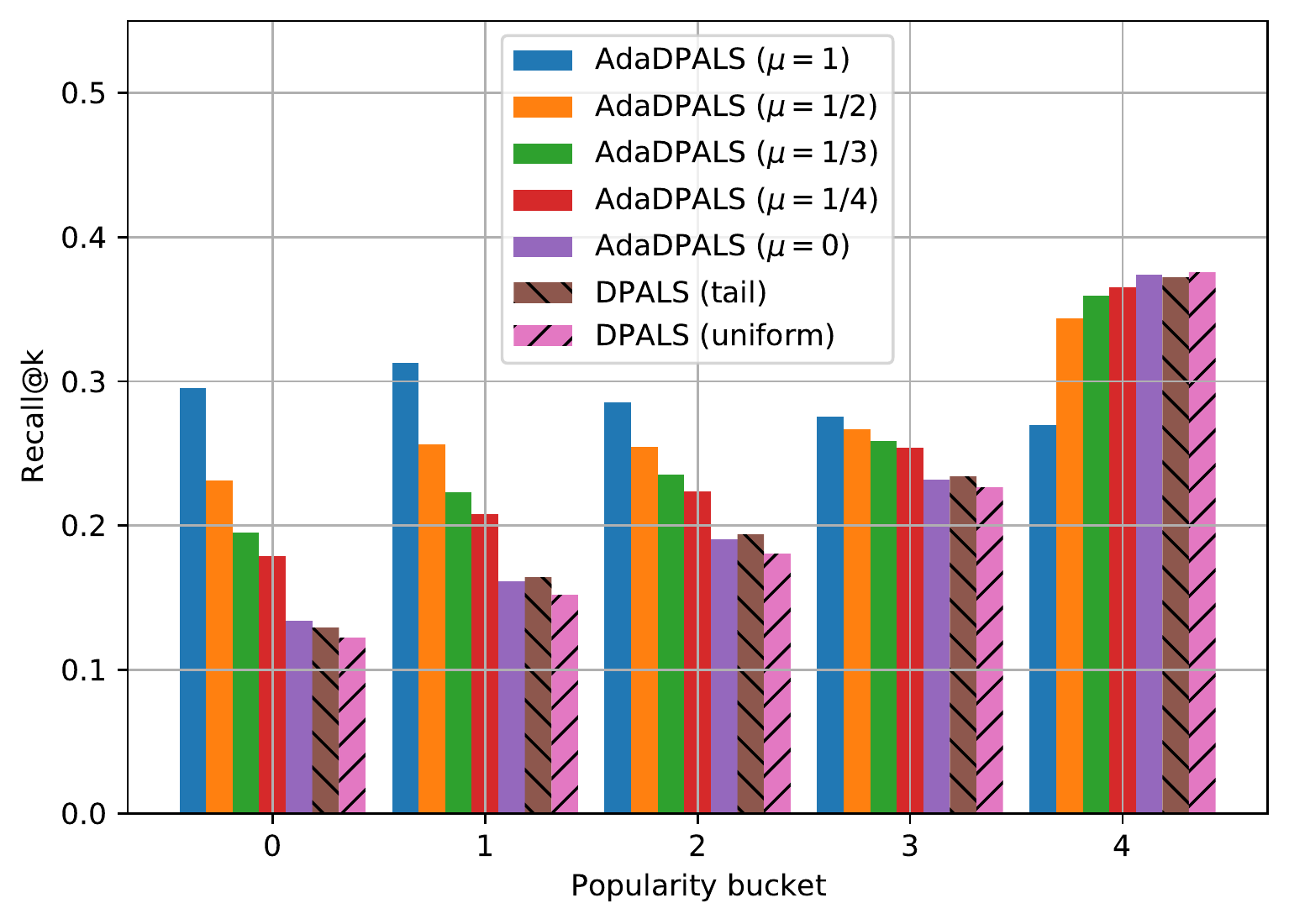}
\caption{MSD, $\epsilon = 20$.}
\end{subfigure}
\caption{Metrics sliced by movie popularity (i.e. frequency), on the ML10M, ML20M, and MSD benchmark, using the DPALS method. Each bucket contains an equal number of movies. Buckets are ordered by increasing frequency.}
\label{fig:app_sliced}
\end{figure}

\begin{figure}[h!]
\centering
\includegraphics[width=0.33\textwidth]{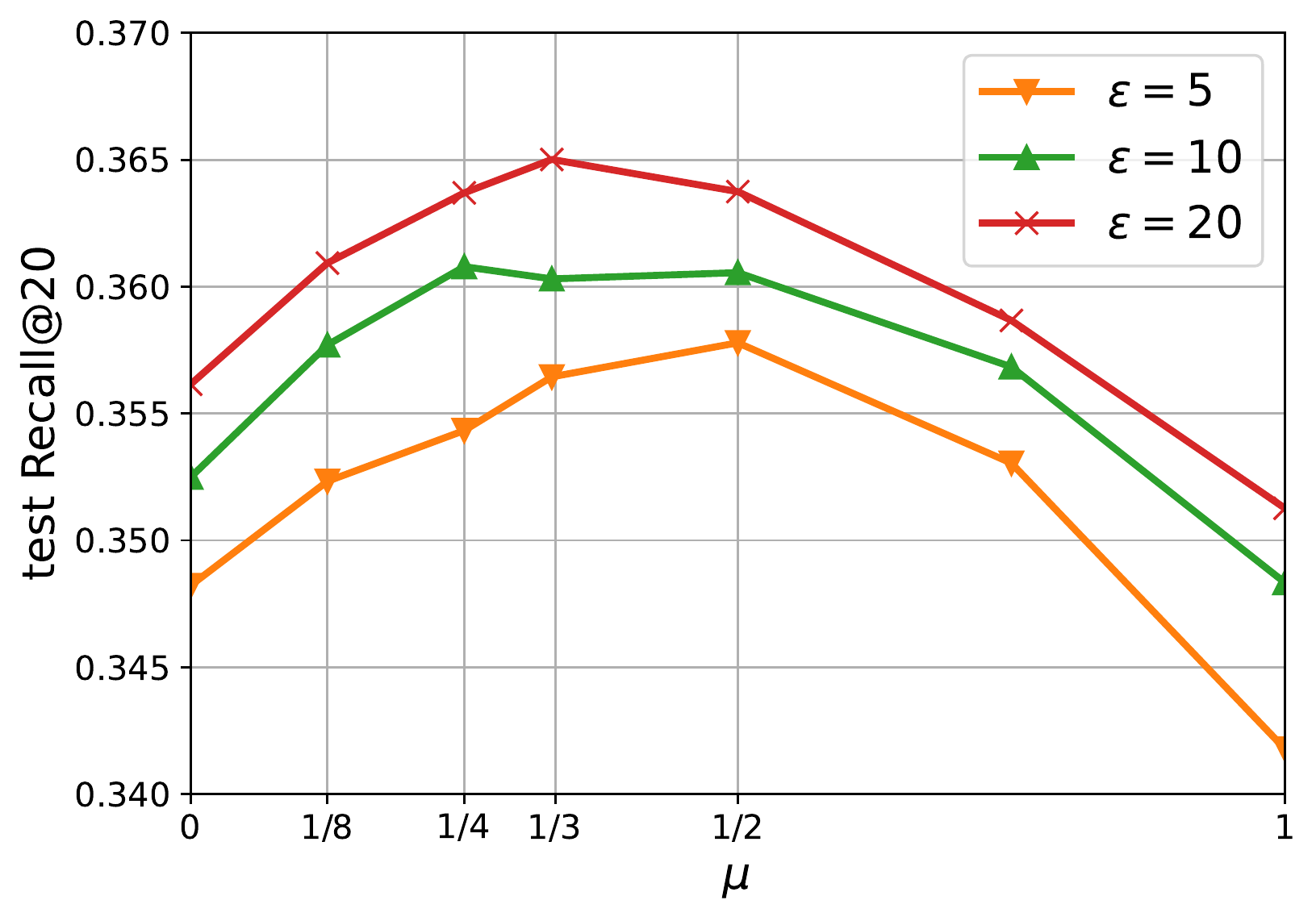}
\hspace{.1\textwidth}
\includegraphics[width=0.33\textwidth]{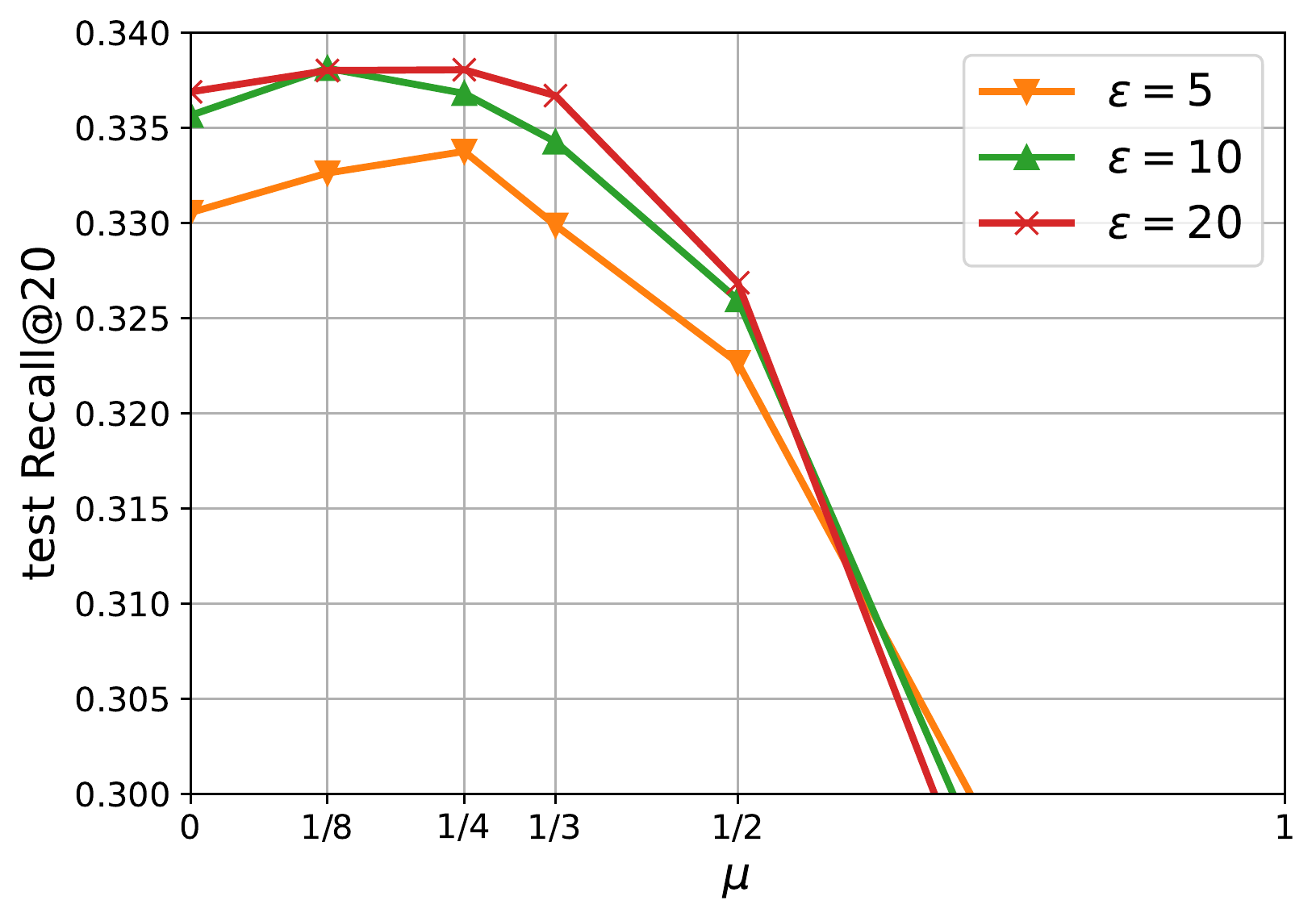}
\caption{Comparison of the adaptive weights method with different values of $\mu$ on ML20M, when applied to DPALS (left) and DPSGD (right).}
\label{fig:app_tradeoff_mu}
\end{figure}


\subsection{Quality impact on head/tail movies}

In Figure~\ref{fig:app_sliced}, we report sliced RMSE (on ML10M) and Recall (on ML20M and MSD), for different values of $\epsilon$.

The following trend can be observed on all benchmarks, across all values of $\epsilon$. DPALS with tail sampling improves upon uniform sampling, especially on the tail buckets. Our method (AdaDPALS) further improves upon tail sampling. The improvement is quite significant for lower values of $\epsilon$, and for tail buckets. For example, on ML10M with $\epsilon = 1$ (Figure~\ref{fig:app_10m_a}) we observe a large gap in RMSE, \emph{across all values of $\mu$}; the improvement is at least 21.6\% on bucket 0, at least 23.7\% on bucket 1, at least 22.8\% on bucket 3, and at least 8.4\% on bucket 4. The gap narrows as $\epsilon$ increases, which is consistent with the global privacy/utility trade-off plots in Figure~\ref{fig:trade-off}.

The exponent $\mu$ controls the trade-off between head and tail tasks: recall that the weights are defined as $\omega_i \propto 1/\hat n_i^{\mu}$ where $\hat n_i$ are the count estimates. A larger value of $\mu$ induces larger weights (and hence better quality) on the tail. This is visible on both benchmarks and across all values of $\epsilon$: on lower buckets 0 and 1, better performance is obtained for larger values of $\mu$, while the trend is reversed for the top bucket.

When comparing performance on the overall objective, we find that the best performance is typically achieved when $\mu = 1/4$, see Figures~\ref{fig:mu_ml10m} and~\ref{fig:app_tradeoff_mu}. When applied to DPALS performance remains high for a range of $\mu \in [1/4, 1/2]$. When applied to DPSGD, performance seems more sensitive to $\mu$, and the best performance is achieved for $\mu = 1/4$. 

\subsection{Qualitative evaluation on ML20M}
To give a qualitative evaluation of the improvements achieved by our method, we inspect a few example queries. Though anecdotal, these examples give a perhaps more concrete illustration of some of the quality impact that our method can have, especially on tail recommendations. We will compare the following models: the ALS non-private baseline (same model in Figure~\ref{fig:trade-off}-b), and two private models with $\epsilon = 1$: the DPALS method with tail-biased sampling, and Ada-DPALS (with adaptive weights) with $\mu = 1/3$ (we found that values of $\mu \in [1/4, 1/2]$ are qualitatively similar).

We evaluate the models by displaying the nearest neighbor movies to a given query movie, where the similarity between movies is defined by the learned movie embedding matrix $\hat V$ (the similarity between two movies $i_1$ and $i_2$ is $\innerp{\hat v_{i_1}}{\hat v_{i_2}}$. We select a few examples in Table~\ref{tbl:neighbors}; for additional examples, the models can be trained and queried interactively using the provided code. We select examples with varying levels of frequency (shown in the last column), to illustrate how privacy may affect quality differently depending on item frequency.

The first query is {\bf The Shawshank Redemption}, the most frequent movie in the data set. We see a large overlap of the top nearest neighbors according to all three models. In other words, privacy has little impact on this item. A similar observation can be made for other popular items.

The second query is {\bf Pinocchio}, a Disney animated movie from 1940. The nearest neighbors according to the non-private baseline (ALS) are other Disney movies from neighboring decades. The DPALS results are noisy: some of the top neighbors are Disney movies, but the fifth and sixth neighbors seem unrelated (Action and Drama movies). The AdaDPALS model returns more relevant results, all of the neighbors being Disney movies.

The third example is {\bf Harry Potter and the Half-Blood Prince}, an Adventure/Fantasy movie. The nearest neighbors from the ALS baseline are mostly other Harry Potter movies. The DPALS model misses the Harry Potter neighbors, and instead returns mostly action/adventure movies with varying degrees of relevance. The AdaDPALS model recovers several of Harry Potter neighbors.

The next example is {\bf Nausicaä of the Valley of the Wind}, a Japanese animated movie from Studio Ghibli, released in 1984. The nearest neighbors according to the ALS model are similar movies from Studio Ghibli. The neighbors returned by DPALS are much more noisy: The first result (Spirited Away) is a Studio Ghibli movie and is the most relevant in the list. Other neighbors in the list are arguably unrelated to the query. AdaDPALS returns much more relevant results, all neighbors are Japanese animation movies, and five out of six are from Studio Ghibli.

The last example is {\bf Interstellar}, a Sci-Fi movie released in 2014. The ALS neighbors are other popular movies released around the same time (2013-2014) with a bias towards Action/Sci-Fi. Both private models (DPALS and AdaDPALS) return mixed results. Some results are relevant (Action/Sci-Fi movies) but others are arguably much less relevant, for example the third DPALS neighbor is a Hitchcock movie from 1945. AdaDPALS neighbors appear slightly better overall, in particular it returns two movies from the same director.

As can be seen from these examples, AdaDPALS generally returns better quality results compared to DPALS. If we compare both models to the non-private baseline (ALS), the overlap between AdaDPALS and ALS is generally larger then the overlap between DPALS and ALS. To quantify this statement, we generate, for each movie, the top-20 nearest neighbors according to each model, then compute the percentage overlap with ALS. The results are reported in Figure~\ref{fig:overlap}.

\begin{figure}[h!]
\centering
\includegraphics[width=.33\textwidth]{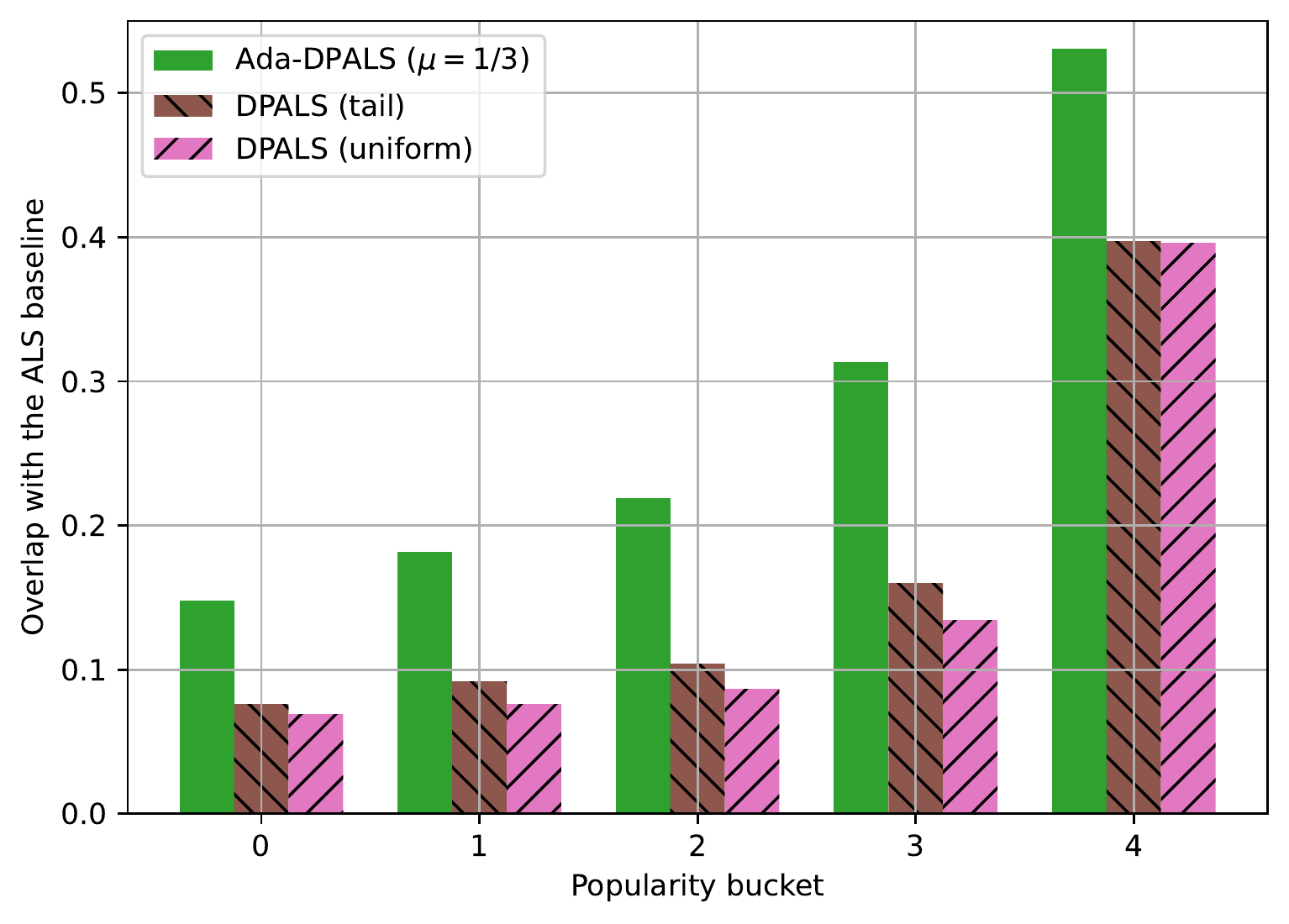}
\caption{Percentage overlap between the top-20 nearest neighbors according to private models ($\epsilon = 1$) with the top-20 nearest neighbors according to the non-private baseline (ALS). Each bucket contains an equal number of movies (ordered by increasing frequency).}
\label{fig:overlap}
\end{figure}

\begin{table}[h!]
\centering
\resizebox{\textwidth}{!}{%
\begin{tabular}{lllp{2.5cm}}
\toprule
Model & Movie title & Genres & Frequency\newline (in training set)\\
\midrule
\midrule
      &  \bf Shawshank Redemption, The (1994) &                  Crime|Drama &        47518 \\
\midrule
ALS
      &        Usual Suspects, The (1995) &       Crime|Mystery|Thriller &        33834 \\
      &  Silence of the Lambs, The (1991) &        Crime|Horror|Thriller &        42738 \\
      &               Pulp Fiction (1994) &  Comedy|Crime|Drama|Thriller &        44626 \\
      &           Schindler's List (1993) &                    Drama|War &        35334 \\
      &                  Apollo 13 (1995) &         Adventure|Drama|IMAX &        26192 \\
      &               Forrest Gump (1994) &     Comedy|Drama|Romance|War &        40422 \\
\midrule
DPALS
      &  Silence of the Lambs, The (1991) &        Crime|Horror|Thriller &        42738 \\
      &        Usual Suspects, The (1995) &       Crime|Mystery|Thriller &        33834 \\
      &           Schindler's List (1993) &                    Drama|War &        35334 \\
      &               Pulp Fiction (1994) &  Comedy|Crime|Drama|Thriller &        44626 \\
      &               Forrest Gump (1994) &     Comedy|Drama|Romance|War &        40422 \\
      &                 Braveheart (1995) &             Action|Drama|War &        32735 \\
\midrule
AdaDPALS
      &  Silence of the Lambs, The (1991) &        Crime|Horror|Thriller &        42738 \\
      &               Pulp Fiction (1994) &  Comedy|Crime|Drama|Thriller &        44626 \\
      &        Usual Suspects, The (1995) &       Crime|Mystery|Thriller &        33834 \\
      &               Forrest Gump (1994) &     Comedy|Drama|Romance|War &        40422 \\
      &           Schindler's List (1993) &                    Drama|War &        35334 \\
      &                 Braveheart (1995) &             Action|Drama|War &        32735 \\
\midrule
\midrule
      &                    \bf Pinocchio (1940) &            Animation|Children|Fantasy|Musical &         5120 \\
\midrule
ALS
      &  Snow White and the Seven Dwarfs (1937) &      Animation|Children|Drama|Fantasy|Musical &         7865 \\
      &                            Dumbo (1941) &              Animation|Children|Drama|Musical &         3580 \\
      &                       Cinderella (1950) &    Animation|Children|Fantasy|Musical|Romance &         3957 \\
      &                  Aristocats, The (1970) &                            Animation|Children &         2669 \\
      &                         Fantasia (1940) &            Animation|Children|Fantasy|Musical &         6135 \\
      &              Alice in Wonderland (1951) &  Adventure|Animation|Children|Fantasy|Musical &         3487 \\
\midrule
DPALS
      &  Snow White and the Seven Dwarfs (1937) &         Animation|Children|Drama|Fantasy|Musical &         7865 \\
      &                          Jumanji (1995) &                       Adventure|Children|Fantasy &         6203 \\
      &             Beauty and the Beast (1991) &  Animation|Children|Fantasy|Musical|Romance|IMAX &        16391 \\
      &                          Aladdin (1992) &      Adventure|Animation|Children|Comedy|Musical &        19912 \\
      &                        Assassins (1995) &                            Action|Crime|Thriller &         1146 \\
      &           Miracle on 34th Street (1947) &                                     Comedy|Drama &         2445 \\
\midrule
AdaDPALS
      &  Snow White and the Seven Dwarfs (1937) &  Animation|Children|Drama|Fantasy|Musical &         7865 \\
      &                         Fantasia (1940) &        Animation|Children|Fantasy|Musical &         6135 \\
      &                       Pocahontas (1995) &  Animation|Children|Drama|Musical|Romance &         2815 \\
      &          Sword in the Stone, The (1963) &        Animation|Children|Fantasy|Musical &         2217 \\
      &                            Dumbo (1941) &          Animation|Children|Drama|Musical &         3580 \\
      &                 Jungle Book, The (1994) &                Adventure|Children|Romance &         2765 \\
\midrule
\midrule
      & \bf       Harry Potter and the Half-Blood Prince (2009) &                  Adventure|Fantasy|Mystery|Romance|IMAX &         2176 \\
\midrule
ALS
      &  Harry Potter and the Deathly Hallows: Part 1 (2010) &                           Action|Adventure|Fantasy|IMAX &         2099 \\
      &     Harry Potter and the Order of the Phoenix (2007) &                            Adventure|Drama|Fantasy|IMAX &         2896 \\
      &  Harry Potter and the Deathly Hallows: Part 2 (2011) &             Action|Adventure|Drama|Fantasy|Mystery|IMAX &         2265 \\
      &                               Sherlock Holmes (2009) &                           Action|Crime|Mystery|Thriller &         2877 \\
      &           Harry Potter and the Goblet of Fire (2005) &                         Adventure|Fantasy|Thriller|IMAX &         4773 \\
      &                                       Tangled (2010) &  Animation|Children|Comedy|Fantasy|Musical|Romance|IMAX &         1259 \\
\midrule
DPALS
      &                          Animatrix, The (2003) &                           Action|Animation|Drama|Sci-Fi &         1216 \\
      &                             Ratatouille (2007) &                                Animation|Children|Drama &         4728 \\
      &                           Avengers, The (2012) &                            Action|Adventure|Sci-Fi|IMAX &         2770 \\
      &                                 Tangled (2010) &  Animation|Children|Comedy|Fantasy|Musical|Romance|IMAX &         1259 \\
      &                     Slumdog Millionaire (2008) &                                     Crime|Drama|Romance &         5415 \\
      &      Sherlock Holmes: A Game of Shadows (2011) &          Action|Adventure|Comedy|Crime|Mystery|Thriller &         1166 \\
\midrule
AdaDPALS
      &  Harry Potter and the Deathly Hallows: Part 2 (2011) &     Action|Adventure|Drama|Fantasy|Mystery|IMAX &         2265 \\
      &      Harry Potter and the Prisoner of Azkaban (2004) &                          Adventure|Fantasy|IMAX &         6433 \\
      &                                   Ratatouille (2007) &                        Animation|Children|Drama &         4728 \\
      &            Sherlock Holmes: A Game of Shadows (2011) &  Action|Adventure|Comedy|Crime|Mystery|Thriller &         1166 \\
      &     Harry Potter and the Order of the Phoenix (2007) &                    Adventure|Drama|Fantasy|IMAX &         2896 \\
      &                                        Avatar (2009) &                    Action|Adventure|Sci-Fi|IMAX &         4960 \\
\bottomrule
\end{tabular}
}
\caption{Nearest neighbors according to ALS (non-private), DPALS, and AdaDPALS ($\mu =1/3$).}
\label{tbl:neighbors}
\medskip
\medskip
\end{table}

\newgeometry{lmargin=1.9cm,rmargin=1.9cm}
\begin{table}[h]
\centering
\resizebox{\textwidth}{!}{%
\begin{tabular}{lllp{2.5cm}}
\toprule
Model & Movie title & Genres & Frequency\newline (in training set)\\
\midrule
\midrule
      &  \bf Nausicaä of the Valley of the Wind (Kaze no tani no Naushika) (1984) &            Adventure|Animation|Drama|Fantasy|Sci-Fi &         2151 \\
\midrule
ALS
      &             Laputa: Castle in the Sky (Tenkû no shiro Rapyuta) (1986) &  Action|Adventure|Animation|Children|Fantasy|Sci-Fi &         2227 \\
      &                          My Neighbor Totoro (Tonari no Totoro) (1988) &                    Animation|Children|Drama|Fantasy &         3593 \\
      &                    Kiki's Delivery Service (Majo no takkyûbin) (1989) &          Adventure|Animation|Children|Drama|Fantasy &         1421 \\
      &                    Porco Rosso (Crimson Pig) (Kurenai no buta) (1992) &          Adventure|Animation|Comedy|Fantasy|Romance &         1022 \\
      &                        Grave of the Fireflies (Hotaru no haka) (1988) &                                 Animation|Drama|War &         2026 \\
      &                    Howl's Moving Castle (Hauru no ugoku shiro) (2004) &                 Adventure|Animation|Fantasy|Romance &         3503 \\
\midrule
DPALS
      &                  Spirited Away (Sen to Chihiro no kamikakushi) (2001) &               Adventure|Animation|Fantasy &         9161 \\
      &                                                  Terminal, The (2004) &                      Comedy|Drama|Romance &         1963 \\
      &                                              Finding Neverland (2004) &                                     Drama &         3371 \\
      &                                                      Ring, The (2002) &                   Horror|Mystery|Thriller &         3535 \\
      &                                              Kill Bill: Vol. 1 (2003) &                     Action|Crime|Thriller &        12467 \\
      &                                      Man Who Wasn't There, The (2001) &                               Crime|Drama &         2157 \\
\midrule
AdaDPALS
      &                  Spirited Away (Sen to Chihiro no kamikakushi) (2001) &                         Adventure|Animation|Fantasy &         9161 \\
      &                          My Neighbor Totoro (Tonari no Totoro) (1988) &                    Animation|Children|Drama|Fantasy &         3593 \\
      &                              Princess Mononoke (Mononoke-hime) (1997) &            Action|Adventure|Animation|Drama|Fantasy &         6101 \\
      &                            Ghost in the Shell (Kôkaku kidôtai) (1995) &                                    Animation|Sci-Fi &         4070 \\
      &                    Howl's Moving Castle (Hauru no ugoku shiro) (2004) &                 Adventure|Animation|Fantasy|Romance &         3503 \\
      &             Laputa: Castle in the Sky (Tenkû no shiro Rapyuta) (1986) &  Action|Adventure|Animation|Children|Fantasy|Sci-Fi &         2227 \\
\midrule
\midrule
      & \bf              Interstellar (2014) &              Sci-Fi|IMAX &         1048 \\
\midrule
ALS
      &                  Gone Girl (2014) &           Drama|Thriller &          847 \\
      &           Edge of Tomorrow (2014) &       Action|Sci-Fi|IMAX &          881 \\
      &                    Gravity (2013) &       Action|Sci-Fi|IMAX &         1248 \\
      &    Guardians of the Galaxy (2014) &  Action|Adventure|Sci-Fi &         1072 \\
      &   Wolf of Wall Street, The (2013) &       Comedy|Crime|Drama &         1060 \\
      &  Grand Budapest Hotel, The (2014) &             Comedy|Drama &         1339 \\
\midrule
DPALS
      &    Day After Tomorrow, The (2004) &  Action|Adventure|Drama|Sci-Fi|Thriller &         1233 \\
      &                      Alive (1993) &                                   Drama &          994 \\
      &                 Spellbound (1945) &                Mystery|Romance|Thriller &         1365 \\
      &                Source Code (2011) &    Action|Drama|Mystery|Sci-Fi|Thriller &         1595 \\
      &  Fast and the Furious, The (2001) &                   Action|Crime|Thriller &         1406 \\
      &             Ice Storm, The (1997) &                                   Drama &         3020 \\
\midrule
AdaDPALS
      &        Django Unchained (2012) &                             Action|Drama|Western &         2692 \\
      &               Inception (2010) &  Action|Crime|Drama|Mystery|Sci-Fi|Thriller|IMAX &         9147 \\
      &  Dark Knight Rises, The (2012) &                      Action|Adventure|Crime|IMAX &         2848 \\
      &            Intouchables (2011) &                                     Comedy|Drama &         1803 \\
      &             Source Code (2011) &             Action|Drama|Mystery|Sci-Fi|Thriller &         1595 \\
      &                  Avatar (2009) &                     Action|Adventure|Sci-Fi|IMAX &         4960 \\
\bottomrule
\end{tabular}
}
\caption{Nearest neighbors according to ALS (non-private), DPALS, and AdaDPALS ($\mu =1/3$).}
\label{tbl:neighbors2}
\medskip
\medskip
\end{table}

\end{document}